\newcommand{\declarecolor}[2]{\definecolor{#1}{RGB}{#2}\expandafter\newcommand\csname #1\endcsname[1]{\textcolor{#1}{##1}}}
\newtheorem{theorem}{Theorem}
\newtheorem*{theorem*}{Theorem}
\newtheorem{lemma}{Lemma}
\newtheorem{assumption}{Assumption}
\newtheorem{claim}{Claim}
\newtheorem{proposition}{Proposition}
\newtheorem{remark}{Remark}
\DeclareMathOperator*{\argmin}{argmin}
\DeclareMathOperator*{\argmax}{argmax}
\DeclareMathOperator{\DualGap}{DualityGap}
\DeclareMathOperator{\KL}{KL}
\newcommand{\notshow}[1]{{}}
\newcommand{\AutoAdjust}[3]{{ \mathchoice{ \left #1 #2  \right #3}{#1 #2 #3}{#1 #2 #3}{#1 #2 #3} }}
\newcommand{\Xcomment}[1]{{}}
\newcommand{\InParentheses}[1]{\AutoAdjust{(}{#1}{)}}
\newcommand{\InBrackets}[1]{\AutoAdjust{[}{#1}{]}}
\newcommand{\InAngles}[1]{\AutoAdjust{\langle}{#1}{\rangle}}
\newcommand{\InNorms}[1]{\AutoAdjust{\|}{#1}{\|}}
\renewcommand{\part}[2]{\frac{\partial #1}{\partial #2}}
\newcommand{\X}{\mathcal{X}}
\newcommand{\Y}{\mathcal{Y}}
\newcommand{\hz}{\widehat{z}}
\def\+#1{\mathcal{#1}}
\def\-#1{\mathbb{#1}}
\title{On Separation Between Best-Iterate, Random-Iterate, and Last-Iterate Convergence of Learning in Games\thanks{Authors are listed in alphabetical order.}}
\author[1]{Yang Cai}
\author[2]{Gabriele Farina}
\author[3]{Julien Grand-Cl{\'e}ment}
\author[4]{Christian Kroer}
\author[5]{Chung-Wei Lee}
\author[6]{Haipeng Luo}
\author[1]{Weiqiang Zheng}
\affil[1]{Yale University, \texttt{\{yang.cai,weiqiang.zheng\}@yale.edu}}
\affil[2]{MIT, \texttt{gfarina@mit.edu}}
\affil[3]{HEC Paris, \texttt{grand-clement@hec.fr}}
\affil[4]{Columbia University, \texttt{christian.kroer@columbia.edu}}
\affil[5]{\texttt{leechung@usc.edu}}
\affil[6]{University of Southern California, \texttt{haipengl@usc.edu}}
\begin{document}
\maketitle

\begin{abstract}
Non-ergodic convergence of learning dynamics in games is widely studied recently because of its importance in both theory and practice. 
Recent work~\citep{cai2024fast} showed that a broad class of learning dynamics, including Optimistic Multiplicative Weights Update (OMWU), can exhibit arbitrarily slow last-iterate convergence even in simple $2 \times 2$ matrix games, despite many of these dynamics being known to converge asymptotically in the last iterate. It remains unclear, however, whether these algorithms achieve fast non-ergodic convergence under weaker criteria, such as best-iterate convergence.
We show that for $2\times 2$ matrix games, OMWU achieves an $O(T^{-1/6})$ best-iterate convergence rate, in stark contrast to its slow last-iterate convergence in the same class of games. 
Furthermore, we establish a lower bound showing that OMWU does not achieve any polynomial \emph{random-iterate} convergence rate, measured by the expected duality gaps across all iterates. This result challenges the conventional wisdom that random-iterate convergence is essentially equivalent to best-iterate convergence, with the former often used as a proxy for establishing the latter.
Our analysis uncovers a new connection to dynamic regret and presents a novel two-phase approach to best-iterate convergence, which could be of independent interest.
\end{abstract}

\newpage
\section{Introduction}

No-regret learning dynamics provide one of the premier ways of computing equilibria in multiplayer interactions~\citep{cesa2006prediction}. They have been successfully deployed at scale across a wide variety of games and desired notions of equilibrium, and have been an integral part of superhuman AI for poker~\citep{brown2018superhuman,brown2019superhuman,moravvcik2017deepstack}, human-level AI for Stratego~\citep{perolat2022mastering} and Diplomacy~\citep{meta2022human}, as well as other uses such as alignment of large language models~\citep{munos2023nash,jacob2023consensus}.

In general, learning dynamics guarantee convergence to equilibrium in an \emph{ergodic} sense. In other words, it is not the actual behavior of the dynamics that converges to equilibrium, but rather their \emph{average} behavior.
Overcoming this limitation and establishing convergence \emph{in iterates} in the two-player zero-sum setting has been an important direction of research in the past decade. The reasons for this endeavor are multifaceted. First, dynamics that exhibit iterate convergence properties ensure that the learning agents will eventually play according to an equilibrium strategy. This is a desirable requirement for deploying learning in an online setting, where it is important that agents eventually sample actions from an optimal strategy. Second, algorithms with converging iterates rule out undesirable phenomena such as recurrence and even formally chaotic behavior~\citep{sato2002chaos,mertikopoulos2018cycles}. Finally, the construction of learning algorithms with iterate convergence guarantees is important for applications in nonconvex optimization where averaging the iterates is not possible~\citep{daskalakis2017training}.

In principle, at least three notions of iterate convergence can be identified, listed next from strongest to weakest.
\begin{itemize}
    \item \emph{Last-iterate} convergence, meaning that the learner's strategies approach the set of equilibrium strategies over time.
    \item \emph{Random-iterate} convergence, meaning that by sampling an iteration uniformly at random, the learner's strategy is close to equilibrium.
    \item \emph{Best-iterate} convergence, meaning that there exists a subsequence of strategies used by the learner that converges to the set of equilibrium strategies.
\end{itemize}
In these definitions, the proximity of the strategies to equilibrium is measured by the duality gap.
Orthogonal to the \emph{mode} of convergence above, is the \emph{speed} (\emph{i.e.}, non-asymptotic rate) of convergence, particularly regarding the dependence on possible \emph{condition numbers} of the payoff matrix of the two-player zero-sum game. In this regard, we identify two types of results.
\begin{itemize}
    \item \emph{Uniform} convergence results give a uniform upper bound on the convergence rate that applies to \emph{any} game instance. 
    \item \emph{Universal} convergence results also apply to \emph{any} game instance, but it can include a possibly arbitrarily bad dependence on some form of condition number of the game---for example, the smallest nonzero probability used in equilibrium.
\end{itemize}
Examples of universal and uniform convergence rates are given in Table \ref{tab:convergence rates}.
Clearly, universal convergence guarantees for learning dynamics are easier to establish than uniform convergence guarantees. In fact, most modern optimistic learning algorithms are known to enjoy some form of universal last-iterate convergence, albeit with a dependence on some form of condition numbers of the game, which can typically be arbitrarily large even for $2\times 2$ games~\citep{tseng1995linear,mordukhovich2010applying,wei2021linear}. The situation regarding uniform convergence results is significantly less understood. 

So far in the literature, there has never been an incentive to treat the three notions of \emph{uniform} iterate convergence separately. For example, while $O(T^{-\frac{1}{2}})$ uniform random-iterate convergence guarantees for Optimistic Gradient Descent Ascent (OGDA)~\citep{popov1980modification} have appeared earlier \citep{wei2021linear,anagnostides2022last}, they have been eventually strengthened to hold in the last iterate sense \citep{cai2022finite, gorbunov2022last}. For OGDA, the uniform convergence rates for last-, random-, and best-iterate are polynomial and essentially identical. Furthermore, prior to our work, no algorithm was known to exhibit a separation between these three convergence modes.

However, cracks in the above state of affairs have started to emerge recently. In an unexpected turn of events, a recent paper by \citet{cai2024fast} has shown that optimistic multiplicative weights update (OMWU)---a premier no-regret algorithm with otherwise best-in-class theoretical guarantees \citep{rakhlin2013optimization,daskalakis2021near-optimal,farina22:kernelized}---does not enjoy any {\em uniform} last-iterate convergence guarantees, despite the existing known {\em universal} last-iterate convergence rates~\citep{wei2021linear}. Crucially, that result only applies to uniform last-iterate convergence, and does not preclude the possibility that OMWU might still enjoy uniform random-iterate or best-iterate convergence guarantees.
This suggests the following question:
\begin{center}
    \itshape
    Does OMWU enjoy uniform random- or best-iterate convergence, despite the recently established negative result regarding its lack of uniform last-iterate convergence?
\end{center}
Answering this question is the main contribution of this paper.

\begin{table}[t]
\centering
\renewcommand{\arraystretch}{1.4}
\scalebox{.97}{\begin{tabular}{@{}p{2.8cm}|p{1.8cm}||p{1.8cm}|p{2.30cm}}

\bf \raisebox{-4mm}{Convergence}    & \bf OGDA  

(uniform) & \bf OMWU  

(uniform) & \bf OMWU 

(universal)\\ \toprule
Last iterate  & \multirow{3}{*}{\raisebox{-5mm}{\!\!$O(T^{-1/2})$}} & $\Omega(1)$  & 
$O\!\left(\exp\bigl(-\tfrac{T}{C}\bigr)C\right)^\dagger$ 
\\ \hhline{-|~|-|-} 
\vskip -3.5mm Random iterate &  & \cellcolor{black!10}\vskip-2mm\raisebox{3mm}{$\displaystyle\Omega\Big(\frac{1}{\log T}\Big)$} & \cellcolor{black!10}\raisebox{-3mm}{$O\!\left(T^{-\frac{1}{4}}\delta^{-\frac{1}{2}}\right)$} \\ 
\hhline{-|~|-|-} 
Best iterate &   & \multicolumn{2}{c}{\cellcolor{black!10}{$O(T^{-\frac{1}{6}})^{\ddagger}$}} \\ \bottomrule
\end{tabular}}
\vspace{-2mm}
\caption{\emph{Uniform} and {\em universal} convergence rates of OGDA and OMWU in zero-sum games with a fully mixed equilibrium.  Here $\delta > 0$ is the minimum probability in the Nash equilibrium. {\em Universal} convergence rates allow for dependence on $\delta$ while {\em uniform} convergence rates are $\delta$-independent (the constants only depend on the dimensions of the problems and the norm of the payoff matrix). Our new results are highlighted in gray. $^\dagger$: $C:= \Omega(\exp(\frac{1}{\delta}))$.
$\ddagger$: This upper bound only holds for $2 \times 2$ games.
}
\label{tab:convergence rates}
\end{table}

\subsection*{Contributions and Techniques}

In this paper, we show a \emph{separation} between the best-iterate and the random-iterate in terms of uniform convergence for OMWU in two-player zero-sum games, which also implies a separation between best- and last-iterate uniform convergence in the same setting. The separation is established by (i) a new lower bound for the uniform random-iterate convergence and (ii) a new upper bound for the uniform best-iterate convergences. Our results are summarized in \Cref{tab:convergence rates}.

\paragraph{Lower bounds} On the negative side, we show that OMWU does \emph{not} enjoy a polynomial uniform random-iterate convergence guarantee (\Cref{thm:random-lower-OMWU}). Our analysis also extends to the broader family of Optimistic Follow-The-Regularized-Leader (OFTRL) algorithms with the most popular regularizers and results in new lower bounds (\Cref{thm:random-lower-all-regularizers}). These lower bound results hold even for $2 \times 2$ games with a fully-mixed Nash equilibrium, and we give a numerical illustration in Figure \ref{fig:oftrl social regret}. 

\paragraph{Upper bounds} On the positive side, we prove that OMWU has an $O(T^{-\frac{1}{6}})$ uniform best-iterate convergence rate (\Cref{thm:best iterate OMWU}) for $2 \times 2$ games with a fully-mixed Nash equilibrium. We note that for the same class of games, OMWU has no uniform last-iterate convergence~\citep{cai2024fast} and no uniform polynomial random-iterate convergence, as we discussed above.
This result has some important consequences:
\begin{itemize}
    \item It partially counters the negative narrative on OMWU's convergence properties from \citep{cai2024fast}, by offering a positive, if slightly weaker, result that a uniform polynomial best-iterate convergence rate is possible. Extending our positive result beyond the 2-by-2 case is an interesting future direction.
    \item It shows, for the first time and on one of the most important algorithms in online learning, that uniform best-, random-, and last-iterate convergence properties need not go hand-in-hand, in contrast to all existing results in the literature, and thus that different techniques are necessary to study random-iterate and best-iterate convergence.
\end{itemize}
\paragraph{Techniques} As mentioned above, our positive result on the uniform polynomial best-iterate convergence rate of OMWU does not follow the common approach of showing uniform polynomial random-iterate convergence since our negative result precludes the latter. 
To sidestep this obstacle, we develop a novel \emph{two-phase approach} that we believe might be of independent interest for the study of best-iterate convergence properties beyond OMWU. 

In the global phase, we establish a global \emph{universal} random-iterate convergence rate $O(T^{-\frac{1}{4}}\delta^{-\frac{1}{2}})$ that has dependence on the minimum probability $\delta > 0$ among actions in the support of Nash equilibrium (\Cref{thm:OMWU dynamic}). To prove the result, we leverage the connection between random-iterate convergence, dynamic regret, and interval regret, which is new for proving random-iterate convergence. We remark that this result also holds for the general $d_1  \times d_2$ case and provides an exponential improvement to the best known bound on the last-iterate convergence rate of 
$O(\exp(\frac{1}{\delta})\cdot(1+\exp(-\frac{1}{\delta}))^{-T})$~\citep{wei2021linear}
in the dependence on the condition number $\delta$.

In the initial phase, we show that OMWU has fast \emph{uniform} convergence to one iterate with duality gap $O(\delta)$ (\Cref{thm:OMWU-initial}). Combining results in the two phases and the definition of the best-iterate convergence, we get a \emph{uniform} $\min\{\delta, T^{-\frac{1}{4}}\delta^{-\frac{1}{2}}\} = T^{-\frac{1}{6}}$ best-iterate convergence rate that is independent of $\delta$.

\section{Preliminaries}
    Let $\Delta^d \subseteq\-R^d$ be the $d$-dimension probability simplex. For a strictly convex regularizer $R: \+X \rightarrow \-R$, we denote its \emph{Bregman divergence} as $
        D_R(x,x') = R(x) - R(x') - \InAngles{\nabla R(x') , x - x'}.$

    We study online learning dynamics in a two-player zero-sum matrix game $\min_{x \in \Delta^{d_1}}\max_{\Delta^{d_2}} x^\top Ay$ with loss matrix $A \in [0,1]^{d_1 \times d_2}$. In each iteration $t \ge 1$,  the $x$-player chooses a mixed strategy $x^t \in \+X = \Delta^{d_1}$ and the $y$-player chooses a mixed strategy $y^t \in \+Y = \Delta^{d_2}$. Then the $x$-player receives loss vector $\ell^t_x = Ay^t$ while the $y$-player receives loss vector $\ell^t_y = -A^\top x^t$. The goal is convergence to a \emph{Nash equilibrium} $
    (x^*, y^*)$ where $ x^*\in \argmin_{x \in \+X}\max_{y \in \+Y} x^\top A y $ and $ y^*\in \argmax_{y \in \+Y}\min_{x \in \+X} x^\top A y $.
    A Nash equilibrium $(x^*, y^*)$ is \emph{fully mixed} if $x^*$ and $y^*$ both have full support.
    The proximity of a strategy profile $(x, y)$ to Nash equilibrium is measured by its \emph{duality gap}:
    \begin{align*}
        \DualGap(x,y) = \max_{y' \in \+Y} x^\top A y' - \min_{x' \in \+X} x'^\top Ay .
    \end{align*}
    The duality gap $\DualGap(x,y)$ is nonnegative and equals zero if and only if $(x,y)$ is a Nash equilibrium of the game $A$. 

    \paragraph{Online learning dynamics} We denote by $L_x^t = \sum_{k=1}^t \ell^k_x$ and $L^t_y = \sum_{k=1}^t \ell^k_y$ the cumulative loss vectors. The update rule of the Optimistic Follow-the-Regularized-Leader (OFTRL) algorithm~\citep{syrgkanis2015fast} with regularizer $R$ and step size $\eta > 0$ is: initialize $(x^1, y^1)$ both as the uniform distribution,\footnote{Throughout the paper, we assume all the algorithms are initialized with the uniform distribution.} then for each $t \ge 2$,
    \begin{equation}
    \label{OFTRL}
    \tag{OFTRL}
    \begin{aligned}
        x^t & = \argmin_{x \in \X} \left\{  \InAngles{x,  L^{t-1}_x + \ell^{t-1}_x } + \frac{1}{\eta} R(x)\right\}, \\
        y^t & = \argmin_{y \in \Y} \left\{  \InAngles{y, L^{t-1}_y + \ell^{t-1}_y } + \frac{1}{\eta} R(y)\right\}.
    \end{aligned}
    \end{equation}
    Commonly-studied regularizers include the following.
    \begin{itemize}
    \item Negative entropy, $R(x) \coloneqq \sum_{i=1}^{d}x[i]\log x[i]$. The resulting OFTRL algorithm is OMWU. 
    \item Squared Euclidean norm, $R(x) \coloneqq\frac{1}{2}\sum_{i=1}^dx[i]^2$.
    \item Log barrier, $R(x)\coloneqq\sum_{i=1}^d-\log (x[i])$, also called the ``log regularizer''.
    \item Negative Tsallis entropy family of regularizers, $R(x) \coloneqq ({1 - \sum_{i=1}^d (x[i])^\beta})/({1-\beta})$, parameterized by $\beta \in (0,1)$.
\end{itemize}

    Another popular family of online learning algorithms is the Optimistic Online Mirror Descent (OOMD) algorithm~\citep{rakhlin2013optimization}. We introduce some notations to simplify the presentation. We let $z = (x, y) \in \+Z := \Delta^{d_1} \times \Delta^{d_2}$ and define the gradient operator $F(z^t) = (\ell^t_x, \ell^t_y) = (Ay^t, -A^\top x^t)$. We also let $D_R(z,z') = D_R(x, x') + D_R(y,y')$. The update rule of OOMD is to initialize $z^1 = \hz^1$ and for all $t \ge 2$
    \begin{equation}
    \label{OOMD}
    \tag{OOMD}
    \begin{aligned}
        \hz^t &= \argmin_{z \in \+Z} \{ \eta \InAngles{z, F(z^{t-1})} + D_R(z, \hz^{t-1}) \} \\ 
        z^t &= \argmin_{z \in \+Z} \{ \eta \InAngles{z, F(z^{t-1})} + D_R(z, \hz^t) \} 
    \end{aligned}
    \end{equation}
    We remark that when $R(x) = \frac{1}{2}\InNorms{x}^2$, the resulting OOMD algorithm is also called the Optimistic Gradient Descent Ascent (OGDA) algorithm, and is different from the OFTRL algorithm instantiated with the same regularizer. However, for Legendre regularizers, including the entropy regularizer, the log regularizer, and the family of Tsallis entropy regularizers, their OFTRL and OOMD versions coincide in the sense that the iterates $\{z^t = (x^t, y^t)\}$ produced by OFTRL and OOMD are the same.

    This paper's main focus is the Optimistic Multiplicative Weights Update (OMWU) algorithm, though we also give several results for the broader class of OFTRL algorithms. OMWU is OFTRL/OOMD instantiated with the negative entropy regularizer $R = \sum_{i=1}^dx[i] \log x[i]$. OMWU admits the following closed-form update: 
    \begin{equation*}
    \label{OMWU}
    \begin{aligned}
        x^t[i] & \propto x^{t-1}[i] \cdot \exp\InParentheses{ -\eta L^{t-1}_x[i] - \eta \ell^{t-1}_x[i] }, i\in[d_1] \\
        y^t[j] & \propto y^{t-1}[j] \cdot \exp\InParentheses{ -\eta L^{t-1}_y[j] - \eta \ell^{t-1}_y[j] }, j\in[d_2].
    \end{aligned}
    \end{equation*}
    
    \paragraph{Notions of convergence} We focus on three types of convergence rates. For zero-sum games $A\in [0,1]^{d_1 \times d_2}$, we say an algorithm has a uniform last-iterate, random-iterate, or best-iterate convergence rate $O(f(T))$ (we omit dependence on $d_1,d_2$ here) if there exists a constant $c > 0$ such that for any game, any time $T \geq 1$, we have
    \begin{itemize}
        \item Last-iterate: $\DualGap(z^T) \le cf(T)$
        \item Random-iterate: $\mathbb{E}_{t \sim \mathrm{Uni}[1,T]}[\DualGap(z^t)] \le cf(T)$ 
        \item Best-iterate: $\min_{t \in[1,T]}[\DualGap(z^t)] \le cf(T)$
    \end{itemize}
    for $z^T:=(x^T, y^T).$
    By definition, a last-iterate convergence rate upper bounds the duality gap for every iterate $t \in [1, T]$; a random-iterate convergence rate upper bounds the average duality gap during time $[1, T]$, which is also the average \emph{social dynamic regret} (see \Cref{prop:dynamic regret->best-iterate}); and a best-iterate convergence rate upper bounds the smallest duality gap in time $[1, T]$. Clearly, last-iterate convergence is stronger than random-iterate convergence, and random-iterate convergence is stronger than best-iterate convergence. Yet, for OGDA, these three types of convergence admit similar rates (see Table \ref{tab:convergence rates}). Establishing a separation between these types of convergence for OMWU is the main focus of our paper.

\section{Lower Bounds for Random-Iterate Convergence}\label{sec:random-iterate lower bound}
In this section, we establish lower bounds for the random-iterate convergence rate of OFTRL dynamics in two-player zero-sum games. For OMWU, we present an impossibility result for a uniform polynomial convergence rate by establishing an $\Omega(\frac{1}{\ln T})$ lower bound, even in a simple class of $2 \times 2$ matrix games. 
\begin{theorem}\label{thm:random-lower-OMWU}
    For two-player zero-sum games with loss matrix $A\in [0,1]^{2 \times 2}$, the uniform random-iterate convergence rate of  OMWU with any constant step size $\eta\le \frac{1}{2}$ is $\Omega(\frac{1}{\log T})$.
    This result continues to hold if we restrict the space of loss matrices to games with fully-mixed Nash equilibria.
\end{theorem}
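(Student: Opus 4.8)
The plan is to prove the lower bound by exhibiting, for every horizon $T$, a single $2\times 2$ zero-sum game $A=A_\delta$ with a fully-mixed Nash equilibrium whose smallest probability is $\delta=\Theta(1/\log T)$, on which \emph{every} OMWU iterate $z^t$ with $t\le T$ satisfies $\DualGap(z^t)\ge c\,\delta$ for a constant $c>0$ (depending only on $\eta$). Given this, $\mathbb{E}_{t\sim\mathrm{Uni}[1,T]}[\DualGap(z^t)]=\frac1T\sum_{t=1}^{T}\DualGap(z^t)\ge c\,\delta=\Omega(1/\log T)$, so no uniform rate faster than $\Theta(1/\log T)$ can hold, which is exactly the claim; the fully-mixed restriction is automatic since the $A_\delta$ are built with a fully-mixed equilibrium. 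The games $A_\delta$ will be (a variant of) the hard instances of \citet{cai2024fast} that already defeat uniform last-iterate convergence; the new ingredient is to show that the \emph{whole} trajectory, not just its last point, stays $\Omega(\delta)$-far from equilibrium for a number of iterations exponential in $1/\delta$.

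I would first reduce the $2\times2$ dynamics to the log-odds coordinates $u^t=\log\frac{x^t[1]}{1-x^t[1]}$, $v^t=\log\frac{y^t[1]}{1-y^t[1]}$, in which OMWU becomes the two-dimensional recursion $u^{t+1}=u^t-\eta(2g^t_x-g^{t-1}_x)$, $v^{t+1}=v^t-\eta(2g^t_y-g^{t-1}_y)$, where $g^t_x$ is affine in $y^t[1]$, $g^t_y$ is affine in $x^t[1]$, and each vanishes exactly at the corresponding equilibrium coordinate. Two auxiliary facts are needed: (i) for a fixed $2\times2$ game with a fully-mixed equilibrium, $\DualGap(x,y)\asymp\max\{\,|x[1]-x^*[1]|,\ |y[1]-y^*[1]|\,\}$ up to game-dependent constants, so ``$\DualGap(z^t)\ge c\delta$'' is equivalent to ``at least one coordinate of $z^t$ is $\Omega(\delta)$ away from its equilibrium value''; and (ii) from the uniform initialization, within $\poly(1/\delta)$ steps the trajectory enters a regime in which one coordinate, say $x^t[1]$, has been driven down to $e^{-\Theta(1/\delta)}$ and hence sits at distance $\approx\delta$ from $x^*[1]=\delta$, while the other coordinate has overshot its own equilibrium value — so the gap never drops below $\Omega(\delta)$ on the way there either, the two coordinates never being simultaneously close to equilibrium.

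The crux — and the step I expect to be the main obstacle — is showing that the trajectory stays trapped in this bad regime for $e^{\Omega(1/\delta)}$ iterations. For $x^t[1]$ to recover from $e^{-\Theta(1/\delta)}$ back up to $\Theta(\delta)$, the coordinate $u^t$ must climb by $\Theta(1/\delta)$, yet its per-step increment is $\Theta(\eta)$ times the signed deviation of the \emph{other} coordinate $y^t[1]$ from its equilibrium value. The plan is to maintain, via a forward-invariance (Lyapunov) argument on a suitable region of $(u,v)$-space, the invariant that throughout the trap $y^t[1]$ stays within $e^{-\Omega(1/\delta)}$ of the critical value that makes this pull negligible; then $u^t$ moves by at most $e^{-\Omega(1/\delta)}$ per step and needs $\ge e^{\Omega(1/\delta)}$ steps to erase its $\Theta(1/\delta)$ deficit. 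The delicate part is that $u^t$ and $v^t$ evolve on very different timescales, and one must rule out that they ``cooperate'' to escape quickly; carrying this out requires careful control of the optimistic correction terms $g^{t-1}_x,g^{t-1}_y$ and must hold over the whole step-size range $\eta\le\tfrac12$ (with the constants hidden in the $\Omega(\cdot)$'s allowed to depend on $\eta$ and on the fixed game parameters).

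Finally, since the trapped phase lasts $T_\delta:=e^{c'/\delta}$ steps with $\DualGap(z^t)\ge c\delta$ throughout, one picks $\delta:=c'/\log T$ so that time $T=T_\delta$ lies inside the trapped phase and concludes $\mathbb{E}_{t\sim\mathrm{Uni}[1,T]}[\DualGap(z^t)]\ge c\delta=\Omega(1/\log T)$, as claimed. The same scheme, with instance-specific computations, is what one would push through for the broader OFTRL family; only the quantitative ``trap stickiness'' estimate of the third paragraph is regularizer-dependent.
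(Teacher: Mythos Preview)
Your central claim---that on a suitable game $A_\delta$ \emph{every} iterate $t\le T=e^{c'/\delta}$ satisfies $\DualGap(z^t)\ge c\,\delta$---is false, and in fact directly contradicts the paper's own best-iterate upper bound. \Cref{thm:best iterate OMWU} establishes a \emph{uniform} $O(T^{-1/6})$ best-iterate rate over all $2\times 2$ games with a fully-mixed Nash equilibrium, so on your instance $A_\delta$ (which is fully mixed) there must exist some $t^*\le T$ with $\DualGap(z^{t^*})=O(T^{-1/6})$. But your claim forces $\min_{t\le T}\DualGap(z^t)\ge c\,\delta=\Theta(1/\log T)$, and $T^{-1/6}\ll 1/\log T$ for large $T$. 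Concretely, the paper's initial-phase analysis (\Cref{thm:OMWU-initial}) shows the duality gap drops to $O(\delta)$---and in fact through values far smaller than $c\delta$ on the way---within only $\mathrm{poly}(1/\delta)$ steps, not $e^{\Omega(1/\delta)}$; the trajectory then overshoots. So the ``trap'' picture in which both coordinates stay $\Omega(\delta)$-far from equilibrium for exponentially long cannot be maintained, and the forward-invariance argument you sketch would not close.

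The paper's argument is qualitatively different and does not attempt to lower-bound the gap uniformly over $[1,T]$. Instead, building on the trajectory analysis of \citet{cai2024fast} on the same instances $A_\delta$, it isolates a \emph{window} of $\Theta(1/\delta)$ consecutive iterations each with \emph{constant} duality gap $\Omega(1)$ (not $\Omega(\delta)$), and shows this window ends before time $T=O\bigl(\tfrac{1}{\delta}\,f_R(\delta)\bigr)$, where for entropy $f_R(\delta)\le\log(1/\delta)$. The average gap at that time is then at least $\Theta(1/\delta)\big/O(\tfrac{\log(1/\delta)}{\delta})=\Omega(1/\log(1/\delta))=\Omega(1/\log T)$. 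The point is precisely that the random-iterate lower bound is driven by a \emph{short} burst of \emph{large} gaps, while many other iterates may (and do) have gap $o(\delta)$---which is exactly why best-iterate and random-iterate separate for OMWU.
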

\begin{remark}
    We note that the lower bound holds for general $d_1 \times d_2$ games by a reduction given in \citet[Theorem 3]{cai2024fast}. \Cref{thm:random-lower-OMWU} has the following  implications:
    \begin{itemize}
        \item[1.] For any $T \ge 1$, we can find a game instance $A \in [0, 1]^{2 \times 2}$ such that OMWU on that game has nearly linear in $T$ social dynamic regret: $\sum_{t=1}^T \DualGap(x^t, y^t) = \Omega(\frac{T}{\log T})$.
        \item[2.] For any $\epsilon > 0$, we can find a game instance $A \in [0, 1]^{2 \times 2}$ such that OMWU on that game suffers $\frac{1}{T}\sum_{t=1}^T \DualGap(x^t, y^t)\ge \varepsilon$ even when $T = \Omega\left(\exp(\frac{1}{\varepsilon})\right)$.
    \end{itemize}
\end{remark}

Next we present lower bounds on the random-iterate convergence rate for OFTRL instantiated with the squared Euclidean norm, the log barrier, and the family of Tsallis entropy regularizers. 

\begin{theorem}\label{thm:random-lower-all-regularizers}
     For two-player zero-sum games $[0,1]^{2 \times 2}$, the following lower bounds hold for the uniform random-iterate convergence rate of OFTRL with constant step size:
     \begin{itemize}
        \item[1.] $\Omega(1)$ for the squared Euclidean norm regularizer
        \item[2.] $\Omega(T^{-\frac{1-\beta}{2-\beta}})$ for the Tsallis entropy regularizer parameterized by $\beta \in (0,1)$
        \item[3.] $\Omega(T^{-\frac{1}{2}})$ for the log regularizer
     \end{itemize}
     These results continue to hold if we restrict the space of loss matrices to games with fully-mixed Nash equilibria.
\end{theorem}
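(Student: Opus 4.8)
All three bounds are lower bounds, and I would establish them by explicit game constructions in the spirit of \Cref{thm:random-lower-OMWU}: for each horizon $T$, exhibit a $2\times 2$ game on which OFTRL with the given regularizer has average duality gap $\Omega(f(T))$, which rules out every uniform rate $o(f(T))$. For the squared Euclidean norm a single fixed game will do. For the log barrier and the Tsallis-$\beta$ family I would use a family of games $\{A_\delta\}_{\delta>0}$ (a variant of the construction behind \Cref{thm:random-lower-OMWU}), each with a fully mixed Nash equilibrium whose smallest probability is $\Theta(\delta)$, and tune $\delta=\delta(T)$ at the end.

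For the log barrier and Tsallis-$\beta$ --- which are Legendre, so the iterates stay interior --- I would reduce the $2\times 2$ dynamics to a scalar recursion: writing $x^t=(p^t,1-p^t)$ and $\phi(p):=R(p,1-p)$, the first-order condition for \eqref{OFTRL} gives $\phi'(p^t)=-\eta\,(G^{t-1}+g^{t-1})$, where $g^t:=(Ay^t)_1-(Ay^t)_2\in[-1,1]$ and $G^{t-1}=\sum_{k\le t-1}g^k$ (symmetrically for $y$); hence $\phi'(p^t)$ changes by $O(\eta)$ per step. On $A_\delta$ the equilibrium sits at $p^*=\Theta(\delta)$, so $\DualGap(z^t)=\Omega(|p^t-p^*|)$, and two effects keep the iterates $\Omega(\delta)$-far from equilibrium for a long time: near equilibrium the driving gap $g^t$ is only $\Theta(\text{distance to equilibrium})$, so $G^t$ --- hence $\phi'(p^t)$ --- changes slowly and non-monotonically; and $\phi'$ blows up near the boundary at a regularizer-specific rate, with $|\phi'(\delta)|=\Theta(\delta^{-1})$ for the log barrier and $\Theta(\delta^{-(1-\beta)})$ for Tsallis-$\beta$ (the borderline $\Theta(\log\tfrac1\delta)$ for negative entropy being the degenerate $\beta\to1$ case). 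Making this quantitative via an invariant-region/potential argument would yield the core lemma: on $A_\delta$, $\frac1T\sum_{t\le T}\DualGap(z^t)=\Omega(\delta)$ for every $T$ up to a threshold $T_\delta$, with $T_\delta=\Theta(\delta^{-2})$ for the log barrier and $T_\delta=\Theta_\beta(\delta^{-(2-\beta)/(1-\beta)})$ for Tsallis-$\beta$. Given $T$, picking $\delta$ with $T_\delta\ge T$ --- that is, $\delta=\Theta(T^{-1/2})$, resp.\ $\delta=\Theta_\beta(T^{-(1-\beta)/(2-\beta)})$ --- yields $\frac1T\sum_{t\le T}\DualGap(z^t)=\Omega(\delta)$, exactly the claimed rate; and since $p^*=\Theta(\delta)>0$, $A_\delta$ is fully mixed.

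The Euclidean case is qualitatively different and needs no condition number. I would take a fixed mildly asymmetric game, e.g.\ with loss matrix $\mathrm{diag}(1,c)$ for a constant $c\in(0,1)$, whose unique Nash equilibrium is fully mixed and bounded away from the uniform distribution. Here the \eqref{OFTRL} iterate is a Euclidean projection onto the simplex, $x^t=\Pi_{\X}\!\big(-\eta(L^{t-1}_x+\ell^{t-1}_x)\big)$ (likewise for $y$), and the cumulative loss grows linearly, so for any constant step size the loss term eventually dominates the fixed $\tfrac1\eta R$ term and the projection is pinned near the vertices of the simplex. One then shows $(x^t,y^t)$ settles into a persistent cycle among (near-)vertices, so $\DualGap(z^t)=\Omega(1)$ for all large $t$, hence $\frac1T\sum_{t\le T}\DualGap(z^t)=\Omega(1)$. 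This is precisely what distinguishes OFTRL from the \eqref{OOMD}/OGDA instantiation of the same regularizer, which anchors each step at the previous iterate and converges.

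The main obstacle is the core lemma, specifically the escape-time bound $T_\delta$ with the correct regularizer-dependent exponent. Because the scalar recursion is coupled between the two players and $g^t$ oscillates in sign, the ``$\phi'$ changes by $O(\eta)$ per step'' estimate cannot simply be integrated; one needs an invariant region together with a monotone/Lyapunov quantity --- in the spirit of the one behind \Cref{thm:random-lower-OMWU} --- that certifies the iterates stay $\Omega(\delta)$-far from equilibrium while correctly accounting for the slow, non-geometric damping of the oscillation. This is where the exponents $\tfrac12$ and $\tfrac{1-\beta}{2-\beta}$ come from, and for the Tsallis family one must also track the $\beta$-dependence of all constants uniformly. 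The Euclidean case poses a different, milder obstacle: making the vertex-cycling behavior of the projected-FTRL iterates rigorous on a fixed, fully mixed game.
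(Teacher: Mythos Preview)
Your high-level strategy --- tune $\delta$ in the family $A_\delta$ and exploit that the barrier steepness $|\phi'(\delta)|$ is the regularizer-dependent quantity controlling the timescale --- is exactly the paper's. But the core lemma you propose, and the way you handle the Euclidean case, both diverge from what the paper actually proves, and the divergence matters.

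The paper does \emph{not} show that iterates stay $\Omega(\delta)$-far from equilibrium for a long window. It shows something per-iterate stronger but temporally shorter: building directly on the trajectory analysis of \citet{cai2024fast}, it isolates a block of $\Theta(1/\delta)$ \emph{consecutive} iterations each with duality gap at least a \emph{universal constant} $c_2$, and bounds the time at which this block ends by $T_{\mathrm{end}}=O(f_R(\delta)/\delta)$, where $f_R(\delta):=-F_{1,R}^{-1}\!\big(\tfrac{1}{1+\delta}\big)$ is precisely your $|\phi'(\delta)|$ quantity. This immediately gives
\[
\frac{1}{T_{\mathrm{end}}}\sum_{t\le T_{\mathrm{end}}}\DualGap(z^t)\;\ge\;\frac{c_2\cdot\Theta(1/\delta)}{O(f_R(\delta)/\delta)}\;=\;\Omega\!\left(\frac{1}{f_R(\delta)}\right),
\]
which is $\Omega(1)$ for the Euclidean regularizer, $\Omega(\delta^{1-\beta})$ for Tsallis-$\beta$, and $\Omega(\delta)$ for the log barrier; substituting $T_{\mathrm{end}}$ yields the three stated rates. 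Notice the average lower bound is $\Omega(1/f_R(\delta))$, \emph{not} $\Omega(\delta)$ in general: your core lemma coincides with this only for the log barrier. For Tsallis-$\beta$ your stated threshold $T_\delta=\Theta(\delta^{-(2-\beta)/(1-\beta)})$ is off --- the paper's is $\Theta(\delta^{-(2-\beta)})$ --- and to make your version of the lemma true at your much larger $T_\delta$ you would need to control the dynamics long after the overshoot block, which neither your sketch nor the existing trajectory analysis supplies. The ``invariant-region/potential'' argument you allude to would have to establish a genuinely new fact.

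Relatedly, there is no need for a separate construction in the Euclidean case. The paper uses the \emph{same} $A_\delta$ family: here $f_R(\delta)=O(1)$, so the constant-gap block of length $\Theta(1/\delta)$ already occupies a constant fraction of the first $O(1/\delta)$ iterations, and the $\Omega(1)$ bound drops out without any vertex-cycling analysis on a bespoke game. Your proposed route via $\mathrm{diag}(1,c)$ may work, but it is extra effort and breaks the uniformity of the argument.
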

To the best of our knowledge, \Cref{thm:random-lower-OMWU} and \Cref{thm:random-lower-all-regularizers} are the first lower bound results for the random-iterate convergence rate of learning in games. Our lower bounds also offer insights into the relation between the random-iterate and the last-iterate convergence. By definition, we know that random-iterate convergence is a weaker requirement than last-iterate convergence. However, the $\Omega(1)$ lower bound for OFTRL with squared Euclidean norm shows that the uniform random-iterate convergence can still be arbitrarily slow. This matches the lower bound on its last-iterate convergence proved in~\citet{cai2024fast}, thereby demonstrating that random-iterate convergence can be \emph{as hard as} last-iterate convergence. 

\begin{figure*}
    \rotatebox{90}{\hskip5mm\small Avg. social dynamic regret}
    \includegraphics[width=.23\textwidth]{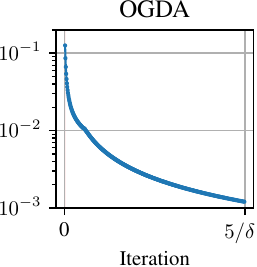}\hfill%
    \includegraphics[width=.23\textwidth]{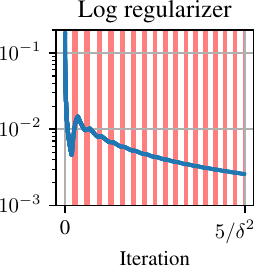}\hfill%
    \includegraphics[width=.23\textwidth]{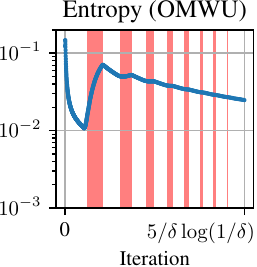}\hfill%
    \includegraphics[width=.23\textwidth]{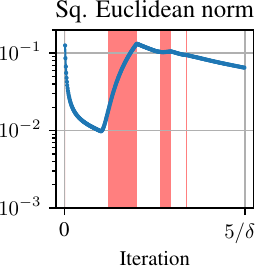}%
    \caption{Random-iterate convergence/average social dynamic regret guarantee of OGDA and OFTRL algorithms with log, entropy,  and squared Euclidean norm regularizer. The game is $A_\delta$ defined in \eqref{eq:A delta} with $\delta = 10^{-2}$. The red region is when the single iterate has a duality gap $\ge 0.1$. We intentionally show different numbers of iterations for different regularizers as illustrated in our lower bounds (\Cref{thm:random-lower-OMWU} and~\ref{thm:random-lower-all-regularizers}) and proofs (see \Cref{sec:random proof overview}).   
    \label{fig:oftrl social regret}}
\end{figure*}

\subsection{Proof Overview}\label{sec:random proof overview}
We present a high-level overview of the proofs of \Cref{thm:random-lower-OMWU} and \Cref{thm:random-lower-all-regularizers} here. The full proofs appear in \Cref{app:random}.  We focus on a class of hard instances introduced in~\citet{cai2024fast} that has a fully-mixed Nash equilibrium:
\begin{align}
    A_\delta := \begin{bmatrix}
        \frac{1}{2} + \delta & \frac{1}{2} \\
        0 &1
    \end{bmatrix}, \forall \delta \in \InParentheses{0,\frac{1}{2}}. \label{eq:A delta}
\end{align}
\citet{cai2024fast} establish an $\Omega(1)$ lower bound on the uniform last-iterate convergence rate for OFTRL dynamics with all the aforementioned regularizers. Specifically, they show that for any sufficiently-small $\delta > 0$, the OFTRL dynamics on $A_\delta$ always have at least one iterate $T = \Omega(1/\delta)$ such that $\DualGap(x^T, y^T) = \Omega(1)$. Therefore, a uniform $o(1)$ last-iterate convergence rate is impossible. 

Finding one iterate with a large duality gap is sufficient for the lower bound on the {\em last-iterate} convergence. However, that is not sufficient for the weaker notion of \emph{random-iterate} convergence, which measures the average duality gap:
\begin{align}
 \frac{1}{T} \sum_{t=1}^T \DualGap(x^t, y^t). \label{eq:random}
\end{align}
Instead, to show that \eqref{eq:random} is large, we must prove that a substantial proportion of iterations in $\{1,\ldots, T\}$ all have large duality gap. Building upon the analysis of the class of games~\eqref{eq:A delta} presented in~\citet{cai2024fast}, we  show that for some $T = \Omega(1/\delta)$ iterations, there will be a block of $\Theta(\frac{1}{\delta})$ iterations each with a constant duality gap: 
\begin{align*}
    \DualGap(x^t, y^t) = \Omega(1), \forall t \in \InBrackets{T, T+\Theta\InParentheses{\frac{1}{\delta}}}.
\end{align*}
As a result, the average duality gap at time $T + \Theta(\frac{1}{\delta})$ is at least
\begin{align*}
    \frac{1}{T+\Theta(\frac{1}{\delta})} \sum_{t=1}^{T+\Theta(\frac{1}{\delta})} \DualGap(x^t, y^t) \ge \frac{\Theta(\frac{1}{\delta})}{T+\Theta(\frac{1}{\delta})}.
\end{align*}
If we provide an upper bound on $T$ then we get a lower bound for the random-iterate convergence rate. Providing an upper bound on $T$ requires a careful analysis of the trajectory of the OFTRL learning dynamics and is presented as \Cref{theorem:consecutive bad iterations} in \Cref{app:random}.
The resulting upper bound is $T = O(\frac{1}{\delta}\cdot f_R(\delta))$ where $f_R(\delta)$ is a quantity that is related to the stability of the regularizer $R$. 
In \Cref{lemma: E delta} in \Cref{app:random}, we establish upper bounds for $f_R(\delta)$. For the entropy regularizer, we have $f_R(\delta) \le \log\frac{1}{\delta}$ which implies the $\Omega(\frac{1}{\log T})$ lower bound.  For the squared Euclidean norm, $f_R(\delta) \le 1$; for the log regularizer, $f_R(\delta) \le \frac{1}{\delta}$; for Tsallis entropy with $\beta \in (0,1)$, $f_R(\delta) \le \frac{2\beta}{1-\beta}(\frac{1}{\delta})^{1-\beta}$. See \Cref{fig:oftrl social regret} for a numerical illustration of the effects of different regularizers on the random-iterate convergence. Combining these results completes the proof for \Cref{thm:random-lower-OMWU} and ~\ref{thm:random-lower-all-regularizers}.

\section{Best-Iterate Convergence Rate for OMWU}\label{sec:best-iterate}
In this section, we establish a polynomial best-iterate convergence rate for OMWU for $2 \times 2$ matrix games with a fully mixed Nash equilibrium.  
Our main theorem in this section is the following.
\begin{theorem}\label{thm:best iterate OMWU}
     Consider any matrix game $A \in [0,1]^{2\times 2}$ with a fully mixed Nash equilibrium. Let $\{x^t,y^t\}_{t\ge 1}$ be the iterates produced by the OMWU dynamics with uniform initialization and constant step size $\eta \le \frac{1}{10}$. Then for any $T \ge 1$, we have
     \begin{align*}
         \min_{t \in [1,T]} \DualGap(x^t, y^t) = O(T^{-\frac{1}{6}}).
     \end{align*}
\end{theorem}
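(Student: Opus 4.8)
The plan is a two-phase argument combining two complementary upper bounds on the OMWU trajectory of a $2\times2$ game with fully mixed Nash equilibrium; write $\delta\in(0,\tfrac12)$ for the smallest probability appearing in the equilibrium. The first bound is \emph{global}: a \emph{universal} random-iterate estimate $\frac1T\sum_{t=1}^T\DualGap(x^t,y^t)=O(T^{-1/4}\delta^{-1/2})$ (\Cref{thm:OMWU dynamic}); since the minimum over iterates is at most the average, this already yields $\min_{t\in[1,T]}\DualGap(z^t)=O(T^{-1/4}\delta^{-1/2})$. The second is for the \emph{initial phase}: starting from the uniform point with $\eta\le\tfrac1{10}$, the duality gap decreases geometrically and, after $T_0=O(\polylog(1/\delta))$ steps --- a count with no polynomial dependence on $1/\delta$ --- reaches an iterate with $\DualGap(z^{T_0})=O(\delta)$ (\Cref{thm:OMWU-initial}). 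Given these, the theorem follows by elementary arithmetic: for $T\ge T_0$ the iterate $z^{T_0}$ lies in $[1,T]$, so the initial-phase bound applies alongside the global one, giving $\min_{t\le T}\DualGap(z^t)=O(\min\{\delta,\,T^{-1/4}\delta^{-1/2}\})$, and $\min\{\delta,\,T^{-1/4}\delta^{-1/2}\}$ as a function of $\delta$ is maximized at the crossover $\delta=T^{-1/6}$, where it equals $T^{-1/6}$ (for $\delta<T^{-1/6}$ it equals $\delta<T^{-1/6}$; for $\delta\ge T^{-1/6}$ it is at most $T^{-1/4}\delta^{-1/2}\le T^{-1/4}\cdot T^{1/12}=T^{-1/6}$); for the remaining range $T<T_0$ the geometric initial decrease already gives $\DualGap(z^T)=O(\rho^T)=O(T^{-1/6})$ for a universal $\rho<1$, since an exponential dominates any polynomial.

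The substance lies in the two phases, and I expect the global bound (\Cref{thm:OMWU dynamic}) to be the main obstacle. Its starting point is the identity (\Cref{prop:dynamic regret->best-iterate}) that the \emph{social dynamic regret} $\sum_{t=1}^T\DualGap(z^t)$ decomposes as the sum of the two players' regrets against their per-step best responses: with $\hx^t\in\argmin_{x\in\+X}\InAngles{x,Ay^t}$ and $\hy^t\in\argmax_{y\in\+Y}\InAngles{y,A^\top x^t}$,
\[
\sum_{t=1}^T\DualGap(x^t,y^t)=\sum_{t=1}^T\InAngles{x^t-\hx^t,\,Ay^t}+\sum_{t=1}^T\InAngles{\hy^t-y^t,\,A^\top x^t}.
\]
The plan is to bound each dynamic-regret term by cutting $[1,T]$ at the times the corresponding best response switches --- in a $2\times2$ game a best response is a vertex of the simplex away from a single indifference point --- so that on each resulting interval the contribution is an \emph{interval regret} of OMWU against one fixed action, and then to combine these interval contributions with a quantitative description of the OMWU trajectory. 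Concretely, I would show that each full revolution of the spiralling dynamics around the equilibrium contributes a block of $\Theta(1/\delta)$ iterates with constant duality gap but that successive revolutions take progressively longer (sharpening the trajectory analysis of \citet{wei2021linear,cai2024fast}, where the dependence on the condition number $\delta$ is explicit), so that only $O(\sqrt T)$ revolutions occur by time $T$; this bounds $\sum_{t=1}^T\DualGap(z^t)^2=O(\delta^{-1}\sqrt T)$, and Cauchy--Schwarz then gives $\sum_{t=1}^T\DualGap(z^t)\le\sqrt{T\sum_t\DualGap(z^t)^2}=O(T^{3/4}\delta^{-1/2})$, i.e.\ the claimed average rate. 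The hard part is exactly this quantitative trajectory analysis: controlling how many times, and for how long, OMWU's iterates sit near the ``wrong'' vertex (so that the best-response comparator is far from the iterate and the interval regret is large), and extracting the right power of $\delta$; the variation/stability terms standard to optimistic methods ($\ell^t-\ell^{t-1}$ equals $A(y^t-y^{t-1})$ or $-A^\top(x^t-x^{t-1})$, and $\eta\le\tfrac1{10}$ keeps the dynamics contractive) are comparatively routine to handle.

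For the initial phase (\Cref{thm:OMWU-initial}) I would work directly with the $2\times2$ structure --- the only place restricting to $2\times2$ is essential. Each player's strategy is a single scalar (its log-odds), and OMWU becomes a coupled pair of scalar recursions driven by the difference between the two rows (resp.\ columns) of $A$. From the uniform point, the log-odds move at an $\Omega(1)$ rate toward the pure-strategy equilibrium of the limit game obtained by setting $\delta=0$; the $\delta$-sized perturbation only matters once an iterate is within $O(\delta)$ of that vertex, which happens after $O(\polylog(1/\delta))$ steps, and along the way the duality gap --- a bilinear function of the two deviations in a $2\times2$ game --- decreases geometrically. Reading off the gap at that time gives the $O(\delta)$ iterate. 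This phase needs no control of the revolution count and is comparatively soft; the one point requiring care is that the entry time carries no polynomial dependence on $1/\delta$, which is exactly what makes the $\min$-combination above collapse to a $\delta$-free $O(T^{-1/6})$ rate.
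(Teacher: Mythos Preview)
Your two-phase skeleton and the final combination $\min\{\delta,\,T^{-1/4}\delta^{-1/2}\}\le T^{-1/6}$ match the paper exactly, but the initial-phase claims are wrong as stated. A $2\times2$ game with a fully mixed equilibrium carries \emph{two} scale parameters, say $\delta_x\le\delta_y$ (after relabeling, the second coordinates of the two players' equilibria), with $\delta=\delta_x$; your ``set $\delta=0$'' heuristic implicitly treats $\delta_y$ as $\Theta(1)$. Take $\delta_y=\sqrt{\delta_x}$: from the uniform point $y^t[1]$ drops below $\delta_y$ in $T_y=O(\log(1/\delta_y))$ steps, at which time the gap is only $\Theta(\delta_y)=\Theta(\sqrt{\delta})$; thereafter $x^t[1]$ climbs toward $1-\delta_x$ at multiplicative rate $\exp(\Theta(\eta\delta_y))$ per step, so the first $O(\delta_x)$-gap iterate occurs at time $\Theta(\delta_y^{-1}\log(1/\delta_x))=\Theta(\delta^{-1/2}\log(1/\delta))$ --- not $\polylog(1/\delta)$ --- and the decrease in that second stretch is geometric only with ratio $1-\Theta(\eta\delta_y)$, not a universal $\rho$. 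The paper does not claim a last-iterate bound here; instead it proves the \emph{best}-iterate bound $\min_{k\le t}\DualGap(z^k)=O((\log^2 t)/t)$ for all $t\le T_1$, by taking $\min\{\,2\delta_y,\ \delta_y^{-8}\exp(-c\eta\delta_y t)\,\}$ (the first term coming from the iterate at $T_y$, the second from the current iterate) and observing that the crossover in $\delta_y$ eliminates the $\delta_y$-dependence --- the same trick as your outer $\min$, nested one level in.

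Your global-phase route (cut at best-response switches, count revolutions, bound $\sum_t\DualGap(z^t)^2$, then Cauchy--Schwarz) is also not the paper's, and the key step --- that only $O(\sqrt T)$ revolutions occur by time $T$ --- is left as the acknowledged ``hard part'' with no argument offered. The paper avoids trajectory geometry entirely: it shows (i) the interval regret on \emph{any} subinterval is $O(1/(\eta\delta))$, because $\KL(z^*,\hz^t)$ stays uniformly bounded, which forces every coordinate of $\hz^t,z^t$ to be at least $\exp(-O(1/\delta))$ and hence caps all KL terms in the regret telescoping by $O(1/\delta)$; and (ii) the total loss variation is $O(\sqrt T)$, by Cauchy--Schwarz from OMWU's bounded second-order path length $\sum_t\|z^t-z^{t-1}\|_1^2=O(1)$. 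Chopping $[1,T]$ into $M$ equal blocks and bounding dynamic regret on each by interval-regret plus block-length times variation gives $O(M/(\eta\delta)+(T/M)\sqrt T)$; optimizing $M$ yields $O(T^{3/4}\delta^{-1/2})$, and this argument works for general $d_1\times d_2$ games with a fully mixed equilibrium.
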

We note that all existing non-ergodic convergence rates of OMWU involve problem-dependent constants and can be arbitrarily slow~\citep{wei2021linear}. Our result is the first polynomial problem-independent best-iterate convergence rate for OMWU.
An important takeaway of \Cref{thm:best iterate OMWU} is that it provides, to our knowledge, the first separation between best-iterate convergence and random-iterate convergence for learning in games. 
For the same class of $2 \times 2$ games with a fully mixed Nash equilibrium, our results show that OMWU has an interesting landscape of convergence rates: (1) it does not have a uniform last-iterate convergence rate~\citep{cai2024fast}; (2) it does not have a polynomial uniform random-iterate convergence rate~(\Cref{thm:random-lower-OMWU}); (3) yet it has a polynomial $O(T^{-\frac{1}{6}})$ best-iterate convergence rate~(\Cref{thm:best iterate OMWU}). 
Our results show a surprising contrast with the uniform convergence results for OGDA, where the  last-iterate, best-iterate, and random-iterate convergence rates are similar.

\subsection{Proof Overview}\label{sec:overview-best}
To prove \Cref{thm:best iterate OMWU}, we develop a novel two-phase analysis of the best-iterate convergence rate of OMWU. In this section, we provide a high-level overview of the proof. A more detailed discussion of each phase appears in \Cref{sec:global phase} and \Cref{sec:initial phase}. The full proof is in \Cref{app:OMWU best-iterate}.

In the literature, the most common approach for proving a sublinear best-iterate convergence is using the random-iterate convergence as a proxy (since this is a stronger notion of convergence than best-iterate). Using OGDA as an example, it is known that the sum of duality gaps is sublinear $
    \sum_{t=1}^T \DualGap(x^t, y^t) = O(\sqrt{T})$.
This directly implies  $O(T^{-\frac{1}{2}})$ best-iterate convergence rate for OGDA.

However, this approach is impossible for OMWU, since we have shown a negative result for the random-iterate convergence rates of OMWU (Theorem \ref{thm:random-lower-OMWU}): for any $T$, there exists a game instance such that 
\begin{align*}
    \sum_{t=1}^T \DualGap(x^t, y^t) = \Omega\InParentheses{\frac{T}{\log T}}.
\end{align*}
As such, our proof of Theorem \ref{thm:best iterate OMWU} requires new insights fully tailored to the uniform best-iterate convergence (and independent of uniform random-iterate convergence).

We come up with the following two-phase analysis. Let $A$ be any $2\times 2$ game with a fully mixed Nash equilibrium and denote by $\delta > 0$ the minimum probability in the Nash equilibrium. 
\paragraph{Global phase} We first prove that for all $T$, we have a \emph{universal} $\delta$-dependent random-iterate convergence bound:
\begin{align*}
    \frac{1}{T}\sum_{t=1}^T \DualGap(x^t, y^t) = O(T^{-\frac{1}{4}}\delta^{-\frac{1}{2}}). 
\end{align*}
Note that this does not contradict \Cref{thm:random-lower-OMWU} since the bound has a dependence on $\delta$. It is worth noting that this bound holds for general $d_1\times d_2$ games with a fully mixed Nash equilibrium. The proof uses a connection between random-iterate convergence, dynamic regret, and interval regret. A detailed discussion is in \Cref{sec:global phase}.

\paragraph{Initial phase}  
We then analyze the initial iterations of OMWU. We show that there exists an iteration $T_1 \ge 1$ such that the following two conditions hold:
\begin{itemize}
    \item[1.] $\DualGap(x^{T_1}, y^{T_1}) = O(\delta)$.
    \item[2.] For all $T \in [1, T_1]$, a uniform best-iterate convergence rate holds: $\min_{t \in [1,T] }\DualGap(x^t,y^t) = \Tilde{O}(\frac{1}{T})$.
\end{itemize}
Here, $\tilde{O}(.)$ hides terms logarithmic in $T$. In summary, we show that the OMWU dynamics will reach an iterate $T_1$ with a duality gap of $O(\delta)$, and all the initial iterates $[1,T_1]$ have a fast best-iterate convergence rate independent of $\delta$. A detailed discussion is in \Cref{sec:initial phase}.

\paragraph{Combing the two-phase analysis} For all $T \le T_1$, by analysis in the initial phase, we know they have $\Tilde{O}(T^{-1})$
best-iterate convergence rate. For all $T \ge T_1$, we can combine the analysis in the initial phase and the global phase as follows:
\begin{align*}
    \min_{t \in [1,T]} \DualGap(x^t, y^t) &\le \min \{ \DualGap(x^{T_1}, y^{T_1}) , \frac{1}{T} \sum_{t=1}^T\DualGap(x^t,y^t)\} \\
    &\le \min \{ \delta , O(T^{-\frac{1}{4}}\delta^{-\frac{1}{2}})\} \le O(T^{-\frac{1}{6}}).
\end{align*}
Note that the last inequality holds since (1) if $\delta \le T^{-\frac{1}{6}}$, then the inequality holds; (2) if $\delta \ge T^{-\frac{1}{6}}$, then $T^{-\frac{1}{4}} \delta^{-\frac{1}{2}} \le T^{-\frac{1}{6}}$ and the inequality also holds. In this way, we get a uniform $\delta$-independent convergence rate.

The preceding analysis demonstrates the effectiveness of our two-phase approach, which leverages the additional flexibility inherent in best-iterate convergence compared to random-iterate convergence. We hope the insight in our approach will help analyze the best-iterate convergence rates of other algorithms. 

\subsection{Global Phase: Convergence via Minimizing the Social Dynamic Regret} \label{sec:global phase}
In this subsection, we present a \emph{universal} random-iterate convergence rate for OMWU on all matrix games $A \in [0,1]^{d_1 \times d_2}$ that have a fully mixed Nash equilibrium. Let $(x^*, y^*)$ be the fully mixed Nash equilibrium, we denote by $\delta =\min\{x^*[i], y^*[j]: i \in [d_1], j\in[d_2]\}$ as the minimum probability in the Nash equilibrium. We prove a random-iterate convergence rate of $O(T^{-\frac{1}{4}}\delta^{-\frac{1}{2}})$.

For simplicity of analysis, we use the OOMD-type update rule of OMWU in this section, which is equivalent to the OFTRL-type update of OMWU used in other parts of the paper. 
Recall that the OMWU algorithm initializes $z^1 = \hz^1$ as the uniform distribution and updates for iteration $t \ge 2$ with step size $\eta$:
\begin{align*}
    \hz^t &= \argmin_{z \in \+Z} \{ \eta \InAngles{z, F(z^{t-1})} + \KL(z, \hz^{t-1}) \} \\ 
    z^t &= \argmin_{z \in \+Z} \{ \eta \InAngles{z, F(z^{t-1})} + \KL(z, \hz^t) \} 
\end{align*}
Recall that $F(z^t) = (\ell^t_x, \ell^t_z) = (Ay^t, -A^\top x^t)$.

Our proof provides a new connection to the dynamic regret analysis from online learning.

\paragraph{Dynamic regret} Given loss functions $\{\ell^t_x\}$, actions $\{x^t\}$ produced by the $x$-player, and a sequence of comparators $\{u^t\}$, we define $x$-player's \emph{dynamic regret} as
\begin{align*}
    \+R^T_x(\{u^t\}):= \sum_{t=1}^T \InAngles{\ell^t_x, x^t - u^t}.
\end{align*}
A similar definition holds for the $y$-player. When $u^1=\ldots=u^t=u$, the dynamic regret recovers the standard static external regret. An interesting case is when the comparator sequence is optimal: $u^t = x^t_\star:= \min_{x \in \Delta^{d_1}} x^\top \ell^t_x$ for every $t \ge 1$, which is called the \emph{worst-case dynamic regret}. Observe that the sum of duality gaps is precisely the \emph{social dynamic regret}, \emph{i.e.,} the sum of both players' dynamic regret, as shown in \Cref{prop:dynamic regret->best-iterate}. 

\begin{proposition}\label{prop:dynamic regret->best-iterate}
    It holds that $\sum_{t=1}^T\DualGap(x^t,y^t)=\sum_{t=1}^T\max_{z \in \+Z} \InAngles{F(z^t), z^t-z} = \+R^T_x(\{x^t_\star\})+\+R^T_y(\{y^t_\star\})$. 
\end{proposition}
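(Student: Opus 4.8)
The plan is to unwind the definitions of the duality gap and the gradient operator, and observe that the maximization over $z\in\+Z$ separates into independent maximizations over the $x$-block and the $y$-block. First I would recall that $F(z^t)=(\ell^t_x,\ell^t_y)=(Ay^t,-A^\top x^t)$ and $z^t=(x^t,y^t)$, so that for any $z=(x,y)\in\+Z=\Delta^{d_1}\times\Delta^{d_2}$ we have $\InAngles{F(z^t),z^t-z}=\InAngles{\ell^t_x,x^t-x}+\InAngles{\ell^t_y,y^t-y}$. Since the constraint set is a product and the objective is additively separable across the two blocks, $\max_{z\in\+Z}\InAngles{F(z^t),z^t-z}=\max_{x\in\Delta^{d_1}}\InAngles{\ell^t_x,x^t-x}+\max_{y\in\Delta^{d_2}}\InAngles{\ell^t_y,y^t-y}$.

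Next I would identify each block maximum with a duality-gap summand. By definition $\ell^t_x=Ay^t$, so $\max_{x\in\Delta^{d_1}}\InAngles{\ell^t_x,x^t-x}=(x^t)^\top Ay^t-\min_{x'\in\Delta^{d_1}}(x')^\top Ay^t$; likewise $\ell^t_y=-A^\top x^t$ gives $\max_{y\in\Delta^{d_2}}\InAngles{\ell^t_y,y^t-y}=-(x^t)^\top Ay^t+\max_{y'\in\Delta^{d_2}}(x^t)^\top Ay'$. Adding these, the $(x^t)^\top Ay^t$ terms cancel and we obtain exactly $\max_{y'}(x^t)^\top Ay'-\min_{x'}(x')^\top Ay^t=\DualGap(x^t,y^t)$. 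Summing over $t=1,\dots,T$ yields the first claimed equality.

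For the second equality I would use the definitions of the worst-case comparators: $x^t_\star\in\argmin_{x\in\Delta^{d_1}}\InAngles{\ell^t_x,x}$ means $\InAngles{\ell^t_x,x^t_\star}=\min_{x'\in\Delta^{d_1}}(x')^\top Ay^t$, so $\max_{x\in\Delta^{d_1}}\InAngles{\ell^t_x,x^t-x}=\InAngles{\ell^t_x,x^t-x^t_\star}$, and summing gives $\+R^T_x(\{x^t_\star\})$; symmetrically for $y$, noting that $y^t_\star\in\argmin_{y}\InAngles{\ell^t_y,y}=\argmax_{y}(x^t)^\top Ay$. Combining the two blocks gives $\sum_{t=1}^T\DualGap(x^t,y^t)=\+R^T_x(\{x^t_\star\})+\+R^T_y(\{y^t_\star\})$.

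There is essentially no hard step here; the only thing to be careful about is the sign bookkeeping for the $y$-player (the $y$-player minimizes the loss $\ell^t_y=-A^\top x^t$, which corresponds to maximizing $(x^t)^\top Ay$), and the observation that maximizing a separable linear function over a product of simplices decouples. Both are routine, so I expect the proof to be a short unwinding of definitions rather than requiring any genuinely new idea.
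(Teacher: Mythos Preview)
Your proposal is correct and follows essentially the same approach as the paper: both arguments unwind the definition of $F(z^t)$, use the separability of the maximization over the product simplex $\+Z=\Delta^{d_1}\times\Delta^{d_2}$, and then add and subtract the bilinear term $(x^t)^\top A y^t$ to split the duality gap into the two players' per-round regrets against $x^t_\star$ and $y^t_\star$. The only cosmetic difference is ordering (the paper writes out the chain of equalities directly without explicitly naming the sign check for the $y$-player), but the content is identical.
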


We then borrow insights from the online learning literature to upper bound the dynamic regret. The insight is that an algorithm has sublinear dynamic regret when (1) it has \emph{interval regret} guarantee $O(\sqrt{|\+I|})$ for all interval $\+I$ and (2) the {\em variation of loss sequence} is sublinear. We formally introduce interval regret and the variation of the loss sequence below and discuss how to prove these conditions for the OMWU dynamics. 

\paragraph{Interval Regret} An interval $\+I = [s,e] \subseteq[1,T]$ is a sequence of consecutive iterations. The interval regret with respect to interval $\+I$ is the standard regret but restricted to iterations in $\+I$:
\begin{align*}
    \+R^\+I:=\sum_{t\in\+I} (\ell^t)^\top x^t - \min_{x} \sum_{t \in \+I}(\ell^t)^\top x.
\end{align*}
Denote by $|\+I|$ the length of interval $\+I$. For online learning against adversarial losses, OMWU does not guarantee $\+R^\+I =o(|\+I|)$ for all $\+I$. However, as we show in the following, OMWU dynamics in zero-sum matrix games with a fully mixed Nash equilibrium guarantees the interval regret is constant (that depends on $\delta$): $\+R^\+I=O(1/\delta)$ for all $\+I$. 

We summarize some existing results of the OMWU dynamics in the following.
\begin{lemma}[Adapted from Lemma 1 in \citep{wei2021linear}]\label{lemma:OMWU rvu}
    Consider OWMU for a zero-sum game $A \in [0,1]^{d_1\times d_2}$ with $\eta \le \frac{1}{8}$. Let $z^*$ be a Nash equilibrium. Define $\Theta^t = \KL(z^*, \hz^t) + \frac{1}{16} \KL(\hz^t, z^{t-1})$ and $\zeta^t = \KL(\hz^{t+1}, z^t) + \KL(z^t, \hz^t)$. Then for any $t \ge 2$, we have
    \begin{itemize}
        \item[1.]  For any $z$, $\eta \InAngles{F(z^t), z^t-z}\le \Theta^t-\Theta^{t+1}-\frac{15}{16}\zeta^t$.
        \item[2.] $\Theta^{t+1}\le \Theta^t-\frac{15}{16}\zeta^t$.
        \item[3.] $\sum_{t=2}^\infty \InNorms{z^{t+1}-z^t}^2_1 = O(\log(d_1d_2))$.
    \end{itemize}
\end{lemma}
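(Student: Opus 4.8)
The plan is to run the standard ``regret-bounded-by-variation'' (RVU) analysis of optimistic online mirror descent, specialized to the KL regularizer, and then bootstrap the resulting one-step inequality into the monotonicity claim (Part~2) and the path-length bound (Part~3). First observe that, after reindexing, the recursion in the statement is optimistic OMD for round $t$ with predicted gradient $F(z^{t-1})$: both $\hz^{t+1}$ and $z^t$ are KL-prox steps from the common base point $\hz^t$, with scaled gradients $\eta F(z^t)$ and $\eta F(z^{t-1})$ respectively. Apply the three-point prox inequality twice --- to $\hz^{t+1}$ with comparator $z$, giving $\eta\InAngles{F(z^t),\hz^{t+1}-z}\le \KL(z,\hz^t)-\KL(z,\hz^{t+1})-\KL(\hz^{t+1},\hz^t)$, and to $z^t$ with comparator $\hz^{t+1}$, giving $\eta\InAngles{F(z^{t-1}),z^t-\hz^{t+1}}\le \KL(\hz^{t+1},\hz^t)-\KL(\hz^{t+1},z^t)-\KL(z^t,\hz^t)$ --- and add them together with the identity $\eta\InAngles{F(z^t),z^t-\hz^{t+1}}=\eta\InAngles{F(z^{t-1}),z^t-\hz^{t+1}}+\eta\InAngles{F(z^t)-F(z^{t-1}),z^t-\hz^{t+1}}$. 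The $\KL(\hz^{t+1},\hz^t)$ terms cancel, leaving, for every comparator $z$,
\[\eta\InAngles{F(z^t),z^t-z}\;\le\;\KL(z,\hz^t)-\KL(z,\hz^{t+1})-\KL(\hz^{t+1},z^t)-\KL(z^t,\hz^t)+E_t,\]
where $E_t:=\eta\InAngles{F(z^t)-F(z^{t-1}),z^t-\hz^{t+1}}$ is the only non-telescoping term. Taking $z=z^*$ turns the first two KL terms into the $\KL(z^*,\cdot)$ parts of $\Theta^t$ and $\Theta^{t+1}$, so Part~1 reduces to showing $E_t\le \tfrac1{16}\KL(\hz^t,z^{t-1})+\tfrac1{16}\KL(z^t,\hz^t)$ --- this exactly accounts for the $\tfrac1{16}\KL(\hz^{\bullet},z^{\bullet-1})$ bookkeeping terms inside $\Theta^{\bullet}$ and the slack between $\KL(z^t,\hz^t)$ and $\tfrac{15}{16}\KL(z^t,\hz^t)$ in $\zeta^t$.

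To prove that bound --- the crux of the lemma --- split the bilinear operator block-wise: $\InAngles{F(z^t)-F(z^{t-1}),z^t-\hz^{t+1}}=\InAngles{A(y^t-y^{t-1}),x^t-\hx^{t+1}}-\InAngles{A^\top(x^t-x^{t-1}),y^t-\hy^{t+1}}$, and bound each inner product by H\"older in $\ell_1/\ell_\infty$, using $\InNorms{A(y^t-y^{t-1})}_\infty\le\InNorms{y^t-y^{t-1}}_1$ (entries of $A$ lie in $[0,1]$) and, crucially, $\InNorms{x^t-\hx^{t+1}}_1\le \eta\InNorms{A(y^t-y^{t-1})}_\infty$. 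This last estimate is non-expansiveness of the KL-prox map on a single simplex: $x^t$ and $\hx^{t+1}$ are prox steps from $\hx^t$ whose scaled gradients differ by exactly $\eta A(y^t-y^{t-1})$, and KL on $\Delta^{d_1}$ is $1$-strongly convex in $\InNorms{\cdot}_1$ by Pinsker, so the prox is $1$-Lipschitz; symmetrically for the $y$-block. Together these give $E_t\le \eta^2(\InNorms{x^t-x^{t-1}}_1^2+\InNorms{y^t-y^{t-1}}_1^2)$, and then a triangle inequality through $\hz^t$ plus Pinsker ($\InNorms{p-q}_1^2\le 2\KL(p,q)$) yields $\InNorms{x^t-x^{t-1}}_1^2\le 4\KL(x^t,\hx^t)+4\KL(\hx^t,x^{t-1})$ and likewise for $y$, so $E_t\le 4\eta^2(\KL(z^t,\hz^t)+\KL(\hz^t,z^{t-1}))\le \tfrac1{16}(\KL(z^t,\hz^t)+\KL(\hz^t,z^{t-1}))$ whenever $\eta\le\tfrac18$. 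I expect this step to be the main obstacle: the constants have to be tracked exactly, and in particular one must keep the $x$- and $y$-blocks separate until after invoking strong convexity --- if one merges them first, the strong-convexity constant of KL on the product degrades and the target constants $\tfrac1{16},\tfrac{15}{16}$ are not recoverable.

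For Part~2, the game operator is affine with skew-symmetric linear part, so $\InAngles{F(z)-F(z'),z-z'}=0$; combined with the variational inequality $\InAngles{F(z^*),z-z^*}\ge 0$ for all $z\in\+Z$ characterizing the Nash equilibrium $z^*$, this gives $\InAngles{F(z^t),z^t-z^*}=\InAngles{F(z^*),z^t-z^*}\ge 0$. Plugging $z=z^*$ into Part~1 and discarding the nonnegative left-hand side yields $\Theta^{t+1}\le\Theta^t-\tfrac{15}{16}\zeta^t$. For Part~3, telescope this: $\tfrac{15}{16}\sum_{t=2}^N\zeta^t\le \Theta^2-\Theta^{N+1}\le\Theta^2$, and $\Theta^2=O(\log(d_1d_2))$ since with uniform initialization $\KL(z^*,\hz^1)\le\log(d_1d_2)$ and a single prox step perturbs the relevant KL quantities by $O(\eta)=O(1)$ (again via the prox inequality and $\InNorms{F(z^1)}_\infty\le 1$). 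Finally, $\InNorms{z^{t+1}-z^t}_1^2\le 2\InNorms{z^{t+1}-\hz^{t+1}}_1^2+2\InNorms{\hz^{t+1}-z^t}_1^2\le 8\KL(z^{t+1},\hz^{t+1})+8\KL(\hz^{t+1},z^t)\le 8\zeta^{t+1}+8\zeta^t$, since $\KL(z^{t+1},\hz^{t+1})$ and $\KL(\hz^{t+1},z^t)$ are summands of $\zeta^{t+1}$ and $\zeta^t$; summing over $t\ge2$ gives $\sum_{t\ge2}\InNorms{z^{t+1}-z^t}_1^2\le 16\sum_{t\ge2}\zeta^t=O(\log(d_1d_2))$.
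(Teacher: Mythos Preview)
Your proof is correct and follows the standard RVU analysis of optimistic mirror descent with the KL regularizer—exactly the approach of \citet{wei2021linear}, from which the paper imports this lemma without reproving it. One small remark: you prove Part~1 after specializing the comparator to $z=z^*$, whereas the lemma is stated ``for any $z$''; your bound on $E_t$ is comparator-free, so the same computation yields the inequality with $\KL(z,\hz^t)-\KL(z,\hz^{t+1})$ in place of $\KL(z^*,\hz^t)-\KL(z^*,\hz^{t+1})$ for arbitrary $z$, which is what is actually used downstream in \Cref{lemma:OMWU interval}.
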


First, we note that since there is a fully mixed Nash equilibrium with minimum probability $\delta$, the minimum probability of any iterates is lower bounded by $\Omega(\exp(\frac{1}{\delta}))$. This is because by item 2 in \Cref{lemma:OMWU rvu}, we know $ \KL(z^*, \hz^t) \le \Theta^t \le \Theta^2$ is bounded by a constant that only depends on $d_1$ and $d_2$. Therefore, each coordinate of $\hz^t$ must be lower bounded by $\Omega(\frac{1}{\delta})$; otherwise the KL divergence $ \KL(z^*, \hz^t)$ would be too large. The guarantee for $z^t$ is by stability of the OMWU update. Formally, we have the next lemma.
\begin{lemma}[Adapted from Lemma 19 of \citep{wei2021linear}]\label{lemma:OMWU prob lower bound}
    Let matrix game $A \in [0,1]^{d_1\times d_2}$ has a fully mixed Nash equilibrium with minimum probability $\delta > 0$. Let $\{z^t\}_{t\ge1}$ be the iterates produced by OMWU with uniform initialization and step size $\eta \le \frac{1}{8}$, then 
    \begin{align*}
        &\min_{t \ge 1,i \in [d_1+d_2]}\{z^t[i], \hz^t[i]\} \ge \Omega\InParentheses{\frac{1}{d_1d_2}\exp\InParentheses{-\frac{1}{\delta}}}. \\
    \end{align*}
\end{lemma}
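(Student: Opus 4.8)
The plan is to combine the potential monotonicity from \Cref{lemma:OMWU rvu} with a short per-coordinate argument that turns an upper bound on $\KL(z^*,\hz^t)$ into a coordinate-wise lower bound on $\hz^t$, and then transfer that bound from $\hz^t$ to $z^t$ through one OMWU step; this follows the structure of Lemma 19 in \citet{wei2021linear}. First I would bound the initial potential $\Theta^2 = \KL(z^*,\hz^2) + \tfrac{1}{16}\KL(\hz^2, z^1)$ purely in terms of $d_1, d_2$. Since $z^1 = \hz^1$ is the uniform profile and $\hz^2$ is one optimistic mirror-descent step from it with $\eta \le \tfrac{1}{8}$ and $F(z^1) \in [-1,1]^{d_1+d_2}$, each coordinate of $\hz^2$ differs from the corresponding uniform coordinate by at most a universal multiplicative constant, so $\hz^2[i] = \Omega(1/\max\{d_1,d_2\})$; plugging this into both KL terms, and using $\sum_i z^*[i] = 2$ together with $\sum_i z^*[i]\log\tfrac{1}{z^*[i]} \le \log(d_1 d_2)$, gives $\Theta^2 = O(\log(d_1 d_2))$. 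By item~2 of \Cref{lemma:OMWU rvu} we have $\Theta^{t+1}\le\Theta^t$, and since $\Theta^t = \KL(z^*,\hz^t) + \tfrac{1}{16}\KL(\hz^t, z^{t-1}) \ge \KL(z^*,\hz^t)$, we obtain $\KL(z^*,\hz^t) \le \Theta^2 =: C_0 = O(\log(d_1 d_2))$ for every $t \ge 2$.

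Next I would extract the per-coordinate lower bound on $\hz^t$. Fix a coordinate $i_0$. Because $\hz^t[i]\le 1$, every summand satisfies $z^*[i]\log\tfrac{z^*[i]}{\hz^t[i]} \ge z^*[i]\log z^*[i]$, so the partial sum over $i \ne i_0$ is at least $\sum_i z^*[i]\log z^*[i] \ge -\log(d_1 d_2)$; hence $z^*[i_0]\log\tfrac{z^*[i_0]}{\hz^t[i_0]} \le \KL(z^*,\hz^t) + \log(d_1 d_2) \le C_0 + \log(d_1 d_2)$. Since the Nash equilibrium is fully mixed, $z^*[i_0] \ge \delta$; dividing by $z^*[i_0]$ and exponentiating yields $\hz^t[i_0] \ge \delta\,\exp\InParentheses{-\tfrac{C_0 + \log(d_1 d_2)}{\delta}}$, which is of the claimed order $\Omega\InParentheses{\tfrac{1}{d_1 d_2}\exp(-\tfrac{1}{\delta})}$ once the dimension-dependent constant in the exponent is absorbed into the $\tfrac{1}{\delta}$-dependence (using $\delta \ge \exp(-\tfrac{1}{\delta})$ and $\delta \le 1/\max\{d_1,d_2\}$). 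Finally, the second OMWU step reads $z^t[i] \propto \hz^t[i]\exp(-\eta F(z^{t-1})[i])$ with $F(z^{t-1})[i] \in [-1,1]$, so after normalization $z^t[i] \ge \hz^t[i]\,e^{-\eta} \ge \tfrac{1}{2}\hz^t[i]$; and $z^1 = \hz^1$ is uniform, which trivially satisfies the stated bound. This covers all $t \ge 1$ and both $z^t$ and $\hz^t$.

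The main obstacle is the per-coordinate extraction step: $\KL(z^*,\hz^t)$ is a sum of terms that need not be individually nonnegative, so one cannot simply bound a single coordinate's contribution by the whole divergence; the fix is to pair the (possibly large, positive) contribution of the target coordinate $i_0$ against the worst-case \emph{negative} contribution of the remaining coordinates, which is exactly the $-\log(d_1 d_2)$ entropy bound. The remaining pieces—bounding $\Theta^2$ from the uniform start and the one-step transfer from $\hz^t$ to $z^t$—are routine given the boundedness of the loss vectors $F(z)\in[-1,1]^{d_1+d_2}$, and the potential monotonicity is handed to us by \Cref{lemma:OMWU rvu}.
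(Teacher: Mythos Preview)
Your argument follows the same three-step skeleton sketched in the paper: bound $\KL(z^*,\hz^t)\le\Theta^t\le\Theta^2=O(\log(d_1d_2))$ via item~2 of \Cref{lemma:OMWU rvu}, turn this into a coordinate-wise lower bound on $\hz^t$ using that $z^*[i]\ge\delta$, and then pass from $\hz^t$ to $z^t$ by one multiplicative-weights step. That is exactly the route the paper indicates (it only sketches the proof and defers to \citet{wei2021linear}), and your handling of the per-coordinate extraction---pairing the target coordinate against the worst-case $-\log(d_1d_2)$ entropy contribution of the rest---is the correct way to deal with the fact that individual KL summands can be negative.

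One small inaccuracy: your final ``absorption'' claim does not quite land. What your argument yields is
\[
\hz^t[i_0]\ \ge\ \exp\!\InParentheses{-\frac{C_0+\log(d_1d_2)}{\delta}}\ =\ (d_1d_2)^{-\Theta(1)/\delta},
\]
i.e.\ the dimension enters the \emph{exponent} through $C_0=O(\log(d_1d_2))$, not as a polynomial prefactor. The inequalities $\delta\ge\exp(-1/\delta)$ and $\delta\le 1/\max\{d_1,d_2\}$ do not convert this into $\frac{1}{d_1d_2}\exp(-1/\delta)$; at best they give $\exp\!\bigl(-O(\log(1/\delta))/\delta\bigr)$, which is still asymptotically smaller. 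This is not a flaw in your method---it is the bound the paper's own sketch produces as well, and the statement of \Cref{lemma:OMWU prob lower bound} is phrased somewhat loosely in this respect. Crucially, the only downstream use (the proof of \Cref{lemma:OMWU interval}) needs $\log(1/\hz^t[i])=O\bigl((\log(d_1d_2))/\delta\bigr)$, which your bound delivers; so the discrepancy is cosmetic. A minor arithmetic point: after normalization the one-step transfer gives $z^t[i]\ge e^{-2\eta}\hz^t[i]$ (numerator contributes $e^{-\eta}$, denominator $e^{\eta}$), still $\ge\frac12\hz^t[i]$ for $\eta\le\frac18$.
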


Combining \Cref{lemma:OMWU rvu} and \Cref{lemma:OMWU prob lower bound}, we can bound the social interval regret by $O(\frac{1}{\delta})$ for any interval $\+I = [s, e]$. Note that this bound, although has a dependence on $\delta$, is independent of the interval length $O(|\+I|)$ and thus is sufficient for our goal of achieving $O(\sqrt{|\+I|})$ bound.  The idea is that by item 1 in \Cref{lemma:OMWU rvu}, the sum of regret is bounded by $\Theta^s$, the sum of two KL divergences, whose upper bound follows by the probability lower bounds presented in \Cref{lemma:OMWU prob lower bound}.
\begin{lemma}[Bounded Interval Regret]\label{lemma:OMWU interval} In the same setup as \Cref{lemma:OMWU prob lower bound}, we have $
         \+R^\+I_x+\+R^\+I_y = O\InParentheses{\frac{(d_1+d_2)\log(d_1d_2)}{\eta \delta}}, \forall\+I.
     $
\end{lemma}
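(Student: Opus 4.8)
The plan is to reduce the social interval regret to the one-step telescoping inequality already provided by item~1 of \Cref{lemma:OMWU rvu}, and then to bound the single surviving boundary term $\Theta^s$ using the probability lower bound of \Cref{lemma:OMWU prob lower bound}. First I would rewrite the social interval regret in ``saddle'' form. Plugging $\ell^t_x = Ay^t$ and $\ell^t_y = -A^\top x^t$ into the definition of interval regret, the cross terms $(x^t)^\top A y^t$ cancel and one obtains the identity $\langle F(z^t), z^t - z\rangle = (x^t)^\top A y - x^\top A y^t$ for every $z = (x,y) \in \+Z$; in particular this expression is affine in $z$, and the maximizations over $x$ and over $y$ range over a product set, so
\begin{align*}
    \+R^\+I_x + \+R^\+I_y
    = \max_{x}\sum_{t\in\+I}\langle \ell^t_x, x^t - x\rangle + \max_{y}\sum_{t\in\+I}\langle \ell^t_y, y^t - y\rangle
    = \max_{z\in\+Z}\sum_{t\in\+I}\langle F(z^t), z^t - z\rangle .
\end{align*}

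Next, fix an interval $\+I = [s,e]$ with $s \ge 2$ (the boundary case $s=1$ is handled at the end). For a fixed $z$, summing item~1 of \Cref{lemma:OMWU rvu} over $t = s, \dots, e$ and telescoping, while discarding the nonnegative quantities $\zeta^t$ and $\Theta^{e+1}$, gives
\begin{align*}
    \eta \sum_{t=s}^{e} \langle F(z^t), z^t - z\rangle
    \;\le\; \sum_{t=s}^{e}\big(\Theta^t - \Theta^{t+1}\big) - \frac{15}{16}\sum_{t=s}^{e}\zeta^t
    \;\le\; \Theta^s .
\end{align*}
Since the right-hand side does not depend on $z$, taking the maximum over $z \in \+Z$ and combining with the first step yields $\eta\,(\+R^\+I_x + \+R^\+I_y) \le \Theta^s$.

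It then remains to bound $\Theta^s = \KL(z^*, \hz^s) + \tfrac{1}{16}\KL(\hz^s, z^{s-1})$ uniformly in $s$. This is where \Cref{lemma:OMWU prob lower bound} enters: every coordinate of $\hz^s$ and of $z^{s-1}$ is at least $\rho := \Omega\big(\tfrac{1}{d_1 d_2}\exp(-\tfrac1\delta)\big)$. Using the crude estimate $\KL(p,q) \le \sum_i p[i]\log\tfrac1{q[i]} \le 2\log\tfrac1\rho$ valid for any $p,q \in \+Z$ (the negative-entropy term $\sum_i p[i]\log p[i] \le 0$ is dropped, and the factor $2 = \|p\|_1$ since $p$ consists of two probability vectors), we get $\Theta^s = O\big(\log\tfrac1\rho\big) = O\big(\tfrac1\delta + \log(d_1 d_2)\big)$. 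Dividing by $\eta$ and absorbing constants into the stated prefactor gives $\+R^\+I_x + \+R^\+I_y = O\big(\tfrac{(d_1+d_2)\log(d_1 d_2)}{\eta\delta}\big)$. For the case $s=1$ I separate the $t=1$ term, bound $\langle F(z^1), z^1 - z\rangle = (x^1)^\top A y - x^\top A y^1 \le 1$ uniformly in $z$ (since $A \in [0,1]^{d_1\times d_2}$), and run the telescoping argument from $t=2$ on, which only adds an $O(1/\eta)$ term.

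I do not expect a genuine obstacle, since the lemma is essentially a chaining of the two cited results; the points needing care are (i) the interchange of $\max$ and $\sum$ and the decoupling of the $x$- and $y$-maximizations in the first step, which is valid precisely because $\langle F(z^t), z^t - z\rangle$ is affine in $z$ over a product domain; (ii) the fact that item~1 of \Cref{lemma:OMWU rvu} holds \emph{uniformly} over the comparator $z$, which is exactly what lets the per-interval telescoping collapse to the single term $\Theta^s$; and (iii) the $t=1$ boundary term, since the RVU-type inequality is only available for $t \ge 2$. The $\delta$-dependence $\log\tfrac1\rho = O(1/\delta)$ is inherited directly from the exponentially small probability lower bound of \Cref{lemma:OMWU prob lower bound}, and---crucially for the downstream dynamic-regret argument---the resulting bound is independent of the interval length $|\+I|$.
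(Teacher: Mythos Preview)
Your proposal is correct and follows essentially the same approach as the paper: telescope item~1 of \Cref{lemma:OMWU rvu} over the interval to reduce to the single boundary term $\Theta^s$, then bound $\Theta^s$ via the coordinate-wise probability lower bound from \Cref{lemma:OMWU prob lower bound}, handling $s=1$ by peeling off the first iteration. Your exposition is in fact more explicit than the paper's on two points---the identity $\+R^\+I_x+\+R^\+I_y=\max_{z}\sum_{t\in\+I}\langle F(z^t),z^t-z\rangle$ and the $s=1$ case---and your KL bound $\KL(p,q)\le \|p\|_1\log\tfrac1\rho = 2\log\tfrac1\rho$ is actually tighter than the paper's coordinate-wise estimate $\sum_i\log\tfrac1{q[i]}\le (d_1+d_2)\log\tfrac1\rho$, though both suffice for the stated $O\big(\tfrac{(d_1+d_2)\log(d_1d_2)}{\eta\delta}\big)$ conclusion.
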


\paragraph{Sublinear variation of loss sequences}
Next, we show that the OMWU dynamic produces a \emph{stable} environment, \emph{i.e.,} the variation of loss functions is small. Formally, we define the variation over an interval $\+I = [s,e]$ as
\begin{align*}
    V^\+I = \sum_{t=s+1}^e \max_{z} |\InAngles{F(z^t) - F(z^{t-1}),z}| . 
\end{align*}
We note that OMWU has bounded second-order path length $\sum_{t=2}^\infty\InNorms{{z^t -z^{t-1}}}^2_1$ (by item 3 in \Cref{lemma:OMWU rvu}). We then have the following bound on its variation of losses.
\begin{lemma}\label{lemma:OMWU variation}
    For any $\+I$, we have $ V^\+I =  O\InParentheses{\sqrt{|\+I|\log(d_1d_2)}}$.
\end{lemma}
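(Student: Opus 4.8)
The plan is to reduce the variation $V^{\+I}$ to the (first-order) path length of the iterates over $\+I$, and then invoke the bounded second-order path length from item 3 of \Cref{lemma:OMWU rvu} together with Cauchy--Schwarz. The starting point is to write out the increment of the gradient operator: since $F(z^t) = (Ay^t, -A^\top x^t)$, we have $F(z^t) - F(z^{t-1}) = \InParentheses{A(y^t - y^{t-1}),\, -A^\top (x^t - x^{t-1})}$. For any $z = (x,y) \in \+Z = \Delta^{d_1}\times\Delta^{d_2}$, the inner product $\InAngles{F(z^t)-F(z^{t-1}), z}$ splits as $\InAngles{A(y^t-y^{t-1}), x} - \InAngles{A^\top(x^t-x^{t-1}), y}$, so $\max_z \abs{\InAngles{F(z^t)-F(z^{t-1}),z}} \le \InNorms{A(y^t-y^{t-1})}_\infty + \InNorms{A^\top(x^t-x^{t-1})}_\infty$ (the $\max$ of the absolute value over a simplex equals the $\ell_\infty$ norm, and here it is split across the two blocks). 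Because $A \in [0,1]^{d_1\times d_2}$, each coordinate of $A v$ is a convex-coefficient-bounded combination, giving $\InNorms{Av}_\infty \le \InNorms{v}_1$ and likewise $\InNorms{A^\top w}_\infty \le \InNorms{w}_1$. Hence the per-step term is bounded by $\InNorms{y^t-y^{t-1}}_1 + \InNorms{x^t-x^{t-1}}_1 = \InNorms{z^t-z^{t-1}}_1$.

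Summing over the interval, $V^{\+I} \le \sum_{t=s+1}^e \InNorms{z^t-z^{t-1}}_1$. Applying Cauchy--Schwarz, this is at most $\sqrt{|\+I|}\cdot\sqrt{\sum_{t=s+1}^e \InNorms{z^t-z^{t-1}}_1^2} \le \sqrt{|\+I|}\cdot\sqrt{\sum_{t=2}^\infty \InNorms{z^{t}-z^{t-1}}_1^2}$, and the last sum is $O(\log(d_1d_2))$ by item 3 of \Cref{lemma:OMWU rvu}. This yields $V^{\+I} = O\InParentheses{\sqrt{|\+I|\log(d_1d_2)}}$, as claimed.

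This argument is essentially a routine calculation; there is no real obstacle. The only points that require a moment of care are (i) correctly handling the absolute value in the definition of $V^{\+I}$ over the product of simplices, which is why the bound splits cleanly into the two blocks, and (ii) the elementary operator-norm inequality $\InNorms{A}_{1\to\infty} = \max_{i,j}\abs{A_{ij}} \le 1$, which is where the assumption $A \in [0,1]^{d_1\times d_2}$ enters. Everything else follows from Cauchy--Schwarz and the already-established second-order path-length bound.
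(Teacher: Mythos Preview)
Your proof is correct and follows essentially the same route as the paper: bound each term $\max_z |\langle F(z^t)-F(z^{t-1}),z\rangle|$ by the $\ell_1$ path-length increment $\|z^t-z^{t-1}\|_1$ using the entrywise bound on $A$, then apply Cauchy--Schwarz and invoke item~3 of \Cref{lemma:OMWU rvu}. The only cosmetic difference is that you split the inner product across the $x$- and $y$-blocks before applying H\"older, whereas the paper applies H\"older directly on the full $z$ (picking up a harmless factor of $2$ from $\|z\|_1=2$); the two arguments are equivalent up to constants.
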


\paragraph{From interval regret to dynamic regret} Now we combine the bound for the interval regret (\Cref{lemma:OMWU interval}) and the sublinear variation of the loss sequence bound (\Cref{lemma:OMWU variation}) to show a sublinear dynamic regret bound. 
\begin{theorem}\label{thm:OMWU dynamic}
     Let matrix game $A \in [0,1]^{d_1\times d_2}$ have a fully mixed Nash equilibrium with minimum probability $\delta > 0$. Let $\{x^t,y^t\}_{t\ge1}$ be the iterates produced by OMWU with uniform initialization and step size $\eta \le \frac{1}{8}$. Then for any $T \ge 2$, the social dynamic regret is bounded by
     \begin{align*}
         \+R^T_x(x^t_\star)+\+R^T_y(y^t_\star) = O\InParentheses{ \frac{(d_1+d_2)\log(d_1d_2)}{\eta} \cdot T^{\frac{3}{4}}\delta^{-\frac{1}{2}}}.
     \end{align*}
\end{theorem}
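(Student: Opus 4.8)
The reduction is immediate: by \Cref{prop:dynamic regret->best-iterate} it suffices to bound the social dynamic regret $\+R^T_x(\{x^t_\star\})+\+R^T_y(\{y^t_\star\})$ against the worst-case comparators $x^t_\star\in\argmin_x\InAngles{\ell^t_x,x}$ and $y^t_\star\in\argmin_y\InAngles{\ell^t_y,y}$. The plan is a blocking argument with a free block length $L$ to be tuned at the end: partition $[1,T]$ into $m=\lceil T/L\rceil$ consecutive intervals $\+I_1,\dots,\+I_m$ with $|\+I_j|=L_j\le L$, and on each block decompose the $x$-player's contribution as
\[
\sum_{t\in\+I_j}\InAngles{\ell^t_x,x^t-x^t_\star}
=\underbrace{\sum_{t\in\+I_j}\InAngles{\ell^t_x,x^t-\bar x_j}}_{=\,\+R^{\+I_j}_x}
+\underbrace{\sum_{t\in\+I_j}\InAngles{\ell^t_x,\bar x_j-x^t_\star}}_{\text{drift}},
\]
where $\bar x_j\in\argmin_x\sum_{t\in\+I_j}\InAngles{\ell^t_x,x}$ is the single best fixed action over the block (and symmetrically for the $y$-player). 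Thus the social dynamic regret is at most $\sum_j(\+R^{\+I_j}_x+\+R^{\+I_j}_y)$ plus the sum of the two drift terms over all blocks.

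The first piece is exactly where the bounded interval regret of OMWU enters: \Cref{lemma:OMWU interval} gives $\+R^{\+I_j}_x+\+R^{\+I_j}_y=\order\InParentheses{\frac{(d_1+d_2)\log(d_1d_2)}{\eta\delta}}$ \emph{for every} block, crucially with no dependence on $L_j$, so summing over the $m\le T/L+1$ blocks contributes $\order\InParentheses{\frac{T}{L}\cdot\frac{(d_1+d_2)\log(d_1d_2)}{\eta\delta}}$. For the drift term, the key elementary step is: taking $\bar x_j$ to be a vertex $e_{i^*}$ with $i^*\in\argmin_i\bar\ell_{x,j}[i]$ (possible by linearity), where $\bar\ell_{x,j}:=\sum_{t\in\+I_j}\ell^t_x$, one checks for each $t\in\+I_j$ that $\InAngles{\ell^t_x,\bar x_j}-\min_i\ell^t_x[i]\le 2\InNorms{\ell^t_x-\bar\ell_{x,j}/L_j}_\infty$; since $\bar\ell_{x,j}/L_j$ is the block average of the $\ell^t_x$'s, each such deviation is at most the total variation of $\{\ell^t_x\}$ restricted to block $j$, so the $x$-drift on block $j$ is at most $2L_j$ times this within-block loss variation. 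Because the losses come from the game ($\InNorms{\ell^t_x-\ell^{t-1}_x}_\infty\le\InNorms{y^t-y^{t-1}}_1$, and symmetrically), the blocks partition $[1,T]$, and by Cauchy--Schwarz together with item 3 of \Cref{lemma:OMWU rvu} (equivalently \Cref{lemma:OMWU variation}),
\[
\sum_j L_j\InParentheses{V^{\+I_j}_x+V^{\+I_j}_y}\le L\sum_{t=2}^T\InNorms{z^t-z^{t-1}}_1\le L\sqrt{T}\sqrt{\sum_{t\ge2}\InNorms{z^t-z^{t-1}}_1^2}=\order\InParentheses{L\sqrt{T\log(d_1d_2)}}.
\]

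Combining the two contributions gives, for any integer $1\le L\le T$,
\[
\+R^T_x(\{x^t_\star\})+\+R^T_y(\{y^t_\star\})=\order\InParentheses{\frac{(d_1+d_2)\log(d_1d_2)}{\eta}\cdot\frac{T}{L\delta}+L\sqrt{T\log(d_1d_2)}},
\]
and balancing the two terms with $L\asymp T^{1/4}\delta^{-1/2}$ (rounded to an integer in $[1,T]$) yields the claimed $\order\InParentheses{\frac{(d_1+d_2)\log(d_1d_2)}{\eta}\,T^{3/4}\delta^{-1/2}}$ bound. The small-horizon regime $T\le\delta^{-2}$, where the optimizing $L$ would exceed $T$, is handled separately by the trivial estimate $\sum_t\DualGap(x^t,y^t)\le T\le T^{3/4}\delta^{-1/2}$ (each duality gap being at most $1$ for $A\in[0,1]^{d_1\times d_2}$).

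I expect the drift bound to be the main obstacle. Unlike the textbook dynamic-regret recipe, the comparator path length $\sum_t\InNorms{x^t_\star-x^{t-1}_\star}_1$ is \emph{not} small here, since the per-iteration minimizer over the simplex can jump between vertices whenever $\ell^t_x$ has near-ties; so one cannot simply invoke an off-the-shelf ``static regret over blocks plus comparator movement'' lemma. The argument must instead exploit the explicit form of the simplex minimizer together with the block's best fixed action to convert the drift directly into loss variation, after which the $O(\sqrt{|\+I|})$-type interval regret of OMWU (itself a consequence of the $\delta$-dependent iterate lower bound of \Cref{lemma:OMWU prob lower bound}) and the sublinear loss variation of the OMWU trajectory close the estimate. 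A secondary point requiring care is that the interval-regret bound must be genuinely length-independent, so that paying one such term per block, rather than once per iteration, is affordable.
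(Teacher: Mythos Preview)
The proposal is correct and follows essentially the same blocking argument as the paper: partition $[1,T]$ into intervals, bound the social dynamic regret on each block by the sum of the interval regret (handled by \Cref{lemma:OMWU interval}) and a drift term controlled by the within-block loss variation (handled via \Cref{lemma:OMWU variation} and item~3 of \Cref{lemma:OMWU rvu}), then balance the block length. The only cosmetic difference is your choice of anchor on each block---the paper uses $z^{s_m}_\star$, the per-iterate best response at the \emph{first} step of the block, whereas you use the block-optimal fixed comparator $\bar x_j$---but both anchors yield the same $O(|\+I_j|\cdot V^{\+I_j})$ drift bound and hence the same final $T^{3/4}\delta^{-1/2}$ rate (and you additionally make explicit the trivial $\sum_t\DualGap(z^t)\le T$ fallback for the small-horizon regime).
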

We remark that the rate $O(T^{\frac{3}{4}}\delta^{-\frac{1}{2}})$ improves the existing bound $O(\exp(\frac{1}{\delta})\cdot(1+\exp(-\frac{1}{\delta}))^{-T})$ exponentially in terms of the dependence on $\delta$. We give a proof sketch of \Cref{thm:OMWU dynamic} here and the full proof is in \Cref{app:global phase}. Recall that $z^t_\star$ is the optimal comparator in time $t$. The idea is that the dynamic regret for an interval $\+I = [s, e]$ could be decomposed as follows:
\begin{align*}
    \sum_{t\in \+I} \InAngles{F(z^t), z^t - z^t_\star} = \sum_{t\in \+I} \InAngles{F(z^t), z^t -z^{s}_\star} + \sum_{t\in \+I} \InAngles{F(z^t),z^{s}_\star - z^t_\star}
\end{align*}
where the first term is bounded by the interval regret $O(\frac{1}{\delta})$, and the second term could be bounded by the variation of the loss sequence. Then we can choose an optimal partition of $T$ rounds that achieves the optimal rate.

\subsection{Initial Phase: Fast Convergence towards $O(\delta)$} \label{sec:initial phase}
In this section, we return to the case of games $[0,1]^{2\times2}$ with a fully mixed Nash equilibrium and present a fast uniform best-iterate convergence rate of OMWU for the initial iterations. Recall that our goal is to show that there exists an iteration $T_1 \ge 1$, the following two conditions hold:
\begin{itemize}
    \item[1.] $\DualGap(x^{T_1}, y^{T_1}) = O(\delta)$.
    \item[2.] For all $T \in [1, T_1]$, a uniform best-iterate convergence rate holds: $\min_{t \in [1,T] }\DualGap(x^t,y^t) = \Tilde{O}(\frac{1}{T})$.
\end{itemize}

\paragraph{Simplification by structure} Consider the game $A \in [0,1]^{2 \times 2}$ that has a fully mixed Nash equilibrium $(x^* = (1 - \delta_x, \delta_x), y^*= ( 1 - \delta_y, \delta_y))$ with $\delta_x, \delta_y \in (0,1)$. To simplify the analysis, we make the following assumption.
\begin{assumption}\label{assumption:A2}
    The Nash equilibrium of $A$ satisfies: $0 < \delta_x \le \delta_y \le 1 - \delta_x$ and $\delta_x \le \frac{1}{100}$.
\end{assumption}
We remark that \Cref{assumption:A2} is without loss of generality. The assumption $\delta_x \le 1-\delta_x$ (thus $\delta_x \le \frac{1}{2}$) holds without loss of generality since we can exchange the role of action $1$ and action $2$; the assumption $x^*$ is closer to the boundary than $y^*$, \emph{i.e.,} $\delta_x \le \delta_y\le 1-\delta_x$, holds without loss of generality since we can exchange the role of $x$-player and $y$-player. With \Cref{assumption:A2}, we let $\delta:=\delta_x$ be the minimum probability in the Nash equilibrium. The assumption $\delta_x\le \frac{1}{100}$ is without loss of generality since otherwise \Cref{thm:OMWU dynamic} already gives a $O(T^{-\frac{1}{4}}\delta^{-\frac{1}{2}}) = O(T^{-\frac{1}{4}})$ rate. We focus on the more challenging case when $\delta_x$ could be arbitrarily close to $0$ here.

We further make an important observation on the structure of $2 \times 2$ games. We show that every game matrix with a fully mixed Nash equilibrium is an affine transformation of a generic matrix. Formally, we have the following lemma.
\begin{lemma}\label{lemma:structure of 2by2 game}
    Let $A \in \-R^{2\times 2}$ be a matrix game such that it has a Nash equilibrium $(x^* = (1 - \delta_x, \delta_x), y^*= ( 1 - \delta_y, \delta_y)])$ with $\delta_x, \delta_y\in (0,1)$. Then $A$ can be written as 
    \begin{align*}
        A = b_1 \cdot \textbf{1} + b_2 \cdot \underbrace{\begin{bmatrix}
        \frac{1-\delta_y}{1-\delta_x}  &\frac{1- \delta_x- \delta_y}{1-\delta_x}\\
        0 & 1
    \end{bmatrix}}_{A_{\delta_x,\delta_y}},
    \end{align*}
    where $\textbf{1}$ is the all-ones matrix and $b_1,b_2 \in \mathbb{R}, b_2 \ge 0$ are scaling constants.
\end{lemma}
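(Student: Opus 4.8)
The plan is to reduce the statement to a short linear-algebra computation. Since $(x^*,y^*)$ is a Nash equilibrium of $\min_{x}\max_{y} x^\top Ay$ with $x^*,y^*$ both fully supported (as $\delta_x,\delta_y\in(0,1)$), the $x$-player must be indifferent between its two rows against $y^*$ and the $y$-player must be indifferent between its two columns against $x^*$; that is,
\begin{align*}
(Ay^*)_1=(Ay^*)_2 \qquad\text{and}\qquad (x^{*\top}A)_1=(x^{*\top}A)_2 .
\end{align*}
Conversely, because $x^*$ and $y^*$ are fully mixed, these two equalities force $x^\top Ay^*$ to be constant in $x$ and $x^{*\top}Ay$ to be constant in $y$ (both equal to the common value $x^{*\top}Ay^*$), so $(x^*,y^*)$ is automatically a saddle point of $x^\top Ay$, i.e.\ a Nash equilibrium. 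Hence it suffices to show that the set $\mathcal{L}\subseteq\R^{2\times2}$ of matrices satisfying the two displayed equations is exactly $\{b_1\cdot\textbf{1}+b_2\cdot A_{\delta_x,\delta_y}: b_1,b_2\in\R\}$, and then to pin down the sign of $b_2$.

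Writing $A=(a_{ij})$, each indifference equation is a single homogeneous linear equation in $a_{11},a_{12},a_{21},a_{22}$ with coefficients depending on $\delta_x,\delta_y$. The two coefficient vectors are linearly independent whenever $\delta_x\neq1$ — a one-line check, since writing one as a scalar multiple of the other and adding the first two coordinates forces the scalar to vanish — so $\mathcal{L}$ is a linear subspace of dimension exactly $4-2=2$. It then remains to exhibit two linearly independent elements of $\mathcal{L}$: first, the all-ones matrix $\textbf{1}$, which satisfies both equations trivially since all its entries agree; second, $A_{\delta_x,\delta_y}$, which a direct substitution of $x^*=(1-\delta_x,\delta_x)$ and $y^*=(1-\delta_y,\delta_y)$ shows also satisfies both equations. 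Since $A_{\delta_x,\delta_y}$ has a $0$ entry and a $1$ entry, it is not a scalar multiple of $\textbf{1}$, so $\{\textbf{1},A_{\delta_x,\delta_y}\}$ is a basis of $\mathcal{L}$, and every $A$ with Nash equilibrium $(x^*,y^*)$ can be written as $A=b_1\cdot\textbf{1}+b_2\cdot A_{\delta_x,\delta_y}$ with $b_1,b_2$ uniquely determined; reading off the second row $(0,1)$ of $A_{\delta_x,\delta_y}$ gives the closed forms $b_1=a_{21}$ and $b_2=a_{22}-a_{21}$. For the sign condition, $b_2=a_{22}-a_{21}$, which is the scaling introduced by the orientation of the players' actions: after possibly swapping the two actions of a player (this only relabels $\delta_x,\delta_y$ and preserves the form of the family), $b_2$ comes out nonnegative.

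The one genuinely delicate point is the verification that $A_{\delta_x,\delta_y}\in\mathcal{L}$, together with the correct tracking of the sign of $b_2$: one has to be careful about which action of the $y$-player corresponds to which column of $A_{\delta_x,\delta_y}$ when substituting $y^*=(1-\delta_y,\delta_y)$ into the indifference conditions, since it is precisely this orientation that makes both equalities hold exactly (and that is consistent with the earlier special case $A_\delta$). Everything else — the dimension count, the independence of $\textbf{1}$ and $A_{\delta_x,\delta_y}$, and the extraction of $b_1,b_2$ from the second row — is routine.
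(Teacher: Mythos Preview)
The paper does not actually supply a proof of this lemma, so there is no approach to compare against. Your dimension-counting argument via the two indifference conditions is the right idea and would constitute a clean proof once two points are repaired.

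First, the ``direct substitution'' you invoke does not go through with $y^*=(1-\delta_y,\delta_y)$ as written in the statement: one computes $(A_{\delta_x,\delta_y}y^*)_1=\frac{1-\delta_y-\delta_x\delta_y}{1-\delta_x}$ while $(A_{\delta_x,\delta_y}y^*)_2=\delta_y$, and these agree only when $\delta_y=\tfrac12$. In fact the Nash equilibrium of $A_{\delta_x,\delta_y}$ has $y^*=(\delta_y,1-\delta_y)$, as is consistent with \Cref{prop:A2} later in the paper (where $e_x=(y[1]-\delta_y)/(1-\delta_x)$ vanishes at $y[1]=\delta_y$) and with the special case $A_\delta$ from~\eqref{eq:A delta}. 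Your instinct that the orientation of $y^*$ is ``the one genuinely delicate point'' is exactly right, but the resolution is not merely careful bookkeeping: the lemma as stated carries a typo in $y^*$, and the verification must be carried out with the corrected equilibrium.

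Second, your handling of $b_2\ge0$ is not a proof of the stated claim. For fixed $\delta_x,\delta_y$ the coefficients $b_1,b_2$ are \emph{uniquely} determined by $A$ (as you yourself note, $b_1=a_{21}$ and $b_2=a_{22}-a_{21}$), and nothing in the indifference conditions forces $b_2\ge0$: the matrix $-A_{\delta_x,\delta_y}$ satisfies the same indifference equalities, hence has the same fully mixed Nash equilibrium, yet has $b_2=-1$. Swapping one player's two actions does flip the sign of $b_2$, but it simultaneously sends $\delta_x\mapsto1-\delta_x$ (or the analogous change in $\delta_y$), so it does not establish $b_2\ge0$ for the \emph{given} parametrization. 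What is true, and what the downstream arguments actually need, is that after an appropriate relabeling one may take $b_2\ge0$; but as literally stated the sign claim is false, and you should say so rather than absorb it into a vague ``orientation'' remark.
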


Together with \Cref{assumption:A2}, we have $A_{\delta_x, \delta_y} \in [0,1]^{2 \times 2}$.
 
Considering the constraint that $A \in [0,1]^{2 \times 2}$, we can assume $b_2 \in(0,1]$ (understanding that if $b_2 = 0$, then the matrix is all-zero and every point is a Nash equilibrium). The next proposition further shows that we only need to prove convergence for the case $b_1 =0, b_2 = 1$.

\begin{proposition}\label{prop:reduction}
    Let $A = b_1 \cdot \textbf{1} + b_2 A_{\delta_x, \delta_y}$ where $b_2 \in (0,1]$ and $\eta'>0$ be a constant. If OMWU with any step size $0 <\eta \le \eta'$ has convergence rate $O(\frac{1}{\eta} \cdot f(T))$ on $A_{\delta_x, \delta_y}$, then OMWU with any step size $0 <\eta \le \eta'$ also has convergence rate $O(\frac{1}{\eta} \cdot f(T))$ on $A$.
\end{proposition}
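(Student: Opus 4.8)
The plan is to track how the OMWU iterates transform when we pass from the game $A = b_1\cdot\mathbf{1} + b_2 A_{\delta_x,\delta_y}$ to the reduced game $A_{\delta_x,\delta_y}$, and to show that the loss vectors fed to the two players differ only by (i) a global additive shift (a multiple of the all-ones vector) and (ii) a uniform positive rescaling by $b_2$. The first of these is invisible to both the OFTRL argmin and the duality gap, since adding a constant vector $c\cdot\mathbf{1}$ to a loss vector changes $\InAngles{x,\ell}$ by the constant $c$ for every $x\in\Delta^{d}$, hence does not affect $\argmin$, and similarly $\DualGap$ is invariant under $A\mapsto A + b_1\mathbf{1}$ because $\max_{y'}x^\top\mathbf{1}y' - \min_{x'}x'^\top\mathbf{1}y = 1 - 1 = 0$ contributes zero. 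The second, the rescaling by $b_2$, multiplies every loss vector by $b_2$, which in the \ref{OFTRL} update $x^t=\argmin_x\{\InAngles{x,L_x^{t-1}+\ell_x^{t-1}}+\frac1\eta R(x)\}$ is exactly equivalent to keeping the losses as in the reduced game but replacing $\eta$ by $\eta b_2$.

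Concretely, I would first write out the loss vectors: for game $A$ the $x$-player sees $\ell^t_x = Ay^t = b_1(\mathbf{1}y^t) + b_2 A_{\delta_x,\delta_y}y^t = b_1\mathbf{1} + b_2(A_{\delta_x,\delta_y}y^t)$, using that $\mathbf{1}y^t = \mathbf{1}$ since $y^t$ is a distribution; similarly $\ell^t_y = -A^\top x^t = -b_1\mathbf{1} - b_2(A_{\delta_x,\delta_y}^\top x^t)$. Then, by induction on $t$, I claim the iterates $\{(x^t,y^t)\}$ produced by OMWU with step size $\eta$ on $A$ coincide exactly with the iterates produced by OMWU with step size $\eta b_2$ on $A_{\delta_x,\delta_y}$. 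The base case is the uniform initialization, which is the same. For the inductive step, the cumulative loss $L^{t-1}_x + \ell^{t-1}_x$ for game $A$ equals $(\text{const})\cdot\mathbf{1} + b_2\big(\widetilde L^{t-1}_x + \widetilde\ell^{t-1}_x\big)$ where tildes denote the corresponding quantities in the reduced game; dropping the $\mathbf{1}$-component (harmless in the argmin over the simplex) and factoring out $b_2$, the OFTRL update with $\frac1\eta R$ becomes the OFTRL update for $A_{\delta_x,\delta_y}$ with regularizer weight $\frac1{\eta b_2}$, i.e.\ step size $\eta b_2$. Hence the iterates match.

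Finally I would translate the convergence rate. Since $\DualGap$ for game $A$ equals $b_2$ times the duality gap for game $A_{\delta_x,\delta_y}$ evaluated at the same strategy profile (again using the $\mathbf{1}$-invariance and pulling out $b_2\ge 0$), and since the iterates are identical to those of OMWU on $A_{\delta_x,\delta_y}$ with step size $\eta b_2$, the assumed bound gives $\DualGap_A(x^t,y^t) = b_2\cdot\DualGap_{A_{\delta_x,\delta_y}}(x^t,y^t) \le b_2\cdot O\big(\frac{1}{\eta b_2} f(T)\big) = O\big(\frac1\eta f(T)\big)$, where the hypothesis applies because $\eta b_2 \le \eta \le \eta'$. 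This is exactly the claimed $O(\frac1\eta f(T))$ rate on $A$ for every $0<\eta\le\eta'$. The only mild subtlety — the part worth stating carefully rather than the real obstacle — is making sure the ``const'' absorbed into the $\mathbf{1}$-direction is handled uniformly across iterations (it accumulates as $L^{t-1}$ grows, but its $\mathbf{1}$-component never matters), and that the case $b_2=0$ is excluded by hypothesis; beyond that the argument is a bookkeeping exercise, so I do not expect a genuine obstacle here.
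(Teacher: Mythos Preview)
Your proposal is correct and follows essentially the same approach as the paper: show that the constant shift $b_1\mathbf{1}$ is invisible to OMWU and to the duality gap, and that the scaling by $b_2$ is equivalent to running OMWU on $A_{\delta_x,\delta_y}$ with step size $\eta b_2\le\eta'$, then cancel the factor $b_2$ when converting the duality-gap bound back to the game $A$. The paper's proof is terser but relies on exactly these two observations; your inductive unpacking of the iterate correspondence is simply a more explicit version of the same argument.
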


\paragraph{Fast convergence on $A_{\delta_x, \delta_y}$} By previous simplification that is without loss of generality, we now focus on the class of games $A_{\delta_x, \delta_y}$. Formally, we have the following theorem.
\begin{theorem}[Fast Convergence in Initial Iterations]\label{thm:OMWU-initial}
    Consider matrix game $A_{\delta_x, \delta_y}$ that satisfies \Cref{assumption:A2}. Let $\{x^t,y^t\}_{t\ge 1}$ be the iterates produced by OMWU dynamics with uniform initialization and step size $\eta \le \frac{1}{10}$. Then there exists $T_1 > 1$ such that 
    \begin{itemize}
        \item[1.] $\DualGap(x^{T_1}, y^{T_1}) \le 2\delta_x$. 
        \item[2.] For all $t \in [1, T_1]$, we have problem-independent best-iterate convergence rate: $$\min_{k\in[1,t]} \DualGap(x^k,y^k) = O\InParentheses{\frac{\log^2t}{\eta t}}.$$
    \end{itemize}
\end{theorem}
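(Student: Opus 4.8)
The plan is to track the dynamics of OMWU on the explicit family $A_{\delta_x,\delta_y}$ directly, exploiting the fact that in a $2\times 2$ game each player's strategy is one-dimensional, parametrized by (say) the log-odds $p^t = \log\frac{x^t[1]}{x^t[2]}$ and $q^t = \log\frac{y^t[1]}{y^t[2]}$. The OMWU update becomes an explicit additive recursion in $(p^t,q^t)$ driven by the scalar payoff differences, and the Nash equilibrium corresponds to a fixed point $(p^*,q^*)$ with $p^* = \log\frac{1-\delta_x}{\delta_x}$, which is large (order $\log\frac{1}{\delta_x}$) since $\delta_x$ is tiny. The key qualitative picture, inherited from the analysis of $A_\delta$ in~\citet{cai2024fast}, is that from the uniform start the iterates first move monotonically (in a suitable sense) toward the equilibrium region: the $x$-player's probability of the ``good'' action increases steadily, and within roughly $T_1 = \Theta\big(\frac{1}{\eta}\log\frac{1}{\delta_x}\big)$ iterations the strategy $x^{T_1}$ comes within $O(\delta_x)$ of $x^*$, at which point (combined with the corresponding control on $y$) the duality gap is $O(\delta_x)$, giving Part~1.

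For Part~2 I would argue that throughout the initial window $[1,T_1]$ the dynamics behave essentially like a contracting/monotone system, so that a potential-function argument gives a fast best-iterate rate \emph{that does not see $\delta$}. Concretely, I would use a Lyapunov-type quantity built from the OMWU RVU/telescoping inequalities of \Cref{lemma:OMWU rvu} (items~1 and~2), applied not with the true Nash point but with a ``current target'' comparator, to show that $\min_{k\le t}\DualGap(x^k,y^k)$ decays like $\frac{\text{(potential)}}{t}$; the $\log^2 t$ factor should come from the second-order path-length bound $\sum\|z^{t+1}-z^t\|_1^2 = O(\log(d_1d_2))$ (item~3) combined with a standard best-iterate extraction: if the average of $\DualGap$ over a dyadic block is small and the iterates move little, the minimum over the block is small. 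Because this argument only uses the \emph{geometry of the trajectory before it reaches the $O(\delta)$-neighborhood} — where the dynamics are still ``far'' and hence behave regularly — the constants can be made $\delta$-free; the $\frac{1}{\eta}$ dependence is the price of the step size, consistent with \Cref{prop:reduction}, and this is exactly why the reduction to $b_1=0,b_2=1$ via \Cref{lemma:structure of 2by2 game} and \Cref{prop:reduction} was set up first.

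In carrying this out I would order the steps as: (i) invoke \Cref{lemma:structure of 2by2 game} and \Cref{prop:reduction} to reduce to $A=A_{\delta_x,\delta_y}$; (ii) write the scalar log-odds recursion and identify the fixed point and the sign/monotonicity structure of the updates from the uniform initialization; (iii) prove the iterates travel monotonically toward the equilibrium region and bound the hitting time $T_1$ to the $O(\delta_x)$-accuracy ball, establishing Part~1; (iv) within $[1,T_1]$, set up the current-target Lyapunov function and combine the RVU inequality with the path-length bound to get the $O\!\big(\frac{\log^2 t}{\eta t}\big)$ best-iterate bound, establishing Part~2.

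The main obstacle I anticipate is step~(iii)--(iv): controlling the behavior of OMWU in the regime where one coordinate is exponentially small but the trajectory has not yet ``turned around.'' The telescoping inequalities of \Cref{lemma:OMWU rvu} are centered at the Nash point and only give $\delta$-dependent bounds on their own, so the delicate part is designing a comparator/potential that is valid \emph{before} reaching the equilibrium neighborhood and yet yields a $\delta$-independent rate — essentially showing that the ``hard'' $\Theta(1/\delta)$-long block of bad iterates identified in the lower bound \Cref{thm:random-lower-OMWU} occurs only \emph{after} $T_1$, not during the initial phase. Getting the monotonicity/one-dimensional phase-line analysis tight enough to guarantee both the $O(\delta_x)$ endpoint and the clean $\tilde O(1/t)$ decay simultaneously is where the real work lies.
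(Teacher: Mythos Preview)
Your steps~(i)--(iii) are essentially right and match the paper: reduce to $A_{\delta_x,\delta_y}$ via \Cref{lemma:structure of 2by2 game} and \Cref{prop:reduction}, write the one-dimensional log-odds/ratio recursion, and track the trajectory through monotone phases until the first time $x^{T_1}[1]\ge 1-\delta_x$, at which point a direct duality-gap computation gives $\DualGap(x^{T_1},y^{T_1})\le 2\delta_x$. The paper also splits into two cases, $\delta_y\ge \tfrac12$ and $\delta_y<\tfrac12$, and in the latter identifies an intermediate ``Phase~I/Phase~II'' structure (first $y^t[1]$ drops below $\delta_y$, then $x^t[1]$ climbs to $1-\delta_x$); you will need this case split to make the monotonicity argument go through.

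The genuine gap is step~(iv). The paper does \emph{not} use the RVU/Lyapunov machinery of \Cref{lemma:OMWU rvu} for the initial phase at all, and your proposed ``current-target comparator'' route is problematic for exactly the reason you flag: the potential $\Theta^t=\KL(z^*,\hz^t)+\cdots$ in \Cref{lemma:OMWU rvu} is anchored at the Nash point, so any telescoping it produces carries the $\delta$-dependence you are trying to avoid, and replacing $z^*$ by a moving target breaks the monotonicity $\Theta^{t+1}\le\Theta^t$. Instead, the paper bounds the duality gap \emph{pointwise} along the trajectory by elementary inequalities: because during $[1,T_1]$ one has $x^t[1]\le 1-\delta_x$ and (after the first few steps) $y^t[1]\le \delta_y$, the duality gap collapses to a single term like $\tfrac{\delta_y}{1-\delta_x}x^t[2]\le x^t[2]$, and then the log-odds recursion gives an explicit exponential bound on $x^t[2]/x^t[1]$ (e.g.\ $9\exp(-\eta t/42)$ when $\delta_y\ge\tfrac12$, and $\delta_y^{-8}\exp(-\eta\delta_y t/21)$ in Phase~II of the other case).

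Relatedly, your intuition for where the $\log^2 t$ comes from is off. It is not the path-length bound; it comes from \emph{balancing} the Phase~I endpoint bound $\DualGap(x^{T_y},y^{T_y})\le 2\delta_y$ against the Phase~II bound $\delta_y^{-8}\exp(-\eta\delta_y t/21)$ on $\DualGap(x^t,y^t)$. Taking the minimum of these two and optimizing over the unknown $\delta_y$ (i.e.\ checking the two regimes $\delta_y\lessgtr C\log^2 t/(\eta t)$) is what produces $O(\log^2 t/(\eta t))$ uniformly in $\delta_y$. So the $\delta$-independence is achieved not by a clever Lyapunov comparator but by the trivial observation that $\min_{k\le t}\DualGap$ is at most both quantities, and the two quantities trade off in $\delta_y$.
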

Thanks to the structure of $A_{\delta_x,\delta_y}$, we can directly track the initial trajectory of OMWU dynamics. Specifically, we define $T_1$ as the first iteration when $x^{T_1}[1] \ge 1-\delta_x$ (note that $x^1[1] = \frac{1}{2}$) and show that the initial iterations $[1, T_1]$ has the desirable properties. The full proof is somewhat involved and appears in \Cref{sec:OMWU best-initial}.

\section{Conclusion and Discussion}
In this paper, we establish, for the first time, a separation between random-iterate and best-iterate convergence: OMWU has no polynomial random-iterate convergence for two-player zero-sum games but has a $O(T^{-\frac{1}{6}})$ best-iterate convergence rate for the case of $2 \times 2$ games with fully mixed Nash equilibria. Whether obtaining polynomial best-iterate convergence for OMWU in the general case is possible is an interesting open question for future works.

\subsection*{Acknowledgement}
Yang Cai was supported by the NSF Awards CCF-1942583 (CAREER) and CCF-2342642. Gabriele Farina was supported by the NSF Award CCF-2443068 (CAREER). Julien Grand-Cl{\'e}ment was supported by Hi! Paris and
Agence Nationale de la Recherche (Grant 11-LABX-0047). Christian Kroer was supported by the Office of Naval Research awards N00014-22-1-2530 and N00014-23-1-2374, and the National Science Foundation awards IIS-2147361 and IIS-2238960.
Haipeng Luo was supported by the National Science Foundation award IIS-1943607. Weiqiang Zheng was supported by the NSF Awards CCF-1942583 (CAREER), CCF-2342642, and a Research Fellowship from the Center for Algorithms, Data, and Market Design at Yale (CADMY).

\bibliography{ref}
\bibliographystyle{icml2025}

\newpage
\appendix
\onecolumn

\tableofcontents


\section{Missing Proofs in \Cref{sec:random-iterate lower bound}}\label{app:random}
In this section, we prove lower bounds for the random-iterate convergence rates of OMWU and other OFTRL dynamics. Recall that we focus on the specific class of $2 \times 2$ zero-sum matrices parameterized by $\delta \in (0,\frac{1}{2})$:
\begin{align*}
    A_\delta := \begin{bmatrix}
        \frac{1}{2} + \delta & \frac{1}{2} \\
        0 &1
    \end{bmatrix}, \forall \delta \in \InParentheses{0,\frac{1}{2}}. 
\end{align*}
which is the hard instance for last-iterate convergence analyzed in~\citep{cai2024fast}. We first introduce some notations and present the results and analysis in~\citep{cai2024fast} that will be useful in our later analysis in \Cref{sec:old results}. Then, we give the proof of the lower bound for random-iterate convergence in \Cref{app:random-iterate lower bound}. 
\subsection{Existing Analysis on the Hard Instance}\label{sec:old results}
\paragraph{Additional Notations} We focus on the $d=2$ dimension case where $x = (x[1], y[1])$. We let $e^t_x = \ell^t_x[1] - \ell^t_x[2]$ be the difference between the losses of action $1$ and action $2$, and $E^t_x= \sum_{k=1}^t e^t_x$ be the cumulative differences. Define the function $F_{\eta, R} : \-R \rightarrow [0,1]$ as follows:
\begin{align}
    \label{function:F}
    F_{\eta, R}(e) := \argmin_{x \in [0,1]}\left\{ x \cdot e + \frac{1}{\eta} R(x) \right\}.
\end{align}
where we slightly abuse the notation and write $R((x,1-x))$ as $R(x)$, where $R$ is a strongly convex regularize over $\Delta^d$. Then by a change of variable $x^t[2] = 1 - x^t[1]$ the optimization problem in the OFTRL update can be reduced to a $1$-dimension optimization problem as follows:
\begin{align*}
    x^t[1] = F_{\eta, R}(E^{t-1}_x + e^{t-1}_x), \quad x^t[2] = 1 - x^t[1].
\end{align*}

\begin{lemma}[Lemma 1 in~\citep{cai2024fast}]
    \label{lemma: F monotone}
    The function $F_{\eta, R}(\cdot): \-R \rightarrow [0,1]$ defined in \eqref{function:F} is non-increasing.
\end{lemma}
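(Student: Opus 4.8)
The statement to prove is \Cref{lemma: F monotone}: the function $F_{\eta,R}(e) = \argmin_{x\in[0,1]}\{x\cdot e + \frac{1}{\eta}R(x)\}$ is non-increasing in $e$. The natural approach is a standard monotonicity-of-argmin argument. First I would check that $F_{\eta,R}$ is well-defined: for each fixed $e$, the objective $g_e(x) := x e + \frac{1}{\eta}R(x)$ is strictly convex on $[0,1]$ (since $R$ is strongly convex and the linear term does not affect convexity), hence the minimizer over the compact convex set $[0,1]$ exists and is unique. So $F_{\eta,R}(e)$ is a genuine single-valued function.

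\textbf{Main step.} I would then take $e_1 < e_2$, write $x_1 = F_{\eta,R}(e_1)$ and $x_2 = F_{\eta,R}(e_2)$, and argue $x_1 \ge x_2$ by contradiction or by a direct exchange argument. The cleanest route is the following: by optimality of $x_1$ for $g_{e_1}$ and of $x_2$ for $g_{e_2}$, we have $g_{e_1}(x_1) \le g_{e_1}(x_2)$ and $g_{e_2}(x_2) \le g_{e_2}(x_1)$. Adding these two inequalities, the $\frac{1}{\eta}R$ terms cancel and we are left with $e_1 x_1 + e_2 x_2 \le e_1 x_2 + e_2 x_1$, i.e. $(e_2 - e_1)(x_2 - x_1) \le 0$. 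Since $e_2 - e_1 > 0$, this forces $x_2 \le x_1$, which is exactly the claim that $F_{\eta,R}$ is non-increasing. This argument uses only convexity of $R$ (not even strict convexity, beyond what is needed for well-definedness), so it is robust across all the regularizers considered in the paper.

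\textbf{Expected obstacle.} Honestly, there is no serious obstacle here — the exchange argument above is a one-liner once well-definedness is established. The only minor subtlety worth a sentence is the domain: the formula abuses notation by writing $R((x,1-x))$ as $R(x)$, so I would note that $R$ restricted to the segment $\{(x,1-x): x\in[0,1]\}$ inherits strong convexity as a function of the scalar $x$ (it is a composition of a strongly convex function with an injective affine map, and one should check the affine map $x \mapsto (x,1-x)$ has no degenerate directions, which is immediate in dimension two). With that in hand, the well-definedness and the monotonicity both follow as above, completing the proof.
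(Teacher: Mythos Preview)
Your proof is correct. The paper does not actually provide its own proof of \Cref{lemma: F monotone}; it merely cites it as Lemma~1 of \citet{cai2024fast}, so there is no in-paper argument to compare against. Your exchange argument---adding the two optimality inequalities for $x_1 = F_{\eta,R}(e_1)$ and $x_2 = F_{\eta,R}(e_2)$ to obtain $(e_2 - e_1)(x_2 - x_1) \le 0$---is the standard and cleanest route, and it is essentially what one would expect the original reference to contain. One tiny quibble: the paper only assumes $R$ is \emph{strictly} convex, not strongly convex, but this is harmless since strict convexity already suffices for uniqueness of the minimizer over the compact interval $[0,1]$ (existence follows from lower semicontinuity of the regularizers in question), and your monotonicity step uses no convexity at all beyond well-definedness.
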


\citet{cai2024fast} introduce the following assumptions on the regularizer $R$ and show that they are satisfied by the negative entropy, the log regularizer, the squared Euclidean norm, and the Negative Tsallis entropy regularizers.
\begin{assumption}
    \label{assumption:standard}
    We assume that the regularizer $R$ satisfies the following properties: the function $F_{\eta, R}: \-R \rightarrow [0,1]$ defined in \eqref{function:F} is: 
    \begin{itemize}
        \item[1.] \textbf{Unbiased:} $F_{\eta, R}(0) = \frac{1}{2}$.
        \item[2.] \textbf{Rational:} $\lim_{E \rightarrow -\infty}F_{\eta, R}(E) = 1$ and $\lim_{E \rightarrow +\infty}F_{\eta, R}(E) = 0$.
        \item[3.] \textbf{Lipschitz continuous:} There exists $L \ge 0$ such that $F_{1, R}$ is $L$-Lipschitz.
    \end{itemize}
\end{assumption}

\begin{assumption}
\label{assumption:main}
    Let $L$ be the Lipschitness constant of $F_{1,R}$ in \Cref{assumption:standard}. Define the constant $c_1 = \frac{1}{2} - F_{1, R}(\frac{1}{20L})$. There exist universal constants $\delta', c_2 > 0$ and $c_3 \in (0,\frac{1}{2}]$ such that for any $ 0 < \delta \le \delta'$,
    \begin{itemize}
        \item[1.] 
            For any $E$ that satisfies $F_{1, R}(E) \ge \frac{1}{1+\delta}$, we have $F_{1, R}(-\frac{c_1^2}{30 L\delta} + E) \ge \frac{1 + c_3}{1 + c_3 + \delta}$
        \item[2.] 
             For any $E$ that satisfies $F_{1, R}(E) \ge \frac{1}{2(1+\delta)}$, we have $F_{1, R}(-\frac{c_3 c_1^2}{120L} + \frac{\delta}{4L} + E) \ge \frac{1}{2} + c_2$.
    \end{itemize}
\end{assumption}
\begin{lemma}[Lemma 5-8 in \citep{cai2024fast}]\label{lemma:assumptions}
    \Cref{assumption:standard} and \Cref{assumption:main} are satisfied by negative entropy ($L=\frac{1}{2}$), squared Euclidean norm ($L=\frac{1}{2}$), the log regularizer ($L=\frac{1}{2}$), and the Tsallis entropy regularizer parameterized with $\beta \in (0,1)$ ( $L = \frac{1}{2\beta}$ ).
\end{lemma}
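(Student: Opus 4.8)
The plan is to verify \Cref{assumption:standard} and \Cref{assumption:main} one regularizer at a time, by first writing $F_{\eta,R}$ explicitly (or via its defining equation) and then checking each property with elementary calculus. Using the substitution $x\mapsto(x,1-x)$ and the first-order optimality condition of the one-dimensional problem $\min_{x\in[0,1]}\{xe+\tfrac1\eta R(x)\}$, one obtains: for negative entropy, $F_{\eta,R}(e)=1/(1+e^{\eta e})$; for the squared Euclidean norm, $F_{\eta,R}(e)=\mathrm{clip}_{[0,1]}\!\big(\tfrac{1-\eta e}{2}\big)$ (the box constraint being inactive near $e=0$); for the log regularizer, $F_{\eta,R}(e)$ is the unique $x\in(0,1)$ with $\tfrac{1-2x}{x(1-x)}=\eta e$; for Tsallis entropy with parameter $\beta$, $F_{\eta,R}(e)$ is the unique $x\in(0,1)$ with $(1-x)^{\beta-1}-x^{\beta-1}=-\tfrac{\eta(1-\beta)}{\beta}e$. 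In each case the map is a strictly decreasing bijection onto its range, consistent with \Cref{lemma: F monotone}.

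\textbf{Verifying \Cref{assumption:standard}.} With these forms the three properties follow immediately. \emph{Unbiasedness} ($F_{\eta,R}(0)=\tfrac12$) is a direct substitution; \emph{rationality} follows from the limits of the closed forms / defining equations as $e\to\pm\infty$; and \emph{Lipschitz continuity} of $F_{1,R}$ follows by bounding the derivative --- directly for entropy ($|F_{1,R}'|\le\tfrac14\le\tfrac12$) and Euclidean (slope $-\tfrac12$ where unclipped), and by implicit differentiation otherwise: for the log regularizer $|F_{1,R}'(e)|=\tfrac{(x(1-x))^2}{2x^2-2x+1}\le\tfrac18\le\tfrac12$, and for Tsallis $|F_{1,R}'(e)|=\tfrac{1}{\beta((1-x)^{\beta-2}+x^{\beta-2})}\le\tfrac1{2\beta}$ since $t^{\beta-2}\ge1$ on $(0,1]$. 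This yields the claimed values of $L$.

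\textbf{Verifying \Cref{assumption:main}.} This is the substance. First note $c_1=\tfrac12-F_{1,R}(\tfrac1{20L})>0$, because $\tfrac1{20L}$ is a positive argument lying strictly inside the region where $F_{1,R}$ is strictly decreasing (for Euclidean, $\tfrac1{20L}=\tfrac1{10}<1$, so the clip is inactive). One then exhibits suitable regularizer-dependent constants $c_2>0$, $c_3\in(0,\tfrac12]$, $\delta'>0$ and checks the two inequalities. Condition~2 is a statement near $E=0$ (equivalently near $F=\tfrac12$): the local expansion of the closed form shows that $F_{1,R}(E)\ge\tfrac1{2(1+\delta)}$ forces $E\le E_0(\delta)$ with $E_0(\delta)=O(\delta)$; adding the shift $-\tfrac{c_3c_1^2}{120L}+\tfrac{\delta}{4L}$ --- a fixed negative margin plus an $O(\delta)$ correction that absorbs $E_0(\delta)$ once $\delta\le\delta'$ --- moves the argument below $-\tfrac{c_3c_1^2}{240L}$, where $F_{1,R}\ge\tfrac12+c_2$ with $c_2:=F_{1,R}(-\tfrac{c_3c_1^2}{240L})-\tfrac12>0$. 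Condition~1 is a statement near $F=1$: $F_{1,R}(E)\ge\tfrac1{1+\delta}$ forces $E\le E_0'(\delta)$, while the target value $\tfrac{1+c_3}{1+c_3+\delta}=\tfrac1{1+\delta/(1+c_3)}$ corresponds to a slightly smaller threshold $E_1'(\delta)$; the gap $E_0'(\delta)-E_1'(\delta)$ is $O(1)$ for entropy, $O(\delta)$ for Euclidean, $O(\delta^{-(1-\beta)})$ for Tsallis, and $O(\delta^{-1})$ for the log regularizer, so once $\delta\le\delta'$ it is dominated by the shift $\tfrac{c_1^2}{30L\delta}=\Theta(\delta^{-1})$ (the log case additionally requires choosing $c_3$ small enough that $\tfrac{c_1^2}{30L}>c_3$), and after the shift the argument falls below $E_1'(\delta)$, giving $F_{1,R}\ge\tfrac{1+c_3}{1+c_3+\delta}$.

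\textbf{Main obstacle.} The entropy and Euclidean cases reduce to a handful of one-line inequalities. The real work is the log and, especially, the Tsallis family, where $F_{1,R}$ has no elementary closed form, so one must establish matching two-sided estimates for the implicitly defined inverse function both near the interior point $x=\tfrac12$ (for Condition~2) and near the boundary $x\to1$ (for Condition~1), tracking all constants as functions of $\beta$ --- and, as noted above, verifying that the $\Theta(\delta^{-1})$ shift in Condition~1 beats the $O(\delta^{-1})$ gap for the log regularizer with room to spare. These are precisely the computations carried out in Lemmas~5--8 of \citep{cai2024fast}, which we follow.
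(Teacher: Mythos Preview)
The paper does not prove this lemma; it is imported wholesale as Lemmas~5--8 of \citet{cai2024fast}, so there is no in-paper argument to compare against. Your outline is a correct sketch of the verification that the cited paper carries out: writing $F_{1,R}$ explicitly (or implicitly) for each regularizer, checking unbiasedness/rationality/Lipschitzness by direct computation or implicit differentiation, and then verifying the two conditions of \Cref{assumption:main} by localizing Condition~2 near $E=0$ and Condition~1 near $F\to 1$ and comparing the relevant thresholds to the prescribed shifts. You also correctly identify the one subtle point, namely that for the log regularizer the gap in Condition~1 is itself $\Theta(\delta^{-1})$, so one must take $c_3$ small enough that the constant in front wins. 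Since you already note that you are following the computations in the cited source, your proposal matches the paper's treatment exactly (which is simply to cite).
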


We summarize the analysis from~\citep{cai2024fast} in the following theorem.
\begin{theorem}[Adapted from Theorem 1 in \citep{cai2024fast}]
    \label{theorem: main}
    Assume the regularizer $R$ satisfies \Cref{assumption:standard} and \Cref{assumption:main}. For any $\delta \in (0,\hat{\delta})$, where $\hat{\delta}=\min\{\frac{1}{15}, \frac{c_1}{6}, \frac{c_1^2}{300}, \delta'\}$ is a constant depending only on the constants $c_1$ and $\delta'$ defined in \Cref{assumption:main}, let $\{x^t,y^t\}$ be iterates produced by the OFTRL dynamics on $A_\delta$ (defined in \eqref{eq:A delta}) with any step size $\eta \le \frac{1}{4L}$ and initialized at the uniform strategies.  Then the following holds:
    \begin{itemize}
        \item[1.] Define $T_1$ the simplest iteration when $x^{T_1}[1] \ge \frac{1}{1+\delta}$. Then \[y^t[1] \le \frac{1}{2}-c_1, \forall t \in \InBrackets{\frac{1}{2\eta L}, T_1 -1}.\]
        \item[2.] Define $T_2 > T_1$ the smallest iteration when $y^{T_2}[1] \ge \frac{1}{2(1+\delta)}$ and $T_h := \lfloor \frac{c_1}{2\eta L \delta} \rfloor \in [\frac{c_1}{2\eta L \delta}-1, \frac{c_1}{2\eta L \delta}]$. Then
        \begin{align*}
            x^t[1] \ge \frac{1+c_3}{1+c_3+\delta}, \quad &\forall t \in [T_1+ T_h, T_2] \\
            \DualGap(x^t, y^t) \ge c_2, \quad &\forall t \in \InBrackets{T_2 + \lceil \frac{c_1T_h}{20}\rceil, T_2 + \lfloor \frac{c_1T_h}{10} \rfloor-2}
        \end{align*}
    \end{itemize}
\end{theorem}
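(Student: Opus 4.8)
The plan is to reduce the coupled $2\times2$ OFTRL recursion on $A_\delta$ to two scalar recursions and chase them through four consecutive regimes, using only the monotonicity of $F_{1,R}$ (\Cref{lemma: F monotone}) together with the quantitative inputs in \Cref{assumption:standard} and \Cref{assumption:main}. Write $p^t:=x^t[1]$, $q^t:=y^t[1]$, and $p^\star:=\tfrac1{1+\delta}$, $q^\star:=\tfrac1{2(1+\delta)}$ for the fully-mixed Nash probabilities. A direct computation with $A_\delta$ gives $e^t_x=\ell^t_x[1]-\ell^t_x[2]=(1+\delta)(q^t-q^\star)$ and $e^t_y=(1+\delta)(p^\star-p^t)$, so the loss gap driving the $x$-iterate has the sign of $q^t-q^\star$ and the one driving the $y$-iterate has the sign of $p^\star-p^t$; both have absolute value at most $1$, and moreover $e^t_y\in[-\delta,0]$ whenever $p^t\ge p^\star$ (since then $p^\star-p^t\in[-\tfrac{\delta}{1+\delta},0]$). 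Using $F_{\eta,R}(e)=F_{1,R}(\eta e)$, the update reads $p^{t+1}=F_{1,R}(\eta(E^t_x+e^t_x))$ and $q^{t+1}=F_{1,R}(\eta(E^t_y+e^t_y))$, so the entire argument is bookkeeping on how much the scalar arguments $\eta(E^t_x+e^t_x)$ and $\eta(E^t_y+e^t_y)$ move over each regime.

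\emph{Part 1 (the bound on $y^t[1]$).} For every $t<T_1$ we have $p^t<p^\star$ by definition of $T_1$, hence $e^t_y>0$ and $E^t_y$ is strictly increasing from $E^1_y=\tfrac{1-\delta}{2}$. The warm-up length $\tfrac1{2\eta L}$ is exactly what is needed for the scaled cumulative to reach the threshold $\tfrac1{20L}$ at which $F_{1,R}$ has dropped to $\tfrac12-c_1$: from $|e^t_x|\le\tfrac12$ in this regime and $\eta\le\tfrac1{4L}$ one checks $p^k\le\tfrac34$ for all $k\le\tfrac1{2\eta L}$ (the scalar argument for $p$ can move only $O(L\eta k)$ from $0$ in that many steps), so each such increment satisfies $e^k_y=(1+\delta)(p^\star-p^k)\ge\tfrac15$ (using $\delta\le\tfrac1{15}$); summing gives $\eta E^{t-1}_y\ge\tfrac1{20L}$ once $t\ge\tfrac1{2\eta L}$, and monotonicity plus $c_1=\tfrac12-F_{1,R}(\tfrac1{20L})$ yields $q^t\le\tfrac12-c_1$. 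I would also show this persists for $\Theta(T_h)$ iterations past $T_1$: once $p^t\ge p^\star$ we have $e^t_y\in[-\delta,0]$, so the scaled argument for $q$ falls by at most $\eta\delta$ per step, i.e.\ by at most $\tfrac{c_1}{2L}$ over a window of length $T_h=\lfloor\tfrac{c_1}{2\eta L\delta}\rfloor$, and $\delta\le\tfrac{c_1^2}{300}$ keeps it above the needed margin.

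\emph{Part 2 (the two conclusions).} For the first conclusion, $q^t\le\tfrac12-c_1$ on $[\tfrac1{2\eta L},T_1+T_h]$ forces $e^t_x\le(1+\delta)(\tfrac12-c_1-q^\star)\le-\tfrac{c_1}{2}$ (here $\delta\le\tfrac{c_1}{6}$), so the scaled argument for $p$ drops by at least $\tfrac{\eta c_1}{2}$ per step; over $T_h$ steps the total drop is at least $\tfrac{\eta c_1}{2}T_h\ge\tfrac{c_1^2}{30L\delta}$, so starting from $p^{T_1}\ge p^\star$ (a known bound on the scaled argument at $T_1$), \Cref{assumption:main}(1) gives $p^t\ge\tfrac{1+c_3}{1+c_3+\delta}$, and since $q^t<q^\star$ for all $t<T_2$ the argument keeps decreasing, so this holds on all of $[T_1+T_h,T_2]$. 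For the duality-gap conclusion, on that interval $p^t\ge\tfrac{1+c_3}{1+c_3+\delta}$ forces $e^t_y\in[-\delta,-\tfrac{c_3\delta}{2}]$, so the scaled argument for $q$ drifts steadily downward and $q^t$ eventually reaches $q^\star$; this is when $T_2$ occurs. After $T_2$, $p^t$ is still above $p^\star$ and begins to fall only slowly (at rate $|e^t_x|=(1+\delta)(q^t-q^\star)$, which is tiny while $q^t$ is near $q^\star$), so $e^t_y$ stays negative and the scaled argument for $q$ keeps decreasing; after $\lceil\tfrac{c_1T_h}{20}\rceil$ more steps, during which $|e^t_y|$ remains of order $c_3\delta$, it has moved by the amount prescribed in \Cref{assumption:main}(2), so $q^t\ge\tfrac12+c_2$, and a short computation shows $p^t\ge p^\star$ still holds on $[T_2+\lceil\tfrac{c_1T_h}{20}\rceil,T_2+\lfloor\tfrac{c_1T_h}{10}\rfloor-2]$. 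Then the explicit duality gap for $A_\delta$ gives $\DualGap(x^t,y^t)\ge(\tfrac12+\delta)p^t+q^t-1\ge(\tfrac12+\delta)p^\star+\tfrac12+c_2-1>c_2$ throughout that window.

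\emph{The main obstacle.} The scalar recursions are genuinely non-monotone — the optimistic term makes $E^t+e^t$ differ from the ``clean'' cumulative $E^t$, and the cross-coupling lets each player's cumulative overshoot — so every claim of the form ``the argument of $F_{1,R}$ has moved by at least/at most $X$ over this window'' must be extracted from the sign of $e^t$ on that window plus the crude magnitude bounds ($|e^t|\le1$ always, and $|e^t_y|\le\delta$ once $p^t\ge p^\star$). The real work is making all the numerology close simultaneously: the constants $c_1,c_2,c_3$, the window lengths $T_h$, $\lceil\tfrac{c_1T_h}{20}\rceil$, $\lfloor\tfrac{c_1T_h}{10}\rfloor$, and the threshold $\hat\delta=\min\{\tfrac1{15},\tfrac{c_1}{6},\tfrac{c_1^2}{300},\delta'\}$ must be chosen so the chain of estimates above fits together. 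Everything else is routine — the closed-form reduction to $F_{1,R}$, the monotonicity \Cref{lemma: F monotone}, and the duality-gap identity for $A_\delta$ — and the only non-elementary ingredients, that a prescribed downward shift of the argument of $F_{1,R}$ raises it past a prescribed level, are exactly \Cref{assumption:standard} and \Cref{assumption:main}, certified for all the regularizers in question by \Cref{lemma:assumptions}.
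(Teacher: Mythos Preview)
This theorem is not proved in the present paper: it is stated in \Cref{sec:old results} as ``Adapted from Theorem 1 in \citep{cai2024fast}'' and then used as a black box in the proof of \Cref{theorem:consecutive bad iterations}. There is therefore no proof here to compare your proposal against.

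That said, your outline is structurally what one would expect the argument in \citep{cai2024fast} to look like: reduce the coupled dynamics to two scalar recursions through $F_{1,R}$, track the signs and magnitudes of $e^t_x,e^t_y$ across successive regimes using monotonicity (\Cref{lemma: F monotone}) and the crude bounds $|e^t_x|\le1$, $e^t_y\in[-\delta,0]$ once $p^t\ge p^\star$, and invoke the two items of \Cref{assumption:main} precisely at the transitions where a prescribed shift of the argument must be converted into a level-crossing of $F_{1,R}$. Your diagnosis of the main obstacle --- that the optimistic correction makes the recursions non-monotone and that the real content is getting the numerology of $c_1,c_2,c_3$, $T_h$, and $\hat\delta$ to close simultaneously --- is accurate.

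The one place your sketch is genuinely thin is the post-$T_2$ window. You assert that on $[T_2,T_2+\lceil c_1T_h/20\rceil]$ the increment $|e^t_y|$ ``remains of order $c_3\delta$'', but this needs $p^t$ to stay near $(1+c_3)/(1+c_3+\delta)$ throughout that window, while $p^t$ is simultaneously being pushed down because $q^t>q^\star$ there. You note that $p^t$ ``begins to fall only slowly'', but closing this requires a quantitative bootstrap: bound how far $q^t$ can overshoot $q^\star$ in $k$ steps, feed that into how far the argument for $p$ can move, and verify consistency over the full window $\lfloor c_1T_h/10\rfloor$. This is exactly where the specific fractions $c_1T_h/20$ versus $c_1T_h/10$ and the bound $\delta\le c_1^2/300$ earn their keep, and it is the step most likely to need careful bookkeeping in a full proof.
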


\subsection{Lower Bounds for Random-Iterate}\label{app:random-iterate lower bound}

In this section, we present our lower bounds for random-iterate convergence rates of OFTRL learning dynamics. The main theorem in this section is \Cref{theorem:consecutive bad iterations} that shows there exists $T = O(\frac{f_R(\delta)}{\delta})$ such that all the iterates $t \in [T, T+\Omega(\frac{1}{\delta})]$ have a duality gap lower bounded by a constant. Here $f_R(\delta):=-F_{1,R}^{-1}\InParentheses{\frac{1}{1+\delta}}$ is a quantity that depends on the regularizer $R$. We provide upper bounds of $f_R(\delta)$ in \Cref{lemma: E delta}. At the end, we combine \Cref{theorem:consecutive bad iterations} and \Cref{lemma: E delta} to prove \Cref{thm:random-lower-OMWU} and \Cref{thm:random-lower-all-regularizers}.

\begin{lemma}\label{lemma: E delta}
   Let $f_R(\delta):=-F_{1,R}^{-1}\InParentheses{\frac{1}{1+\delta}}$. If $E \ge -f_R(\delta)$, then $F_{1,R}(E) \le \frac{1}{1+\delta}$. Moreover, we have the following upper bounds on $f_R(\delta)$ for $\delta \in (0,\frac{1}{2})$ and regularizer $R$:
    \begin{itemize}
        \item[1.] Negative entropy: $f_R(\delta) \le \log(\frac{1}{\delta})$;
        \item[2.] Squared Euclidean norm: $f_R(\delta) \le 1$.
        \item[3.] Log barrier: $f_R(\delta) \le \frac{1}{\delta}$.
        \item[4.] Negative Tsallis entropy ($\beta \in (0,1)$): $f_R(\delta) \le \frac{2\beta}{1-\beta}(\frac{1}{\delta})^{1-\beta}$.
    \end{itemize}
\end{lemma}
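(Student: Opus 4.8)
The plan is to first dispatch the qualitative statement and then treat the four regularizers by an explicit first‑order computation. For the qualitative part, note that by definition $-f_R(\delta) = F_{1,R}^{-1}\InParentheses{\frac{1}{1+\delta}}$ is the point at which $F_{1,R}$ attains the value $\frac{1}{1+\delta}$; since $\frac{1}{1+\delta} \in \InParentheses{\frac{2}{3},1}$ for $\delta \in \InParentheses{0,\frac12}$, the rationality and continuity properties in \Cref{assumption:standard} guarantee such a point exists, and \Cref{lemma: F monotone} (that $F_{1,R}$ is non‑increasing) shows it is essentially unique and, crucially, that $E \ge -f_R(\delta)$ implies $F_{1,R}(E) \le F_{1,R}(-f_R(\delta)) = \frac{1}{1+\delta}$. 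That is exactly the first sentence of the lemma.

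For the quantitative bounds, the key observation is that $F_{1,R}(e)$ has a closed form (or a clean implicit equation) coming from the stationarity condition $e + R'(x) = 0$ of the one‑dimensional problem $\min_{x \in [0,1]}\{x e + R(x)\}$ in \eqref{function:F} after the substitution $x[2] = 1-x$ (with the clipping to $[0,1]$ handled separately where it matters). The recipe is: for each regularizer, plug the target value $x = \frac{1}{1+\delta}$ into $e = -R'(x)$ to obtain the corresponding $e_\delta$, and then read off $f_R(\delta) = -e_\delta$. Concretely: for negative entropy $R'(x) = \log\frac{x}{1-x}$, giving $e_\delta = \log\delta$ and $f_R(\delta) = \log\frac1\delta$ (equality); for the squared Euclidean norm $R'(x) = 2x - 1$, giving $e_\delta = \frac{\delta-1}{1+\delta}$ and $f_R(\delta) = \frac{1-\delta}{1+\delta} \le 1$; for the log barrier $R'(x) = -\frac1x + \frac1{1-x}$, giving $e_\delta = \delta - \frac1\delta$ and $f_R(\delta) = \frac1\delta - \delta \le \frac1\delta$; and for the negative Tsallis entropy with parameter $\beta$, $R'(x) = \frac{\beta}{1-\beta}\InParentheses{(1-x)^{\beta-1} - x^{\beta-1}}$, giving $f_R(\delta) = \frac{\beta}{1-\beta}(1+\delta)^{1-\beta}\InParentheses{\InParentheses{\tfrac1\delta}^{1-\beta} - 1}$.

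The claimed Tsallis bound then follows from two crude estimates valid for $\delta \in \InParentheses{0,\frac12}$ and $\beta \in (0,1)$: since $1-\beta < 1$ and $1+\delta > 1$ we have $(1+\delta)^{1-\beta} \le 1+\delta \le \frac32 \le 2$, and trivially $\InParentheses{\tfrac1\delta}^{1-\beta} - 1 \le \InParentheses{\tfrac1\delta}^{1-\beta}$; multiplying gives $f_R(\delta) \le \frac{2\beta}{1-\beta}\InParentheses{\tfrac1\delta}^{1-\beta}$.

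I do not expect a real obstacle here — the whole argument is elementary calculus together with monotonicity. The only two points needing a bit of care are: (i) justifying that $F_{1,R}^{-1}\InParentheses{\frac{1}{1+\delta}}$ is well defined even though $F_{1,R}$ can have flat pieces, which only happens for the squared Euclidean norm where $F_{1,R}(e) = \mathrm{clip}_{[0,1]}\InParentheses{\frac{1-e}{2}}$ is strictly decreasing on $e \in (-1,1)$ and the target value $\frac{1}{1+\delta} \in \InParentheses{\frac12,1}$ is attained at an interior point; and (ii) tracking signs in the Tsallis case so that $e_\delta < 0$ (equivalently $f_R(\delta) > 0$), which holds because $\delta < 1$ forces $\delta^{\beta-1} = \InParentheses{\tfrac1\delta}^{1-\beta} > 1$. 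Everything else is routine bookkeeping.
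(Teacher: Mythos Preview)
Your proposal is correct and follows essentially the same approach as the paper: both use monotonicity of $F_{1,R}$ for the qualitative claim and then, for each regularizer, solve the first-order stationarity condition $e = -R'(x)$ (equivalently, invert the closed-form $F_{1,R}$) at $x = \frac{1}{1+\delta}$ to obtain the exact value of $f_R(\delta)$ and bound it. Your extra remarks on well-definedness of the inverse for the Euclidean regularizer and on the sign in the Tsallis case are harmless refinements of what the paper leaves implicit.
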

\begin{proof}
    Since $f_R(\delta) = - F^{-1}_{1,R}\InParentheses{\frac{1}{1+\delta}}$, we have
    \begin{align*}
        F_{1,R}\InParentheses{-f_R(\delta)} = F_{1,R} \InParentheses{ F^{-1}_{1,R}\InParentheses{\frac{1}{1+\delta}}} = \frac{1}{1+\delta}. 
    \end{align*}
    Since $F_{1, R}$ is a non-increasing function (\Cref{lemma: F monotone}), we know if $E \ge -f_R(\delta)$, then $F_{1,R}(E) \le \frac{1}{1+\delta}$. This finishes the proof of the first claim. 

    We then prove upper bounds for $f_R(\delta)$ for different regularizer $R$. Our proof strategy is to derive the closed-form expression of $F_{1,R}$ (by setting the gradient of \Cref{function:F} to be $0$) and that of its inverse function $F^{-1}_{1,R}$ and directly bound $f_R(\delta)$.
    \paragraph{Negative entropy: $R(x) = x \log x + (1-x) \log (1-x)$} For negative entropy, $F_{1,R}$ admits the following closed-form expression:
    \begin{align*}
        F_{1,R}[E] = \frac{1}{1+\exp(E)} \Rightarrow F_{1,R}^{-1}\InParentheses{\frac{1}{1+\delta}} = \log\delta
    \end{align*}
    Hence, $f_R(\delta) = -F_{1,R}^{-1}\InParentheses{\frac{1}{1+\delta}} = \log\frac{1}{\delta}$.

    \paragraph{Squared Euclidean norm: $R(x) = \frac{1}{2}(x^2+(1-x)^2)$} In this case, we can verify that $F^{-1}_{1,R}(x)$ admits a closed-form for $x\in (0,1)$:
    \begin{align*}
        F_{1,R}(E) = \frac{1-E}{2},\forall E\in (-1,1) \Rightarrow F^{-1}_{1,R}(x) =1-2x, \forall x \in (0,1).
    \end{align*}
    Hence, $f_R(\delta) = -F_{1,R}^{-1}\InParentheses{\frac{1}{1+\delta}} = \frac{1-\delta}{1+\delta} \le 1$.

    \paragraph{Log barrier: $R(x) = -\log x-\log(1-x)$} In this case, we can verify that $F^{-1}_{1,R}(x)$ admits a closed-form for $x\in (0,1)$:
    \begin{align*}
         F^{-1}_{1,R}(x) = \frac{2x-1}{x^2-x}, \forall x \in (0,1).
    \end{align*}
    Hence, we have 
    \begin{align*}
        f_R(\delta) = -F_{1,R}^{-1}\InParentheses{\frac{1}{1+\delta}} = - \frac{2(1+\delta)-(1+\delta)^2}{1 - (1+\delta)} = \frac{1-\delta^2}{\delta} \le \frac{1}{\delta}.
    \end{align*}

    \paragraph{Negative Tsallis entropy: $R(x) = \frac{1}{1-\beta}(1-x^\beta+1-(1-x)^\beta)$ } In this case, we can verify that $F^{-1}_{1,R}(x)$ admits a closed-form for $x\in (0,1)$:
    \begin{align*}
         F^{-1}_{1,R}(x) = \frac{\beta}{1-\beta}\InParentheses{x^{\beta-1} - (1-x)^{\beta-1}}, \forall x \in (0,1).
    \end{align*}
    Hence, we have 
    \begin{align*}
         f_R(\delta) = -F_{1,R}^{-1}\InParentheses{\frac{1}{1+\delta}} = \frac{\beta}{1-\beta} \InParentheses{ \InParentheses{ \frac{\delta}{1+\delta}}^{\beta-1} - \InParentheses{ \frac{1}{1+\delta}}^{\beta-1}  } \le \frac{\beta}{1-\beta} \InParentheses{\frac{1+\delta}{\delta}}^{1-\beta} \le  \frac{2\beta}{1-\beta} \InParentheses{\frac{1}{\delta}}^{1-\beta}. 
    \end{align*}
    This completes the proof.
\end{proof}

\begin{theorem}\label{theorem:consecutive bad iterations}
    Assume the regularizer $R$ satisfies Assumption~\ref{assumption:standard} and Assumption~\ref{assumption:main}. For any $\delta \in (0, \hat{\delta})$ where $\hat{\delta}$ is a constant that depending only on the constants $c_1$ and $\delta'$ defined in \Cref{assumption:main}, the OFTRL dynamics on $A_\delta$ (defined in \eqref{eq:A delta}) with any step size $\eta \le \frac{1}{4L}$ satisfies the following: there is an iteration $T \le \frac{8+2L f_R(\delta)}{c_1c_3\eta L \delta}$ (with $f_R(\delta)$ defined in Lemma \ref{lemma: E delta}) such that for all $t \in \InBrackets{T, T+ \frac{c_1^2}{80\eta L \delta}}$, 
    \begin{align*}
        \DualGap(x^t, y^t) \ge c_2, 
    \end{align*}
    where $c_2 > 0$ is a constant defined in \Cref{assumption:main}.
\end{theorem}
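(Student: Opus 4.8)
The plan is to use \Cref{theorem: main} as a black box for the \emph{existence} of a block of bad iterates and to supply the missing quantitative bound on where that block sits. \Cref{theorem: main} produces a block $\InBrackets{T_2+\lceil\tfrac{c_1T_h}{20}\rceil,\;T_2+\lfloor\tfrac{c_1T_h}{10}\rfloor-2}$ on which $\DualGap(x^t,y^t)\ge c_2$, with $T_h=\lfloor\tfrac{c_1}{2\eta L\delta}\rfloor$. I would set $T:=T_2+\lceil\tfrac{c_1T_h}{20}\rceil$; since $T_h\ge\tfrac{c_1}{2\eta L\delta}-1$, an elementary inequality shows $\InBrackets{T,\;T+\tfrac{c_1^2}{80\eta L\delta}}$ lies inside that block as long as $\delta$ is below a threshold depending only on $c_1$ (to be folded into $\hat\delta$), which already delivers the duality-gap claim. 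Everything else is the bound $T\le\tfrac{8+2Lf_R(\delta)}{c_1c_3\eta L\delta}$, and since $\lceil\tfrac{c_1T_h}{20}\rceil\le\tfrac{c_1^2}{40\eta L\delta}+1$ and $T_h\le\tfrac{c_1}{2\eta L\delta}$ both sit comfortably within the $\tfrac{8}{c_1c_3\eta L\delta}$ ``budget'', the real task is to bound $T_2$.

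I would work in the one-dimensional reduction $x^t[1]=F_{1,R}(\eta\tilde E^{t-1}_x)$, $y^t[1]=F_{1,R}(\eta\tilde E^{t-1}_y)$ with $\tilde E^t_x:=E^t_x+e^t_x$ (and similarly for $y$), using the closed forms $e^t_x=-\tfrac12+(1+\delta)y^t[1]$ and $e^t_y=1-(1+\delta)x^t[1]$ specific to $A_\delta$, together with \Cref{lemma: E delta} ($F_{1,R}(E)\ge\tfrac{1}{1+\delta}\iff E\le -f_R(\delta)$) and the analogous fact that $F_{1,R}(E)\ge\tfrac{1}{2(1+\delta)}$ iff $E\le g_R(\delta):=F_{1,R}^{-1}(\tfrac{1}{2(1+\delta)})$, a bounded nonnegative constant for $\delta<\hat\delta$. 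First I would bound $T_1$: item~1 of \Cref{theorem: main} gives $y^t[1]\le\tfrac12-c_1$ for $t\in[\tfrac{1}{2\eta L},T_1-1]$, hence $e^t_x\le\tfrac{\delta}{2}-c_1(1+\delta)\le-\tfrac{c_1}{2}$ there; thus $E^t_x$ drifts down by at least $\tfrac{c_1}{2}$ per step after $\tfrac{1}{2\eta L}$, and combined with the crude bound $|E^t_x|\le t$ (each $|e^k_x|\le 1$) and $\tilde E^t_x\le E^t_x+1$ this forces $x^{t+1}[1]\ge\tfrac{1}{1+\delta}$ no later than $t=O\!\InParentheses{\tfrac{1}{\eta Lc_1}+\tfrac{f_R(\delta)}{\eta c_1}}$, which therefore upper-bounds $T_1$; since moreover $e^t_y\le\tfrac12$ once $x^t[1]\ge\tfrac12$, we also get $E^{T_1}_y=O(T_1)$.

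Next I would bound $T_2-T_1$. On $[T_1+T_h,T_2]$, item~2 of \Cref{theorem: main} gives $x^t[1]\ge\tfrac{1+c_3}{1+c_3+\delta}$, hence $e^t_y\le-\tfrac{c_3\delta}{1+c_3+\delta}\le-\tfrac{c_3\delta}{2}$, so $E^t_y$ decreases by at least $\tfrac{c_3\delta}{2}$ per step and must reach the threshold $g_R(\delta)/\eta$ within $\tfrac{2}{c_3\delta}\bigl(E^{T_1+T_h}_y-g_R(\delta)/\eta\bigr)$ steps. The remaining ingredient is $E^{T_1+T_h}_y=O(T_1)$, which follows once one knows that $x^t[1]$ stays within $O(\delta)$ of $\tfrac{1}{1+\delta}$ throughout the ``hold phase'' $[T_1,T_1+T_h]$: then $e^t_y=O(\delta)$ on $[T_1,T_1+T_h]$, so $E_y$ changes there by only $O(\delta T_h)=O(\tfrac{1}{\eta L})$, whence $E^{T_1+T_h}_y\le E^{T_1}_y+O(\tfrac{1}{\eta L})$. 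Combining the three pieces, $T_2=T_1+T_h+O\!\InParentheses{\tfrac{T_1}{c_3\delta}+\tfrac{1}{\eta Lc_3\delta}}$, and plugging in the bound on $T_1$ and using $c_1,c_3\le\tfrac12$, $\eta\le\tfrac{1}{4L}$ so that every term not carrying an $f_R(\delta)$ falls inside the $\tfrac{8}{c_1c_3\eta L\delta}$ budget, a careful constant count yields $T\le\tfrac{8+2Lf_R(\delta)}{c_1c_3\eta L\delta}$.

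The main obstacle is precisely the ``$x^t[1]$ stays within $O(\delta)$ of $\tfrac{1}{1+\delta}$ on $[T_1,T_1+T_h]$'' claim. A naive argument — $e^t_x<0$ for $t<T_2$, so $E_x$ keeps decreasing past $T_1$, so $x^t[1]$ stays large — is too weak, because $\tilde E_x$ can creep up by an additive \emph{constant} (the per-step increment $2e^{t+1}_x-e^t_x$ need not be negative once $e^t_x$ is small), which through $F_{1,R}$ pulls $x^t[1]$ down to a \emph{constant} below $1$; over the $\Theta(\tfrac{1}{\eta L\delta})$-long hold phase that would let $E_y$ grow by $\Theta(\tfrac1\delta)$ and inflate $T_2$ to order $\delta^{-2}$, which is strictly worse than the claimed rate already for the squared Euclidean regularizer (where $f_R(\delta)$ is bounded). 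Ruling this out requires a \emph{quantitative} stability estimate — once $e^t_y=O(\delta)$, the OMWU iterates move by only $O(L\eta\delta)$ per step, so $e^t_x$ varies by $O(L\eta\delta)$ too and the accumulated negative drift of $E_x$ dominates any bounce-back — and is cleanest to obtain as a refinement of the trajectory analysis underlying \Cref{theorem: main} rather than re-derived from scratch.
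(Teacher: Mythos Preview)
Your plan follows the same skeleton as the paper's proof: invoke \Cref{theorem: main} for the block of bad iterates, verify the block has length at least $\tfrac{c_1^2}{80\eta L\delta}$, bound $T_1$ via item~1, and bound $T_2$ via item~2. Your bound on $T_1$ is exactly the paper's. The divergence is in how you handle the ``hold phase'' $[T_1,T_1+T_h]$, and here you make the problem harder than it is.

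You aim to show $x^t[1]$ stays within $O(\delta)$ of $\tfrac{1}{1+\delta}$ on the hold phase so that $e^t_y=O(\delta)$, and you correctly observe that a naive monotonicity argument on $\tilde E_x$ does not yield this. But the paper needs only the weaker fact $e^t_y\le 0$ on $[T_1{+}1,T_1{+}T_h{-}1]$, equivalently $x^t[1]\ge\tfrac{1}{1+\delta}$ throughout $[T_1{+}1,T_2]$. This is part of the trajectory analysis behind item~2 of \Cref{theorem: main} (adapted from \citet{cai2024fast}) and the paper simply asserts it. With that in hand, $E_y$ does not increase at all on the hold phase, so $E^{T_1+T_h}_y\le E^{T_1}_y\le T_1$ by the crude bound $|e^k_y|\le 1$. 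Combined with the drift $e^t_y\le-\tfrac{c_3\delta}{2}$ on $[T_1{+}T_h,T_2]$ and the threshold $y^{T_2-1}[1]<\tfrac{1}{2(1+\delta)}<\tfrac12=F_{\eta,R}(0)$, one gets $T_1-\tfrac{c_3\delta}{2}(T_2-T_1-T_h)\ge 0$ directly, hence $T_2\le\tfrac{2T_1}{c_3\delta}+T_h$, with no quantitative stability estimate needed. Your introduction of $g_R(\delta)=F_{1,R}^{-1}\!\bigl(\tfrac{1}{2(1+\delta)}\bigr)$ is likewise avoidable: the paper uses the unbiasedness $F_{\eta,R}(0)=\tfrac12$ so the relevant threshold for $E_y$ is just $0$.

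In short, your outline is correct, but the ``main obstacle'' you flag disappears once you realize $e^t_y\le 0$ (not $e^t_y=O(\delta)$) suffices on the hold phase, and that this non-positivity is already part of the imported analysis from \citet{cai2024fast}.
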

\begin{proof}
    We will use notations and results presented in \Cref{theorem: main}. We restate some definitions and key facts:
    \begin{itemize}
        \item $T_1$ is the smallest iteration when $x^{T_1}[1] \ge \frac{1}{1+\delta}$; \\
        \item $T_2 > T_1$ is the smallest iteration when $y^{T_2}[1] \ge \frac{1}{2(1+\delta)}$;
        \item $T_h := \lfloor \frac{c_1}{2\eta L \delta} \rfloor \in [\frac{c_1}{2\eta L \delta}-1, \frac{c_1}{2\eta L \delta}]$;
        \item We have $\DualGap(x^t, y^t) \ge c_2$ for all $t \in [T_2 + \lceil \frac{c_1T_h}{20}\rceil, T_2 + \lfloor \frac{c_1T_h}{10} \rfloor-2]$. 
    \end{itemize}
    Notice that the interval $[T_2 + \lceil \frac{c_1T_h}{20}\rceil, T_2 + \lfloor \frac{c_1T_h}{10} \rfloor-2]$ has length at least
    \begin{align*}
        \left\lfloor \frac{c_1T_h}{10} \right\rfloor-2 - \left\lceil\frac{c_1T_h}{20}\right\rceil \ge \frac{c_1^2}{20\eta L \delta} -4 - \frac{c_1^2}{40\eta L\delta} = \frac{c_1^2}{40\eta L\delta}-4\ge \frac{c_1^2}{80\eta L \delta}.
    \end{align*}
    In the above inequality, we use $x -1 \le \lfloor x \rfloor\le  \lceil x\rceil \le x+1$ and $\delta$; and the fact that $\eta L \le \frac{1}{4}$ and $\delta \le \frac{c_1^2}{300}$. 
    
    Now we get $\Omega(\frac{1}{\delta})$ consecutive iterations with duality gap at least $c_2$. It remains to show that $T_2 = O(\frac{f_R(\delta)}{\delta})$. We proceed with two steps. We first upper bound $T_1$, then we use the obtained bound to further bound $T_2$. 
    \paragraph{Bounding $T_1$}
    By item 1 of \Cref{theorem: main}, we have $y^t[1] \le \frac{1}{2} -c_1$ for all $t \in [\frac{1}{2\eta L}, T_1-1]$. Recall that $e^t_y= 1-(1+\delta)x^t[1]$,  this implies $e^t_x \le -\frac{c_1}{2}$ for all $t \in [\frac{1}{2\eta L}, T_1-1]$. Since $T_1$ is the \emph{first} iteration that $x^{T_1}[1] \ge \frac{1}{1+\delta}$, it follows that
    \begin{align*}
       \frac{1}{1+\delta} \ge x^{T_1-1}[1] &= F_{\eta, R}\InBrackets{ E^{T_1-2}_x + e^{T_1-2}_x} \\
       &\ge F_{\eta,R}\InBrackets{\frac{1}{2\eta L} - \frac{c_1}{2} (T_1-2-\frac{1}{2\eta L}) + 1}\\
       &= F_{1,R}\InBrackets{\frac{1}{2 L} - \frac{\eta c_1}{2} (T_1-2-\frac{1}{2\eta L}) + \eta}
    \end{align*}
    By definition of $f_R(\delta)$ and monotonicity of $F_{1,R}$, we deduce
    \begin{align*}
        &\frac{1}{2L} - \frac{\eta c_1}{2} (T_1-2-\frac{1}{2\eta L}) + \eta \ge F^{-1}_{1,R}\InBrackets{\frac{1}{1+\delta}} \ge   - f_R(\delta) \\ 
        \Rightarrow &        T_1 \le \frac{2(\frac{1}{2 L}+\eta+f_R(\delta))}{\eta c_1} + 2 + \frac{1}{2\eta L}.
    \end{align*}
    Since $\eta L \le \frac{1}{4}$, the above simplifies to 
    \begin{align*}
        T_1 \le \frac{6}{c_1\eta L} + \frac{2f_R(\delta)}{\eta c_1}
    \end{align*}

    \paragraph{Bounding $T_2$}
    By item 2 of \Cref{theorem: main}, we have for all $t \in [T_1 +T_h, T_2]$
    \begin{align*}
        x^t[1] \ge \frac{1+c_3}{1+c_3+\delta}.  
    \end{align*}
    Note that $1+ c_3 + \delta \le 2$. This implies $e^t_y = 1 - (1+\delta)x^t[1] = -\frac{c_3\delta}{1+c_3 + \delta} \le -\frac{c_3\delta}{2}$ for all $T_1+T_h \le t \le T_2$. Moreover, for all $t \in [T_1+1, T_2]$,  we have $x^t[1] \ge \frac{1}{1+\delta}$ so $e^t_y \le 0$. Combining these gives
    \begin{align*}
        \frac{1}{2} \ge y^{T_2-1}[1] &= F_{\eta, R}\InBrackets{E^{T_2-1}_y +e^{T_2-1}_y} \\
        &= F_{\eta, R}\InBrackets{\sum_{t=1}^{T_1}e^t_x+\sum_{t=T_1+1}^{T_h-1} e^t_y + \sum_{t=T_h}^{T_2-1} e^t_y + e^{T_2-1}_y} \\
        &\ge F_{\eta, R}\InBrackets{T_1 -\frac{c_3\delta}{2}(T_2-T_h+1)}.
    \end{align*}
    By monotonicity of $F_{\eta,R}$ and $F_{\eta, R}[0] = \frac{1}{2}$ (\Cref{assumption:standard}), we have 
    \begin{align*}
        &T_1 -\frac{c_3\delta}{2}(T_2-T_h+1) \ge 0\\
        \Rightarrow & T_2 \le \frac{T_1}{c_3\delta} + T_h \\
        \Rightarrow & T_2 \le \frac{7+2L f_R(\delta)}{c_1c_3\eta L \delta}.
    \end{align*}

    Combining the facts that $\DualGap(x^t, y^t) \ge c_2$ for all $t \in [T_2 + \lceil \frac{c_1T_h}{20}\rceil, T_2 + \lfloor \frac{c_1T_h}{10} \rfloor-2]$, and the length of the interval is at least $\frac{c_1^2}{80\eta L \delta}$, we conclude that starting from iteration no more than $\frac{8+2L f_R(\delta)}{c_1c_3\eta L \delta}$, the following $\frac{c_1^2}{80\eta L \delta}$ iterations all have duality gap larger than the constant $c_2 > 0$.
\end{proof}

\subsubsection{Proof of \Cref{thm:random-lower-OMWU}}
By \Cref{theorem:consecutive bad iterations} and \Cref{lemma: E delta}, we know that we can find $T = O(\frac{\log(\frac{1}{\delta})}{\delta})$ such that for all $t \in [T, T_1]$ we have $\DualGap(x^t, y^t) \ge c_2$ where $T_1 = T + \Theta(\frac{1}{\delta}) = O(\frac{\log \frac{1}{\delta}}{\delta})$. Then we have 
\begin{align*}
    \sum_{t=1}^{T_1} \DualGap(x^t, y^t) \ge \sum_{t=T}^{T_1} \DualGap(x^t, y^t) = \Omega(\frac{1}{\delta}).
\end{align*}
Hence we get
\begin{align*}
    \frac{1}{T_1}  \sum_{t=1}^{T_1} \DualGap(x^t, y^t) = \Omega\InParentheses{\frac{1}{\log \frac{1}{\delta}}} = \Omega\InParentheses{\frac{1}{\log T_1}},
\end{align*}
where the last equality holds since $\log T_1 = \Theta(\log \frac{1}{\delta})$. We note that $T_1$ can be arbitrarily large as $\delta \rightarrow0$. Therefore, the uniform random-iterate convergence rate of OMWU is $\Omega(\frac{1}{\log T})$.

\subsubsection{Proof of \Cref{thm:random-lower-all-regularizers}}
\Cref{thm:random-lower-all-regularizers} follows in the same way as \Cref{thm:random-lower-OMWU} by combining \Cref{theorem:consecutive bad iterations} and \Cref{lemma: E delta} as we discussed in \Cref{sec:random proof overview}. 

\section{Missing Proofs in \Cref{sec:best-iterate}}\label{app:OMWU best-iterate}

\subsection{Global Phase: Missing Proofs in \Cref{sec:global phase}}\label{app:global phase}

\subsubsection{Proof of \Cref{prop:dynamic regret->best-iterate}}
\begin{proof}
    Recall that $\ell^t_x=Ay^t$ and $\ell^t_y=-A^\top x^t$. By definition, we have
    \begin{align*}
    &\sum_{t=1}^T\DualGap(x^t,y^t) = \sum_{t=1}^T\InParentheses{ \max_{y\in\Delta^{d_2}} (x^t)^\top Ay - \min_{x\in \Delta^{d_1}}x^\top Ay^t} \\
    & =  \sum_{t=1}^T \max_{z \in \+Z} \InAngles{F(z^t), z^t - z} \\
    &= \sum_{t=1}^T \InParentheses{\max_{y\in\Delta^{d_2}} (x^t)^\top Ay - (x^t)^\top Ay^t }  + \sum_{t=1}^T \InParentheses{(x^t)^\top Ay^t- \min_{x\in \Delta^{d_1}}x^\top Ay^t} \\
    &= \+R^T_y(\{y^t_\star\}) + \+R^T_x(\{x^t_\star\}).
\end{align*}
\end{proof}

\subsubsection{Proof of \Cref{lemma:OMWU interval}}
\begin{proof}
    By \Cref{lemma:OMWU rvu}, for any interval $\+I \in [s, e] \in [2, \infty]$\footnote{Here we ignore the first iteration which contributes at most $O(1)$ regret.}, we have
    \begin{align*}
        \+R^\+I_x+\+R^\+I_y = \max_{z} \sum_{t=s}^e \InAngles{F(z^t), z^t - z} \le \frac{1}{\eta}\sum_{t=s}^e \Theta^t-\Theta^{t+1} -\zeta^t \le \frac{1}{\eta}\Theta^s.
    \end{align*}
    For any $t \ge 2$, we can bound $\Theta^t$ by definition and the probability lower bound in \Cref{lemma:OMWU prob lower bound}.
    \begin{align*}
        \Theta^t &\le \KL(z^*, \hz^t) + \KL(\hz^t, z^{t-1}) \\
        &\le \sum_{i\in[d_1+d_2]}\log\frac{1}{\hz^t[i]} + \log\frac{1}{z^{t-1}[i]} \\
        &\le O\InParentheses{(d_1+d_2)\InParentheses{ \log(d_1d_2) + \frac{1}{\delta} }  }.
    \end{align*}
    Combining the above two inequalities completes the proof.
\end{proof}

\subsubsection{Proof of \Cref{lemma:OMWU variation}}
\begin{proof}
    Let $|\+I| = [s, e]$. Then by definition, we have
    \begin{align*}
        V^\+I &= \sum_{t=s+1}^e \max_{z} |\InAngles{F(z^t)-F(z^{t-1}),z}| \\
        &\le 2\sum_{t=s+1}^e \InNorms{F(z^t) - F(z^{t-1})}_\infty \tag{Cauchy-Shwarz and $\InNorms{z}_1\le 2$} \\
        &\le 2\sqrt{|\+I|\sum_{t=s+1}^e \InNorms{F(z^t) - F(z^{t-1})}_\infty^2} \tag{Cauchy-Shwarz}\\
        &\le  2\sqrt{|\+I|\sum_{t=s+1}^e \InNorms{z^t - z^{t-1}}_1^2} \tag{$F$ is $1$-Lipschitz since $A \in [0,1]^{d_1\times d_2}$}\\
        &\le O\InParentheses{\sqrt{|\+I|\log(d_1d_2)}}. \tag{\Cref{lemma:OMWU rvu} item 3}
    \end{align*}
\end{proof}

\subsubsection{Proof of \Cref{thm:OMWU dynamic}}
\begin{proof}
    Let $\+I_1 =[s_1, e_1], \ldots, \+I_M=[s_M, e_M]$ be any partition of the $T$ rounds. We let $z_m\in \argmin_{z} \sum_{t\in\+I_m} \InAngles{F(z^t), z}$.  Then, the social dynamic regret on $\+I_m$ is 
    \begin{align*}
        &\sum_{t\in \+I_m} \InAngles{F(z^t), z^t - z^t_\star} \\&= \sum_{t\in \+I_m} \InAngles{F(z^t), z^t -z^{s_m}_\star} + \sum_{t\in \+I_m} \InAngles{F(z^t),z^{s_m}_\star - z^t_\star} \\
        &\le \+R_z^{\+I_m} + 2|\+I_m|V^{\+I_m},
    \end{align*}
    where the last step is by the definition of interval regret and the fact that 
    \begin{align*}
       &\InAngles{F(z^t),z^{s_m}_\star - z^t_\star} \\
       &\le \InAngles{F(z^t) - F(z^{s_m}),z^{s_m}_\star} + \InAngles{F(z^{s_m}) - F(z^t), z^t_\star} \\
        &= \sum_{k=s_m+1}^t \InAngles{F(z^k) - F(z^{k-1}),z^{s_m}_\star}  \\
        &+ \sum_{k =s_m + 1}^t \InAngles{F(z^{k-1}) - F(z^k), z^t_\star}\\
        &\le 2 V^{\+I_m}.
    \end{align*}
    where the first inequality is by optimality of $z^{s_m}_\star$.
    Therefore, 
    \begin{align*}
        &\sum_{t=1}^T \InAngles{F(z^t), z^t - z^t_\star} \\
        &\le \sum_{m=1}^M (\+R^{I_m}_z + 2|\+I_m|V^{\+I_m}) \\
        &\le O\InParentheses{(d_1+d_2)\log(d_1d_2) \cdot \frac{M}{\eta\delta} + \max_{m\in[1,M]} |\+I_m| \cdot V^{[1,T]}} \tag{\Cref{lemma:OMWU interval}} \\
        &\le O\InParentheses{(d_1+d_2)\log(d_1d_2) \cdot \InParentheses{\frac{M}{\eta\delta} + \max_{m\in[1,M]} |\+I_m| \sqrt{T}}} \tag{\Cref{lemma:OMWU variation}}
    \end{align*}
    We choose $M = \max\{1, T^{\frac{3}{4}}\delta^{\frac{1}{2}}\}$ and make sure each interval has length $O(\frac{T}{M})$, then we have
    \begin{align*}
        \frac{M}{\delta} + \max_{m\in[1,M]} |\+I_m| \sqrt{T} = O\InParentheses{\frac{M}{\eta\delta} + \frac{T^{\frac{3}{2}}}{M}} = O\InParentheses{\frac{T^{\frac{3}{4}}\delta^{-\frac{1}{2}}}{\eta}}.
    \end{align*}
    We note that the above holds for any $T$. This completes the proof.
\end{proof}

\subsection{Initial Phase: Missing Proofs in \Cref{sec:initial phase}}\label{sec:OMWU best-initial}

\subsubsection{Proof of \Cref{prop:reduction}}
\begin{proof}
    Fix any $\eta \in (0, \eta']$. Since $A = b_1 \cdot \textbf{1} + b_2 A_{\delta_x, \delta_y}$, we know that the following two dynamics produce exactly the same trajectory $\{x^t, y^t\}$:
    \begin{itemize}
        \item[1.] OMWU with step size $\eta$ on $A$;
        \item[2.] OMWU with step size $b_2\eta$ on $A_{\delta_x, \delta_y}$.
    \end{itemize}
    The reason is (1) the update rule of OMWU concerns only the relative loss between actions so the $b_1 \textbf{1}$ component does not matter; (2) $\eta ( b_2 A_{\delta_x, \delta_y}) = (b_2\eta) A_{\delta_x, \delta_y}$. Therefore, if we assume that $\{x^t, y^t\}$ evaluated on $A_{\delta_x, \delta_y}$ has a rate $O(\frac{1}{b_2 \eta}f(T))$, them the same sequence $\{x^t, y^t\}$ evaluated on $A = b_1 \textbf{1} + b_2 A_{\delta_x, \delta_y}$ has a convergence rate of $O(b_2\cdot\frac{1}{b_2\eta}f(T)) = O(\frac{1}{\eta}f(T))$.
\end{proof}

\subsubsection{Proof of \Cref{thm:OMWU-initial}}

By \Cref{assumption:A2}, \Cref{lemma:structure of 2by2 game}, and \Cref{prop:reduction}, we only need to consider the following class of matrices without loss of generality:
\begin{align}\label{eq:A2}
    A_{\delta_x,\delta_y} = \begin{bmatrix}
        \frac{1-\delta_y}{1-\delta_x}  &\frac{1- \delta_x- \delta_y}{1-\delta_x}\\
        0 & 1
    \end{bmatrix}, \frac{1}{100} \le \delta_x \le \delta_y \le 1-\delta_x
\end{align}
For simplicity, in the proof, we omit the subscript and denote by $A$ the matrix $A_{\delta_x,\delta_y}$. We first summarize properties of $A$ that can be verified by simple algebra.
\begin{proposition}\label{prop:A2}
    Given \Cref{assumption:A2}, we have the following:
    \begin{itemize}
        \item The loss vectors of $A$ are
        \begin{align*}
            \ell_x = Ay= \InBrackets{ 1-\frac{\delta_y}{1-\delta_x} + \frac{\delta_x}{1-\delta_x}y[1], 1-y[1]}, 
            \ell_y = -A^\top x = - \InBrackets{\frac{1-\delta_y}{1-\delta_x}x[1], 1 - \frac{\delta_y}{1-\delta_x}x[1]}
        \end{align*}
        \item The loss vectors of $A$ satisfy: 
        \begin{align*}
            e_x :=\ell_x[1] - \ell_x[2] &= \frac{y[1] - \delta_y}{1-\delta_x} = \frac{1 - \delta_y - y[2]}{1-\delta_x},  e_y :=\ell_y[1] - \ell_y[2] = \frac{x[2] - \delta_x}{1-\delta_x} = \frac{ 1- \delta_x- x[1]}{1-\delta_x}.
        \end{align*} 
        Moreover, 
        \begin{align*}
            e_x \in \InBrackets{-\frac{\delta_y}{1-\delta_x}, 1} \subseteq [-1,1],  e_y \in \InBrackets{-\frac{\delta_x}{1-\delta_x}, 1} \subseteq [-2\delta_x, 1].
        \end{align*}
    \end{itemize}
\end{proposition}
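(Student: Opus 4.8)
The plan is to verify \Cref{prop:A2} by direct substitution, using the simplex identities $x[1]+x[2]=1$ and $y[1]+y[2]=1$ to eliminate the second coordinates at every step. First I would compute $\ell_x = A_{\delta_x,\delta_y}\,y$ entrywise. The second entry is immediate from the bottom row $(0,1)$ of the matrix: $\ell_x[2] = y[2] = 1-y[1]$. For the first entry I would expand $\frac{1-\delta_y}{1-\delta_x}y[1] + \frac{1-\delta_x-\delta_y}{1-\delta_x}y[2]$, substitute $y[2]=1-y[1]$, collect the coefficient of $y[1]$ (which is $\frac{\delta_x}{1-\delta_x}$), and rewrite the constant term via $\frac{1-\delta_x-\delta_y}{1-\delta_x} = 1-\frac{\delta_y}{1-\delta_x}$ to reach the displayed form. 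The computation of $\ell_y = -A_{\delta_x,\delta_y}^\top x$ is entirely symmetric: the first entry of $A^\top x$ is $\frac{1-\delta_y}{1-\delta_x}x[1]$, and for the second I substitute $x[2]=1-x[1]$, collect terms, and negate.

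Second, I would read off the action gaps by subtracting coordinates. For $e_x = \ell_x[1]-\ell_x[2]$ the constant $1$'s cancel and the coefficient of $y[1]$ collapses to $\frac{\delta_x}{1-\delta_x}+1 = \frac{1}{1-\delta_x}$, giving $e_x = \frac{y[1]-\delta_y}{1-\delta_x}$; substituting $y[1]=1-y[2]$ yields the second form $\frac{1-\delta_y-y[2]}{1-\delta_x}$. Analogously $e_y = \ell_y[1]-\ell_y[2] = 1 - \frac{x[1]}{1-\delta_x} = \frac{1-\delta_x-x[1]}{1-\delta_x}$, and $x[1]=1-x[2]$ gives $\frac{x[2]-\delta_x}{1-\delta_x}$.

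Third, I would establish the stated ranges. Since $y[1]\in[0,1]$ and $\frac{y[1]-\delta_y}{1-\delta_x}$ is increasing in $y[1]$, $e_x$ lies in $[-\frac{\delta_y}{1-\delta_x},\frac{1-\delta_y}{1-\delta_x}]$; here the upper endpoint is $\le 1$ exactly because $\delta_x\le\delta_y$, and the lower endpoint is $\ge -1$ exactly because $\delta_y\le 1-\delta_x$ --- these are precisely the two inequalities in \Cref{assumption:A2}. Similarly $x[1]\in[0,1]$ forces $e_y\in[-\frac{\delta_x}{1-\delta_x},1]$, and the inclusion into $[-2\delta_x,1]$ amounts to $\frac{1}{1-\delta_x}\le 2$, i.e.\ $\delta_x\le\frac{1}{2}$, which follows from $\delta_x\le\frac{1}{100}$ in \Cref{assumption:A2}.

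There is no genuine obstacle here: the proposition is a bookkeeping exercise. The only points requiring care are (i) applying the simplex identities consistently so that the closed forms match the displayed expressions rather than equivalent-looking rearrangements, and (ii) checking that each of the three displayed inclusions invokes exactly the corresponding hypothesis of \Cref{assumption:A2}, so that no stronger assumption is implicitly used downstream.
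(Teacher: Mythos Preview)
Your proposal is correct and matches the paper's own treatment: the paper simply states that \Cref{prop:A2} ``can be verified by simple algebra'' and gives no further proof, and your direct computation via the simplex identities together with the check that each range inclusion uses precisely the corresponding inequality from \Cref{assumption:A2} is exactly that verification.
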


We consider two cases: (1) $\delta_y \le \frac{1}{2}$; (2) $\delta_y > \frac{1}{2}$. We prove \Cref{thm:OMWU-initial} for each case in the following.

\paragraph{Case 1: $\delta_y \ge \frac{1}{2}$.} 

\begin{lemma}\label{lemma:A2 case1}
    Consider matrix $A$ defined in \Cref{eq:A2} that satisfies $\delta_y \in [\frac{1}{2}, 1)$. Let $\{x^t,y^t\}_{t\ge 1}$ be the iterates produced by OMWU dynamics with uniform initialization and step size $\eta \le 1$. Let $T_1$ be the first iteration such that $x^t[1] \ge 1-\delta_x$. Then 
    \begin{itemize}
        \item[1.]  $\DualGap(x^t,y^t) \le \frac{x^t[2]}{x^t[1]} \le 9 \exp(-\frac{\eta t}{42})$, for all $t \in [1, T_1-1]$;
        \item[2.] $\DualGap(x^{T_1},y^{T_1})\le 2\delta_x$.
        \item[3.] For all $t \in [1, T_1]$, there exists a universal constant $C > 0$ such that 
        \[\min_{k\in[1,t]} \DualGap(x^k,y^k) \le \frac{C}{\eta} \frac{1}{t}.\] 
    \end{itemize}
\end{lemma}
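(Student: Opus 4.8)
The plan is to track the one-dimensional OMWU trajectory on $A_{\delta_x,\delta_y}$ explicitly, using the closed forms from the reduction in \Cref{sec:old results}: $x^t[1] = F_{\eta,R}(E^{t-1}_x + e^{t-1}_x)$ with $F_{\eta,R}(E) = 1/(1+e^{\eta E})$ for the entropy regularizer, and likewise for $y$. Set $P_t := x^t[2]/x^t[1]$, so $P_1 = 1$ and $P_t = \exp\InParentheses{\eta(E^{t-1}_x + e^{t-1}_x)}$ for $t\ge 2$. First I would record a handful of \emph{sign facts} that hold without any monotonicity of the trajectory, straight from \Cref{prop:A2} and the definition of $T_1$: for $j\in[1,T_1-1]$ we have $x^j[1]<1-\delta_x$, hence $e^j_y = \frac{1-\delta_x-x^j[1]}{1-\delta_x}>0$; consequently for $2\le k\le T_1$ the argument $E^{k-1}_y+e^{k-1}_y = \sum_{j=1}^{k-1}e^j_y+e^{k-1}_y$ is positive, so $y^k[1] = F_{\eta,R}(\cdot)<F_{\eta,R}(0)=\frac12$; since $\delta_y\ge\frac12$ in Case~1, this gives $e^k_x = \frac{y^k[1]-\delta_y}{1-\delta_x}\le 0$ for all $k\in[1,T_1]$ (with $k=1$ since $y^1[1]=\frac12$), whence $E^{t-1}_x+e^{t-1}_x\le 0$ and $P_t\le 1$ for all $2\le t\le T_1$; and $x^t[1]<1-\delta_x$ for $t<T_1$ also gives $P_t>\frac{\delta_x}{1-\delta_x}$.

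Second, I would compute $\DualGap(x^t,y^t)$ exactly from the bilinear payoff. For $t<T_1$, since $x^t[1]<1-\delta_x$ the inner maximum over $y'$ is attained at the second column and since $y^t[1]<\frac12\le\delta_y$ the inner minimum over $x'$ is attained at the first row; plugging in the expressions from \Cref{prop:A2} and simplifying gives $\DualGap(x^t,y^t)=\frac{\delta_y x^t[2]-\delta_x y^t[1]}{1-\delta_x}\le\frac{\delta_y x^t[2]}{1-\delta_x}\le\frac{x^t[2]}{x^t[1]}$, the last step because $\delta_y x^t[1]<x^t[1]<1-\delta_x$ (using $\delta_y\le1-\delta_x<1$); this is the first inequality of item~1. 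For $t=T_1$ the maximum over $y'$ switches to the first column (since $x^{T_1}[1]\ge1-\delta_x$) while the minimum over $x'$ is still the first row, and bounding $x^{T_1}[1]\le1$ gives $\DualGap(x^{T_1},y^{T_1})\le\frac{\delta_x y^{T_1}[2]}{1-\delta_x}\le\frac{\delta_x}{1-\delta_x}\le\frac{100}{99}\delta_x<2\delta_x$, which is item~2.

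The core is the exponential bound in item~1. Rewriting the definition of $P_t$, $-\ln P_t = -\eta(E^{t-1}_x+e^{t-1}_x) = \frac{\eta}{1-\delta_x}\InParentheses{(\delta_y-y^{t-1}[1])+\sum_{k=1}^{t-1}(\delta_y-y^k[1])}\ge\frac{\eta}{1-\delta_x}\sum_{k=1}^{t-1}(\delta_y-y^k[1])$, all summands being $\ge0$. Together with the free bound $P_t\le1$ (which alone already yields $P_t\le 9e^{-\eta t/42}$ whenever $\eta t\le 42\ln 9$), it therefore suffices to prove a linear lower bound $\sum_{k=1}^{t-1}(\delta_y-y^k[1])\ge\frac{1}{42}t - O\InParentheses{\frac1\eta}$ for all $t\le T_1-1$; combined with $P_t\le1$ this rearranges to $P_t\le 9e^{-\eta t/42}$ once $\eta t>42\ln 9$, because then the linear term dominates the $O(1)$ slack obtained after multiplying by $\eta$. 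The easy sub-case is $\delta_y$ bounded above $\frac12$ (then each summand is a constant). In general I would track $B_k:=E^{k-1}_y+e^{k-1}_y$, which is $\ge0$ and nondecreasing for $k\in[2,T_1]$ (all increments positive), with $\delta_y-y^k[1]\ge\frac12-y^k[1]=\frac{e^{\eta B_k}-1}{2(e^{\eta B_k}+1)}$. Using the \emph{a priori} lower bound $E^{j-1}_x+e^{j-1}_x\ge -j$ (from $e^k_x\ge\frac{-\delta_y}{1-\delta_x}\ge-1$), hence $P_j\ge e^{-\eta j}$ and $x^j[2]=\frac{P_j}{1+P_j}\ge\frac12 e^{-\eta j}$, one gets $e^j_y=\frac{x^j[2]-\delta_x}{1-\delta_x}\ge\frac{\frac12 e^{-\eta j}-\delta_x}{1-\delta_x}$; summing the geometric series shows that $\eta B_k$ exceeds a fixed positive constant for all $k$ in a window $[\Theta(1/\eta),\,T_1]$, since the accumulated mass $\frac\eta2\sum_j e^{-\eta j}=\Theta(1)$ dominates the error $\eta k\delta_x$ precisely because $\delta_x\le\frac1{100}$ and $T_1=O(\ln(1/\delta_x)/\eta)$ (the latter itself following because $x^t[1]$ hits $1-\delta_x$ as soon as $-\ln P_t\ge\frac1\eta\ln\frac{1-\delta_x}{\delta_x}$, which the linear growth forces within $O(\ln(1/\delta_x)/\eta)$ steps, a small self-contained bootstrap). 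On that window $\delta_y-y^k[1]$ is bounded below by a universal constant $>\frac1{42}$, while it is always $\ge0$ for $k\le T_1-1$, and summing over $k$ gives the required bound, hence item~1.

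Finally, item~3 follows from items~1 and 2 using $u e^{-u}\le\frac1e$: for $t\le T_1-1$ we have $\min_{k\in[1,t]}\DualGap(x^k,y^k)\le\DualGap(x^t,y^t)\le 9e^{-\eta t/42}\le\frac{9\cdot42}{e\,\eta t}$, and for $t=T_1$ (note $T_1\ge2$ since $x^1[1]=\frac12<1-\delta_x$) we bound the minimum by the $k=T_1-1$ term and use $\frac1{T_1-1}\le\frac2{T_1}$; in both cases $\min_{k\in[1,t]}\DualGap = O\InParentheses{\frac1{\eta t}}$ with a universal constant $C$. The main obstacle is the exponential-decay step of item~1: unlike when $\delta_y$ is bounded away from $\frac12$, the per-iteration decrease of $P_t$ tends to zero as $y^t[1]\to\delta_y$, so there is no uniform geometric rate, and one must combine the crude $P_t\le1$ estimate over an initial $\Theta(1/\eta)$-length phase with the a priori lower bound $P_j\ge e^{-\eta j}$ to guarantee that $\sum_k(\delta_y-y^k[1])$ accumulates linearly; getting the specific constants $9$ and $42$ to come out is where the hypothesis $\delta_x\le\frac1{100}$ and the careful bookkeeping are used.
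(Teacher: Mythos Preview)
Your approach is the same two–stage tracking as in the paper (a short warm-up window of length $\Theta(1/\eta)$ during which one controls $x^t[1]$ crudely, after which $y^t[1]$ is shown to sit strictly below $\delta_y$ by a fixed margin, feeding back into geometric decay of $x^t[2]/x^t[1]$), and your computations of $\DualGap$ in items~1 and~2 and your derivation of item~3 from items~1 and~2 are correct. Two technical points need to be fixed, though.

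First, your claim that $B_k=E^{k-1}_y+e^{k-1}_y$ is nondecreasing is not justified: $B_{k+1}-B_k=2e^k_y-e^{k-1}_y$, which need not be nonnegative even though all $e^j_y>0$. What \emph{is} nondecreasing on $[1,T_1]$ is $E^{k-1}_y$ itself (each increment $e^j_y>0$), and since $B_k\ge E^{k-1}_y$ that is all you need.

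Second, and more seriously, the way you lower-bound $\eta B_k$ is circular. You bound $\sum_{j<k}(x^j[2]-\delta_x)$ by $\tfrac12\sum_{j<k}e^{-\eta j}-(k-1)\delta_x$ for \emph{all} $j<k$, and then argue the negative term is harmless because $k\le T_1=O(\ln(1/\delta_x)/\eta)$; but that bound on $T_1$ is precisely what the linear growth is supposed to yield, and with the constants involved (the slope $1/42$ forces $T_1\approx 42\ln(1/\delta_x)/\eta$, giving $\eta T_1\delta_x\approx 42\,\delta_x\ln(1/\delta_x)$, which for $\delta_x=1/100$ is already about $1.9$) the bootstrap does not close. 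The clean fix---which is exactly what the paper does---is to split the sum at a fixed $k_0=\Theta(1/\eta)$: for $j<k_0$ use your geometric lower bound $x^j[2]\ge\tfrac12e^{-\eta j}$ (equivalently, the paper's $x^j[1]\le e^2/(e^2+1)<8/9$, hence $e^j_y\ge 1/10$), and for $k_0\le j<k$ simply use $e^j_y>0$. Then $\eta E^{k-1}_y\ge \eta E^{k_0-1}_y$ for all $k\in[k_0,T_1]$, and the right-hand side is a universal positive constant (at least $1/10$ in the paper's bookkeeping, yielding $y^k[1]<10/21$ and $\delta_y-y^k[1]>1/42$). No bound on $T_1$ is needed anywhere.
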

\begin{proof}
The OMWU dynamics is initialized with uniform distributions $(x^1, y^1)$ and step size $\eta \le 1$. Denote by $T_0 = \lceil \frac{1}{\eta} \rceil + 1 \le \frac{2}{\eta}$. 
    Using the update rule, we have for all $t \in [1, T_0]$,
    \begin{align*}
        \frac{x^{t}[1]}{x^{t}[2]} & = \frac{x^1[1]}{x^1[2]} \cdot \exp\InParentheses{-\eta E^{t-1}_x - \eta e^{t-1}_x} \le e^{\eta t} \le e^2,
    \end{align*}
    where we use $x^1[1] = x^1[2] = \frac{1}{2}$ and $-e_x \le 1$ by \Cref{prop:A2}. This implies $x^t[1]\le \frac{e^2}{e^2+1} < \frac{8}{9}$ for all $t \in [1,T_0]$. We define $T_1 > T_0$ the first iteration where $x^t[1] \ge 1-\delta_x$. 

    We have the following two inequalities on $e^t_y = \ell^t_y[1] - \ell^t_y[2]$ by \Cref{prop:A2}:
    \begin{align}
        e^t_y &= \frac{1-\delta_x-x^t[1]}{1-\delta_x} \ge 0, \forall t \in [1, T_1-1], \label{eq:ety1} \\
        e^t_y &= \frac{1-\delta_x-x^t[1]}{1-\delta_x} \ge  \frac{1}{10}, \forall t \in [1, T_0].\label{eq:ety2}
    \end{align}
    where in \eqref{eq:ety1} we use $x^t[1] < 1- \delta_x$ for all $t \in [1, T_1-1]$; in \eqref{eq:ety2} we use $\delta_x \le \frac{1}{100}$ and $x^t[1] \ge \frac{8}{9}$ for all $t \in [1, T_0]$.
    For any $t \in [T_0, T_1]$, we have $y^t$ satisfies
    \begin{align*}
        \frac{y^t[1]}{y^t[2]} & =  \frac{y^1[1]}{y^1[2]} \cdot \exp\InParentheses{-\eta E^{T_0-1}_y - \sum_{k=T_0}^{t-1} e^t_y - e^{t-1}_y}                                               \\
        & \le \exp\InParentheses{-\eta E^{T_0-1}_y} \tag{by \eqref{eq:ety1}}                                                                              \\
        & \le \exp\InParentheses{-\frac{\eta (T_0 - 1)}{10}} \tag{by \eqref{eq:ety2}} \\
        & \le \exp\InParentheses{-\frac{1}{10}} < \frac{10}{11}. \tag{$T_0-1 \ge \frac{1}{\eta}$}
    \end{align*}
    This implies $y^{t}[1] < \frac{10}{21}$ for all $t \in [T_0, T_1]$. Moreover, for all $t \in [T_0, T_1]$, we have $e^t_x$ satisfies
    \begin{align*}
        e^t_x = \frac{y^t[1] - \delta_y}{1-\delta_x}         \le \frac{\frac{10}{21} - \frac{1}{2}}{1-\delta_x} \le -\frac{1}{42},
    \end{align*}
    where we use $y^t[1] \le \frac{10}{21}$ and $\delta_y \ge \frac{1}{2}$.

    For all $t \in [T_0, T_1-1]$, using the fact that $e^t_x \in [-1,-\frac{1}{42}]$, we have $x^t$ satisfies
    \begin{align*}
        \frac{x^t[1]}{x^t[2]} & = \frac{x^1[1]}{x^1[2]}\cdot \exp\InParentheses{- 2\eta e^{t-1}_x -  \sum_{k=1}^{T_0-1} \eta e^k_x - \sum_{k=T_0}^{t-2} \eta e^k_x }                                      \\
        & \ge \exp\InParentheses{ \frac{\eta(t- T_0)}{42} - \eta T_0} \tag{$e^{t}_x \le -\frac{1}{42}$ for $t \in [T_0, T_1]$ and $e^t_x \le -1$ for all $t$}  \\
        & \ge  \exp\InParentheses{ \frac{\eta t}{42}-\frac{1}{21} - 2}. \tag{$T_0 \le \frac{2}{\eta}$}                                                   \\
        & \ge \frac{1}{9} \cdot \exp\InParentheses{ \frac{\eta t}{42}} \tag{$e^{--\frac{1}{21}-2} \ge \frac{1}{9}$}.
    \end{align*}

    Now we track the duality gap. Note that for $t \in [T_0, T_1-1]$, we have $x^t[1] \le 1-\delta_x$ and $y^1[1] \le \frac{10}{21} \le \delta_y$. Therefore,
    \begin{align*}
        \DualGap(x^t, y^t) & = \max_{i\in \{1,2\}} (A^\top x^t)[i] - \min_{i\in \{1,2\}} (A y^t)[i] \\
        & = \frac{\delta_y}{1-\delta_x}(1-x^t[1]) - \frac{\delta_x}{1-\delta_x}y^t[1]           \\
        & \le \frac{\delta_y}{1-\delta_x}(1-x^t[1])                                                   \\
        & \le x^t[2]  \tag{$0 < \delta_y \le 1-\delta_x$}\\
        &\le  \frac{x^t[2]}{x^t[1]}.
    \end{align*}
    Then we get for all $t \in [T_0, T_1-1]$,
    \begin{align*}
        \DualGap(x^t, y^t) \le \frac{x^t[2]}{x^t[1]} \le 9 \exp\InParentheses{ -\frac{\eta t}{42}}.
    \end{align*}
    Since $T_0 < \frac{2}{\eta}$, the above bounds on duality gap also holds for all $t \in [1, T_0]$. Thus we conclude that for all $t \in [1, T_1-1]$, $\DualGap(x^t, y^t)\le 9 \cdot \exp\InParentheses{-\frac{\eta t}{42}}$. 
    
    Moreover, since $x^{T_1}[1] \ge 1-\delta_x$ and $y^{T_1}[1] < \frac{10}{21} \le \delta_x$, we have
    \begin{align*}
        \DualGap(x^{T_1}, y^{T_1}) = \max_{i\in \{1,2\}} (A^\top x^t)[i] - \min_{i\in \{1,2\}} (A y^t)[i] 
        = \frac{1-\delta_y}{1-\delta_x}x^{T_1}[1] -1 + \frac{\delta_y}{1-\delta_x}  \le \frac{1}{1-\delta_x} - 1 \le 2\delta_x.
    \end{align*}
    To summarize, we have shown the following:
    \begin{itemize}
        \item  For $t \in [1, T_1-1]$, we have  linear convergence rate: $\DualGap(x^t, y^t) \le 9\exp(-\frac{\eta t}{42})$;
        \item At $T_1$, we have $\DualGap(x^{T_1}, y^{T_1}) \le 2\delta_x$.
    \end{itemize}
    For all $t \in [1, T_1-1]$, we know there exists a universal constant $C'$ such that $9\exp(-\frac{\eta t}{42}) \le \frac{C'}{\eta t}$. Thus we have $\min_{k\in[1,t]} \DualGap(x^k,y^k) \le \frac{C}{\eta} \frac{1}{t}$ for all $t \in [1, T_1 -1]$. For $t = T_1$, we have 
    \begin{align*}
        \min_{k\in[1,T_1]} \DualGap(x^k,y^k) \le \min_{k\in[1,T_1-1]} \DualGap(x^k,y^k) \le \frac{C'}{\eta (T_1 - 1)} \le \frac{2C'}{\eta T_1}.
    \end{align*}
    Let $C = 2C'$, we conclude that for all $t \in [1, T_1]$, it holds that \[\min_{k\in[1,t]} \DualGap(x^k,y^k) \le \frac{C}{\eta} \frac{1}{t}.\] 
    This completes the proof.
\end{proof}

\paragraph{Case 2: $\delta_y < \frac{1}{2}$.} 

\begin{lemma}\label{lemma:A2 case2}
    Consider matrix $A$ defined in \eqref{eq:A2} that satisfies \Cref{assumption:A2} and $\delta_y \in (0, \frac{1}{2})$. Let $\{x^t,y^t\}_{t\ge 1}$ be the iterates produced by OMWU dynamics with uniform initialization and step size $\eta \le \frac{1}{10}$. Denote $T_1$ the first iteration that $x^t[1] \ge 1-\delta_x$. Then 
    \begin{itemize}
        \item[1.] $\DualGap(x^{T_1},y^{T_1})\le 2\delta_x$.
        \item[2.] For all $t \in [1, T_1]$, there exists a universal constant $C > 0$ such that 
        \[\min_{k\in[1,t]} \DualGap(x^k,y^k) \le \frac{C}{\eta} \frac{\log^2t}{t}.\] 
    \end{itemize}
\end{lemma}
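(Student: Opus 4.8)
The plan is to follow the same strategy as Case~1 but with a coarser trajectory analysis, exploiting the explicit OMWU odds $\frac{x^t[1]}{x^t[2]} = \exp(-\eta(E^{t-1}_x+e^{t-1}_x))$ (and the analogue for $y$), together with the formulas for $e^t_x,e^t_y$ and the piecewise formula for $\DualGap(x^t,y^t)$ coming from \Cref{prop:A2} and \Cref{assumption:A2} (recall $\delta_x\le\frac1{100}$, $\delta_x\le\delta_y<\frac12$). I split $[1,T_1]$ at the first iteration $T_A$ with $y^{T_A}[1]<\delta_y$. \textbf{Phase A} ($1\le t<T_A$): here $e^t_x=\frac{y^t[1]-\delta_y}{1-\delta_x}\ge0$, so the odds $\frac{x^t[1]}{x^t[2]}$ are non-increasing, hence $x^t[1]\le\frac12$ and therefore $e^t_y=\frac{1-\delta_x-x^t[1]}{1-\delta_x}\ge\frac{1/2-\delta_x}{1-\delta_x}\ge0.49$. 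Summing gives $E^{t-1}_y+e^{t-1}_y\ge0.49(t-1)$, so $y^t[1]\le\exp(-0.49\,\eta(t-1))$ decays geometrically; this already yields $T_A\le2+\frac{\ln(1/\delta_y)}{0.49\,\eta}$. Since for $t<T_A$ we also have $x^t[1]<1-\delta_x$ and $y^t[1]\ge\delta_y$, the duality gap is $\DualGap(x^t,y^t)=y^t[1]-\frac{\delta_y}{1-\delta_x}x^t[1]\le y^t[1]\le\exp(-0.49\,\eta(t-1))=O\!\big(\tfrac1{\eta t}\big)$, so $\min_{k\le t}\DualGap(x^k,y^k)\le \frac{C\log^2 t}{\eta t}$ throughout Phase~A.

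\textbf{Phase B} ($T_A\le t\le T_1$): now $e^t_x<0$, so the odds $\frac{x^t[1]}{x^t[2]}$ increase and $x^t[1]$ climbs toward $1-\delta_x$; and because $e^t_y>0$ whenever $x^t[1]<1-\delta_x$, the quantity $y^t[1]$ keeps decreasing and stays below $\delta_y$ (here $\eta\le\frac1{10}$ is used to guarantee $E^t_y+e^t_y$ is still increasing, i.e.\ $e^t_y$ does not more than halve across a step). In this regime $\DualGap(x^t,y^t)=\frac{\delta_y(1-x^t[1])-\delta_x y^t[1]}{1-\delta_x}\le2\delta_y(1-x^t[1])\le2\delta_y\exp\!\big(\eta(E^{t-1}_x+e^{t-1}_x)\big)$. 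To bound the right-hand side I control $E^t_x$: it starts from $E^{T_A}_x=O(1/\eta)$ (sum the Phase-A geometric bound on $y^k[1]$), and after a burn-in of $O(1/\eta)$ more steps the geometric decay forces $y^t[1]\le\delta_y/2$, after which $e^t_x\le-\frac{\delta_y}{2(1-\delta_x)}$, so $E^t_x+e^t_x$ decreases by $\Omega(\delta_y)$ per step. Consequently $x^t[1]$ reaches $1-\delta_x$ after $O\!\big(\frac{\log(1/\delta_x)}{\eta\delta_y}\big)$ steps, giving $T_1=O\!\big(\frac{\log(1/\delta_x)}{\eta\delta_y}\big)$, and $1-x^t[1]$ decays geometrically in $t$ past the burn-in. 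Combining the two bounds $\DualGap(x^t,y^t)\le\min\{2\delta_y,\ 2\delta_y\exp(\eta(E^{t-1}_x+e^{t-1}_x))\}$ with the fact that $t=O(1/(\eta\delta_y))$ throughout the slow-start window (so $\log^2 t=\Omega(\log^2(1/(\eta\delta_y)))$ absorbs the constant there), a direct case split gives $\DualGap(x^t,y^t)\le\frac{C\log^2 t}{\eta t}$ for all $t\in[T_A,T_1]$. Finally, at $T_1$ we have $x^{T_1}[1]\ge1-\delta_x$ and $y^{T_1}[1]<\delta_y$, so $\DualGap(x^{T_1},y^{T_1})=\frac{1-\delta_y}{1-\delta_x}x^{T_1}[1]-1+\frac{\delta_y}{1-\delta_x}-\frac{\delta_x}{1-\delta_x}y^{T_1}[1]\le\frac{\delta_x(1-y^{T_1}[1])}{1-\delta_x}\le2\delta_x$, using $x^{T_1}[1]\le1$ and $\delta_x\le\frac1{100}$; this proves item~1, and item~2 follows by assembling the Phase-A and Phase-B estimates.

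\textbf{Main obstacle.} Unlike Case~1, where $e^t_x$ is bounded away from $0$ so $x^t[1]$ rises geometrically and $\DualGap$ decays linearly, here $e^t_x\to0^-$ as $y^t[1]\to\delta_y$, so the initial segment of Phase~B is a genuine ``slow start'' in which $x^t[1]$ barely moves and $\DualGap\approx\delta_y$; the crux is to show this window has length $O(1/(\eta\delta_y))$ and that, once $y^t[1]$ has fallen below $\delta_y/2$, the drift of $E^t_x$ is bounded below so that $x^t[1]$ makes steady progress to $1-\delta_x$. Entangled with this is the bookkeeping needed to keep $y^t[1]$ monotone below $\delta_y$ and $x^t[1]$ monotone below $1-\delta_x$ for the whole phase, for which the smallness of $\eta$ is used to bound the one-step fluctuations of $e^t_x,e^t_y$ relative to their odds-ratio description; these fluctuations are where the analysis is genuinely involved. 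The $\log^2 t$ slack in the stated rate is exactly what allows the single expression $\frac{C\log^2 t}{\eta t}$ to cover both the geometric portions and the slow-start portion uniformly, including the awkward regime where $\delta_y$ is a moderate constant and both phases have length $\Theta(1/\eta)$.
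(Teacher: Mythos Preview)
Your two-phase split at $T_A$ (the first time $y^t[1]<\delta_y$), the $O(1/\eta)$ burn-in followed by the $\Omega(\delta_y)$-per-step drift of $E^t_x$ in Phase~B, and the final case split $\min\{2\delta_y,\text{geometric decay}\}\le C\log^2 t/(\eta t)$ are exactly the paper's argument (its Phase~I/Phase~II and its Claim~1), down to the use of $E^{T_A}_x=O(1/\eta)$ from the geometric sum in Phase~A. One caution: the specific mechanism you invoke to keep $y^t[1]<\delta_y$ throughout Phase~B---that $\eta\le\tfrac1{10}$ forces ``$e^t_y$ does not more than halve across a step,'' hence $E^t_y+e^t_y$ is monotone---breaks down as $x^t[1]\to 1-\delta_x$ (there $e^t_y\to 0$ while the one-step change in $e^t_y$ need not vanish at the same rate); the paper makes the same informal monotonicity remark but its actual bound on $y^t[1]$ comes from writing $\tfrac{y^t[1]}{y^t[2]}$ in terms of $\tfrac{y^{T_A}[1]}{y^{T_A}[2]}$ and the cumulative sum $\sum_{k=T_A}^{t-1}e^k_y\ge 0$, which gives $y^t[1]\le c\,\delta_y$ for all $t$ past the burn-in without any step-to-step monotonicity, and you can substitute that cleanly.
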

\begin{proof}
    The proof of \Cref{lemma:A2 case2} is more involved than that of \Cref{lemma:A2 case1}. We track the trajectory of OMWU dynamics in two phases:
    \begin{itemize}
        \item \textbf{Phase I:} $x^t[1]$ decreases and $y^t[1]$ decreases. At the end of phase I, $y^t[1] \le \delta_y$ and the duality gap decreases to $O(\delta_y)$.
        \item \textbf{Phase II:} $x^t[1]$ increases and $y^t[1]$ continues to decrease. At the end of phase II, $x^t[1] \ge 1-\delta_x$ and the duality gap further decreases to $O(\delta_x)$.
    \end{itemize}
    \paragraph{Phase I: $x^t[1]$ and $y^t[1]$ both decreases.} Recall that initially, $x^t[1]=y^t[1] =\frac{1}{2}$. We note that $e^t_x = \frac{y^t[1]-\delta_y}{1-\delta_x}$, thus as long as $y^t[1] \ge \delta_y$, we have $e^t_x \ge 0$. This implies $x^t[1] \le x^1[1]=\frac{1}{2}$ until the first iteration when $y^t[1]<\delta_y$, which we denote as $T_y$. Now we know for all iterations $t \in [1, T_y]$, $x^t[1]\le \frac{1}{2}$, this further implies $e^t_y = 1-\frac{x^t[1]}{1-\delta_x} \ge \frac{1}{3}$ since $0<\delta_x\le \frac{1}{100}$. As a result, we have 
    \begin{align*}
        \frac{y^t[1]}{y^t[2]} = \exp\InParentheses{ -\eta E^{t-1}_y-\eta e^{t-1}_y} \le \exp\InParentheses{-\frac{\eta t}{3}}, \forall t \in [1, T_y].
    \end{align*}
    Moreover, we know at time $T_y$, $y^{T_y}[1] \le \delta_y$ for the first time. Consider the duality gap of $\{(x^t, y^t)\}_{t\in[1,T_y-1]}$: since $x^{t}[1]\le 1-\delta_x$ and $y^{t}[1] \ge \delta_y$, we have $\forall t \in [1, T_y-1]$,
    \begin{align*}
        \DualGap(x^{t}[1], y^t[1])=\max_{i\in \{1,2\}} (A^\top x^t)[i] - \min_{i\in \{1,2\}} (A y^t)[i] = y^t[1] - \frac{\delta_y}{1-\delta_x}x[1] \le y^t[1] \le \frac{y^t[1]}{y^t[2]}\le  \exp\InParentheses{-\frac{\eta t}{3}},
    \end{align*}
    Thus we have a linear convergence for $t\in[1,T_y-1]$. For $t= T_y$, since $x^{t}[1]\le 1-\delta_x$ and $y^{t}[1] \le \delta_y$, the duality gap at $(x^{T_y},y^{T_y})$ is 
    \begin{align*}
        \DualGap(x^{T_y}[1], y^{T_y}[1])& =\max_{i\in \{1,2\}} (A^\top x^t)[i] - \min_{i\in \{1,2\}} (A y^t)[i] \\
        & =\frac{\delta_y}{1-\delta_x}(1-x^{T_y}[1]) - \frac{\delta_x}{1-\delta_x}y^{T_y}[1] \\
        & \le \frac{\delta_y}{1-\delta_x} \le 2\delta_y.
    \end{align*}
    This completes the analysis of Phase I.

    \paragraph{Phase II: $y^t[1]$ continues to decrease but $x^t[1]$ increases.} Now that at time $T_y$, $y^{T_y}\le \delta_y$, $e^t_x$ could be negative since then and $x^t[1]$ might increase. However, we know that $e^t_y=1-\frac{x^t[1]}{1-\delta_x} \ge 0$ holds as long as $x^t[1] \le 1-\delta_x$. Thus $y^t[1]$ would continue to decrease until the first iteration $x^t[1] > 1-\delta_x$, which we denote as $T_x$. Now recall that $x^{T_y}\le \frac{1}{2}$, we argue $x^{t}\le \frac{3}{4}$ for all $t \le T_m:= T_y+\lceil \frac{2}{\eta} \rceil$:
    \begin{align*}
        \frac{x^{t}[1]}{x^{t}[2]}=\exp\InParentheses{-\eta E^{T_y}_x-\eta \sum_{j=T_y+1}^{t-1}e^j_x - \eta e^{t-1}_x} \le \exp\InParentheses{\eta (t-T_y)} \le  e^2, \forall t\in[T_y+1, T_m]
    \end{align*}
    where we use $e^t_x \ge 0$ for all $t \in [1,T_y]$ and $e^t_x \ge -1$ for all $t$ as well as $t - T_y \le \frac{2}{\eta}$. This implies $x^t[1]\le \frac{e^2}{e^2+1}\le\frac{8}{9}$ for all $t \in [1, T_m]$. It further implies $e^t_y=1-\frac{x^t[1]}{1-\delta_x} \ge \frac{1}{10}$ since $\delta_x\le \frac{1}{100}$. 

    Using $T_m- T_y\ge \lceil \frac{2}{\eta} \rceil$ and bounds on $e^t_y$, we have 
    \begin{align*}
       \forall t \in [T_m, T_x-1], \quad \frac{y^{t}[1]}{y^{t}[2]} &= \exp\InParentheses{\eta E^{T_y-1}_y -\eta \sum_{k=T_y}^{t-1} e^k_y - \eta e^{t-1}_y } \\
        &= \exp\InParentheses{-\eta E^{T_y-1}_y -\eta e^{T_y-1}_y -\eta \sum_{k=T_y}^{t-1} e^k_y - \eta e^{t-1}_y + \eta e^{T_y-1}_y } \\
        &\le  \frac{\delta_y}{1-\delta_y} \cdot \exp\InParentheses{-\eta \sum_{k=T_y}^{t-1} e^k_y - \eta e^{t-1}_y + \eta e^{T_y-1}_y } \\
        &\le \frac{\delta_y}{1-\delta_y} \cdot \exp\InParentheses{-\eta \sum_{k=T_y}^{T_m-1} e^k_y + \eta e^{T_y-1}_y } \\
        &\le \frac{\delta_y}{1-\delta_y} \cdot \exp\InParentheses{-\frac{\eta(T_m-T_y)}{10} + \eta } \\
        &\le \frac{\delta_y}{1-\delta_y} \cdot \exp\InParentheses{-\frac{1}{10}} \le \frac{10}{11} \cdot \frac{\delta_y}{1-\delta_y}.
    \end{align*}
    where: in the first inequality, we use $\exp\InParentheses{-\eta E^{T_y-1}_y -\eta e^{T_y-1}_y}=\frac{y^{T_y}[1]}{y^{T_y}[2]}\le \frac{\delta_y}{1-\delta_y}$; in the second inequality  we use $e^t_y \ge0$ for all $t < T_x$; in the third inequality, we use $e^t_y \ge \frac{1}{10}$ for all $t \in [1, T_m]$. In the second last inequality, we use $T_m-T_y\ge \frac{2}{\eta}$ and $\eta \le \frac{1}{10}$. This implies \begin{align*}
        y^t[1]=\frac{1}{1+y^t[2]/y^t[1]} \le \frac{1}{1+\frac{11(1-\delta_y)}{10\delta_y}} = \frac{10\delta_y}{11-\delta_y}\le \frac{20}{21}\delta_y, \forall t \in [T_m,T_x-1]
    \end{align*}
    where we use $\delta_y < \frac{1}{2}$ in the last inequality. Then $e^t_x = \frac{y^t[1]-\delta_y}{1-\delta_x} \le -\frac{\delta_y}{21}$ for all $t \in [T_m, T_x-1]$.

    Given that $e^t_x \le -\frac{\delta_y}{21}$ for all $t \in [T_m, T_x-1]$, we know $x^t[1]$ increases during these iterations. In order to quantify the speed, we first provide some bounds on $T_m = T_y + \lceil\frac{2}{\eta}\rceil$. In the analysis in Phase I, we know (1) $\frac{y^t[1]}{y^t[2]} \le \exp\InParentheses{-\frac{\eta t}{3}}, \forall t \in [1, T_y]$ and (2) $y^t[1]\ge \delta_y$ for all $t \in [1, T_y-1]$. Combining the two inequalities gives
    \begin{align*}
        \frac{\delta_y}{1-\delta_y} \le \exp\InParentheses{-\frac{\eta (T_y-1)}{3}} \Rightarrow T_y \le \frac{3}{\eta}\log \frac{1}{\delta_y} +1.
    \end{align*}
    This implies $T_m -1 = T_y -1 + \lceil\frac{2}{\eta}\rceil \le \frac{4}{\eta} \log \frac{1}{\delta_y}$
    
    Now we analysis $x^t$ for $t \in [T_m, T_x]$:
    \begin{align*}
        \frac{x^t[1]}{x^t[2]} &= \exp\InParentheses{-\eta E^{t-1}_x - \eta e^{t-1}_x} \\
        &= \exp\InParentheses{-\eta E^{T_m-1}_x -\eta \sum_{k=T_m}^{t-1} \eta e^{k}_x - \eta e^{t-1}_x} \\
        & \ge \exp\InParentheses{-\eta (T_m-1) + \frac{\eta \delta_y(t-T_m + 1)}{21}} \\
        & \ge \exp\InParentheses{ -2\eta (T_m-1) + \frac{\eta \delta_y t}{21}} \\
        &\ge \delta_y^8\cdot\exp\InParentheses{\frac{\eta \delta_y t}{21}}.
    \end{align*}
    where: in the first inequality we use $e^t_x \le 1$ for all $t$ and $e^t_x\le - \frac{\delta_x}{21}$ for $t \in [T_m, T_x-1]$; in the second inequality we use $\delta_y \le 1$; in the last inequality, we use $T_m-1\le \frac{4}{\eta}\log\frac{1}{\delta_y}$.

    Now we analyze the duality gap. Since for all $t \in [T_m,T_x-1]$, we have $x^t[1] \le 1-\delta_x$ and $y^t[1]\le \delta_y$, we have 
    \begin{align*}
        \DualGap(x^t, y^t) & = \max_{i\in \{1,2\}} (A^\top x^t)[i] - \min_{i\in \{1,2\}} (A y^t)[i]\\
        &=\frac{\delta_y}{1-\delta_x}(1-x^t[1]) - \frac{\delta_x}{1-\delta_x}y^t[1]           \\
        & \le \frac{\delta_y}{1-\delta_x}(1-x^t[1])                                                   \\
        & \le x^t[2]  \tag{$0 < \delta_y \le 1-\delta_x$}\\
        &\le  \frac{x^t[2]}{x^t[1]} \\
        &\le \frac{1}{\delta_y^8}\cdot \exp\InParentheses{-\frac{\eta\delta_y t}{21}}.
    \end{align*}
    For $t= T_x$, since $x^t[1] \ge 1-\delta_x$ and $y^t[1]\le \delta_y$, we have 
    \begin{align*}
        \DualGap(x^{T_x}, y^{T_x}) = \max_{i\in \{1,2\}} (A^\top x^t)[i] - \min_{i\in \{1,2\}} (A y^t)[i]
        = \frac{1-\delta_y}{1-\delta_x}x^{T_x}[1] -1 + \frac{\delta_y}{1-\delta_x}  \le \frac{1}{1-\delta_x} - 1 \le 2\delta_x.
    \end{align*}

    \paragraph{Combining bounds.} We combine the analysis in Phase I and II here. We claim the iterates $\{x^t,y^t\}_{t\in[1,T_x]}$ has the following convergence rates.
    \begin{claim}\label{claim:A2 case2}
        There exists a universal constant $C$ such that for any $t \in [1,T_x]$, the following best-iterate convergence holds.
        \begin{align*}
            \min_{k\in[1,t]}\DualGap(x^k,y^k) = \frac{C}{\eta} \cdot \frac{\log^2t}{t}.
        \end{align*}
    \end{claim}
    \begin{proof}
        We prove the claim by considering the following cases.
        \paragraph{1. $t\in[1,T_y]$} In this case, by analysis in Phase I, we have $\DualGap(x^t,y^t) \le \exp\InParentheses{-\frac{\eta t}{3}}$. Then there exists a universal constant $C_1 > 0$ such that
            \begin{align*}
                \DualGap(x^t,y^t) \le \exp\InParentheses{-\frac{\eta t}{3}} \le \frac{C_1}{\eta t}.
            \end{align*}
        \paragraph{2. $t \in [T_y, T_m]$} We note that by analysis in Phase I, $\DualGap(x^{T_y}, y^{T_y}) \le 2\delta_y$; by analysis in Phase II, we have $T_m \le \frac{4}{\eta} \log\frac{1}{\delta_y}$. Thus for $t \in [T_y, T_m]$, we have
        \begin{align*}
            \min_{k\in [1,t]} \DualGap(x^k,y^k) &\le \DualGap(x^{T_y}, y^{T_y}) \\
            &\le 2\delta_y \\
            & = \frac{8}{\eta} \frac{1}{ \frac{4}{\eta} \frac{1}{\delta_y} } \\
            &\le \frac{8}{\eta} \cdot\frac{1}{\frac{4}{\eta} \log\frac{1}{\delta_y}}  \tag{$a\ge \log a, \forall a>1 $}\\
            &\le \frac{8}{\eta} \frac{1}{T_m}\tag{$T_m\le \frac{4}{\eta}\log\frac{1}{\delta_y}$}\\
            &\le \frac{8}{\eta}\cdot \frac{1}{t} \tag{$t\le T_m$}.
        \end{align*}

        \paragraph{3. $t \in [T_m, T_x-1]$} By analysis in Phase II and duality bound of $(x^{T_y},y^{T_y})$, we have for any $t\in [T_m, T_x-1]$, there exists a universal constant $C_2 = 8\times21$ such that
        \begin{align*}
            \min_{k\in[1,t]} \DualGap(x^k,y^k) \le \min\left\{2\delta_y, \frac{1}{\delta_y^8}\exp\InParentheses{-\frac{\eta\delta_y t}{21}}\right\} 
            &\le \frac{C_2}{\eta} \cdot \frac{\log^2t}{t}.
        \end{align*}
        The last inequality holds since (1) if $\delta_y \le \frac{C_2 \log^2 t}{\eta t}$, then the inequality holds; (2) if $\delta_y \ge \frac{C_2}{\eta} \frac{\log^2 t}{t}$, then we have 
        \begin{align*}
            &\delta_y \ge \frac{C_2}{\eta} \frac{\log^2 t}{t} \\
            \Rightarrow & \frac{C_2}{\eta} \frac{\log \frac{1}{\delta_y}}{\delta_y} \le \frac{t}{\log^2 t} \log \frac{t}{\frac{C_2}{\eta}\log t} \le \frac{t}{\log t} \tag{$\frac{C_2}{\eta} \ge 1$}\\
            \Leftrightarrow & 8 \log t + 8 \log \frac{1}{\delta_y}  \le \frac{\eta \delta_y t}{21}  \tag{$C_2 = 8 \times 21$} \\
            \Rightarrow & \log t \le 8\log \delta_y +\frac{\eta \delta_y t}{21} \\
             \Leftrightarrow & t \le \delta_y^8 \exp\InParentheses{\frac{\eta\delta_y t}{21}}\\
             \Leftrightarrow & \frac{1}{\delta_y^8} \exp\InParentheses{-\frac{\eta\delta_y t}{21}} \le \frac{1}{t}.
         \end{align*}

         \paragraph{4. $t=T_x$} We know $\DualGap(x^{T_x},y^{T_x}) \le 2\delta_x$. Let $C' = \max\{C_1, C_2\}$. We can use the above bounds and conclude
         \begin{align*}
            \min_{k\in[1,T_x]} \DualGap(x^k,y^k) &\le \DualGap(x^{T_x-1}, y^{T_x-1}) \\
            &\le \frac{C'}{\eta} \frac{\log^2(T_x-1)}{T_x-1}\\
            &\le  \frac{2C'}{\eta} \frac{\log^2(T_x)}{T_x}\
         \end{align*}
         Thus the claim holds with $C = 2C' = 2\max\{C_1, C_2\}$.
    \end{proof}
    Let $T_1 = T_x$, \Cref{lemma:A2 case2} then follows from \Cref{claim:A2 case2} and the fact that $\DualGap(x^{T_x}, y^{T_x}) \le 2\delta_x$.
\end{proof}
Combining \Cref{lemma:A2 case1} and \Cref{lemma:A2 case2} gives \Cref{thm:OMWU-initial}.

\subsection{Combining Two-Phase Analysis: Proof of \Cref{thm:best iterate OMWU}}
Consider any matrix game $A \in [0,1]^{2 \times 2}$ that has a fully-mixed Nash equilibrium with minimum probability $\delta > 0$. First, by \Cref{thm:OMWU-initial}, we know there exists an initial phase $[1, T_1]$ such that $O(\frac{\log^2T}{T})$ best-iterate convergence rate holds and $\DualGap(x^{T_1}, y^{T_1}) \le 2\delta$. 

Thus we only need to consider iterations $t \ge T_1+1$. By \Cref{thm:OMWU dynamic}, we know for all iteration $t$, the best-iterate convergence rate of $O(T^{-\frac{1}{4}}\delta^{-\frac{1}{2}})$ holds (note that $d_1=d_2=2$ are constants). Then we have for all $T \ge T_1+1$
\begin{align*}
    \min_{t \in [1,T]} \DualGap(x^t, y^t) \le \min\{2\delta, O(T^{-\frac{1}{4}}\delta^{-\frac{1}{2}})\} \le O(T^{-\frac{1}{6}}).
\end{align*}
This completes the proof.

\end{document}